\theoremstyle{plain}
\newtheorem{theorem}{Theorem}
\newtheorem{lemma}{Lemma}
\newtheorem{corollary}{Corollary}
\newtheorem{assumption}{Assumption}
\theoremstyle{definition}
\newtheorem{definition}{Definition}
\theoremstyle{remark}
\newtheorem{remark}{Remark}
  \providecommand\BibTeX{{%
    \normalfont B\kern-0.5em{\scshape i\kern-0.25em b}\kern-0.8em\TeX}}}
\begin{document}

\title{
Serverless Federated AUPRC Optimization for Multi-Party Collaborative Imbalanced Data Mining
}

\author{Xidong Wu}
\affiliation{
\institution{Department of Electrical and Computer Engineering}
  \institution{University of Pittsburgh}
  \city{Pittsburgh}
  \state{Pennsylvania}
  \country{USA}
}
\email{xidong_wu@outlook.com}

\author{Zhengmian Hu}
\affiliation{
\institution{Department of Electrical and Computer Engineering}
  \institution{University of Pittsburgh}
  \city{Pittsburgh}
  \state{Pennsylvania}
  \country{USA}
}
\email{huzhengmian@gmail.com}

\author{Jian Pei}
\affiliation{
\institution{Department of Computer Science}
  \institution{Duke University}
  \city{Durham}
  \state{North Carolina}
  \country{USA}
}
\email{j.pei@duke.edu}

\author{Heng Huang}
\affiliation{
\institution{Department of Computer Science}
  \institution{University of Maryland}
  \city{College Park}
  \state{Maryland}
  \country{USA}
}
\email{henghuanghh@gmail.com}
\authornote{
This work was partially supported by NSF IIS 1838627, 1837956, 1956002, 2211492, CNS 2213701, CCF 2217003, DBI 2225775.
}







\renewcommand{\shortauthors}{Xidong Wu, Zhengmian Hu, Jian Pei, Heng Huang}

\begin{abstract}
To address the big data challenges, multi-party collaborative training, such as distributed learning and federated learning, has recently attracted  attention. However, traditional multi-party
collaborative training algorithms were mainly designed for balanced data mining tasks and are intended to optimize accuracy (\emph{e.g.}, cross-entropy). The data distribution in many real-world applications is skewed and classifiers, which are trained to improve accuracy, perform poorly when applied to imbalanced data tasks since models could be significantly biased toward the primary class. Therefore, the Area Under Precision-Recall Curve (AUPRC) was introduced as an effective metric. Although single-machine AUPRC maximization methods have been designed, multi-party collaborative algorithm has never been studied. The change from the single-machine to the multi-party setting poses critical challenges. For example, existing single-machine-based AUPRC maximization algorithms maintain an inner state for local each data point, thus these methods are not applicable to large-scale online multi-party collaborative training due to the dependence on each local data point. 

To address the above challenge, we study serverless multi-party collaborative AUPRC maximization problem since serverless multi-party collaborative training can cut down the communications cost by avoiding the
server node bottleneck, and reformulate it  as a conditional stochastic optimization problem in a serverless multi-party collaborative learning setting and propose a new ServerLess biAsed sTochastic gradiEnt (SLATE) algorithm to directly optimize the AUPRC. After that, we use the variance reduction technique and propose ServerLess biAsed sTochastic gradiEnt with Momentum-based variance reduction (SLATE-M) algorithm to improve the convergence rate, which matches the best theoretical convergence result reached by the single-machine online method. To the best of our knowledge, this is the first work to solve the multi-party collaborative AUPRC maximization problem. Finally, extensive experiments show the advantages of directly optimizing the AUPRC with distributed learning methods and also verify the efficiency of our new algorithms (\emph{i.e.}, SLATE and SLATE-M).
\end{abstract}

\begin{CCSXML}
<ccs2012>
   <concept>
       <concept_id>10010147.10010257.10010321</concept_id>
       <concept_desc>Computing methodologies~Machine learning algorithms</concept_desc>
       <concept_significance>500</concept_significance>
       </concept>
   <concept>
       <concept_id>10010147.10010919.10010172</concept_id>
       <concept_desc>Computing methodologies~Distributed algorithms</concept_desc>
       <concept_significance>500</concept_significance>
       </concept>
   <concept>
       <concept_id>10010147.10010178.10010224.10010225</concept_id>
       <concept_desc>Computing methodologies~Computer vision tasks</concept_desc>
       <concept_significance>300</concept_significance>
       </concept>
 </ccs2012>
\end{CCSXML}

\ccsdesc[500]{Computing methodologies~Machine learning algorithms}
\ccsdesc[500]{Computing methodologies~Distributed algorithms}
\ccsdesc[300]{Computing methodologies~Computer vision tasks}

\keywords{AUPRC, federated learning, imbalanced data, stochastic optimization, serverless federated learning}



\maketitle

\section{Introduction}
Multi-party collaborative learning, such as distributed learning \cite{dean2012large, li2014scaling, bao2022doubly} (typically focus on IID data and train learning model using the gradients from different parties) and federated learning \cite{mcmahan2017communication} (focus on non-IID data and train model via periodically averaging model parameters from different parties coordinated by the server), have been actively studied at past decades to train large-scale deep learning models in a variety of real-world applications, such as computer vision \cite{goyal2017accurate, wu2022retrievalguard}, natural language processing \cite{devlin2018bert}, generative modeling \cite{brock2018large} and other areas \cite{li2023referring, ma2022traffic, li2022bridging, zhang2022many}. In literature, multi-party collaborative learning is also often called decentralized learning (compared to centralized learning in the single-machine setting). With different network topology, serverless algorithms could be converted into different multi-party collaborative algorithms (seen in \ref{sec:3:1}). 
On the other hand, although there are many ground-breaking studies with DNN in data classification \cite{goyal2017accurate, wei2021direct, wu2022adversarial, sun2022demystify}, most works focus on balanced data sets, optimize the cross entropy, and use accuracy to measure model performance. From the viewpoint of optimization, the cross entropy between the estimated probability distribution based on the output of deep learning models and encoding ground-truth labels is a surrogate loss function of the misclassification rate/accuracy. However, in many real-world applications, such as healthcare and biomedicine \cite{joachims2005support, davis2006relationship, zhang2022toward}, where patients make up a far smaller percentage of the population than healthy individuals, the data distribution is frequently skewed due to the scarce occurrence of positive samples. The data from the majority class essentially define the result, and the accuracy fails to be an appropriate metric to assess classifiers' performance. As a result,  areas under the curves (AUC), including  area under the receiver operating curve (AUROC) and area under precision-recall curves (AUPRC) are given much attention since it excels at discovering models with strong predictive power in imbalanced binary classification \cite{cortes2003auc, ji2023prediction}. 

The prediction performance of models, which are trained with cross entropy as the loss function for imbalanced binary classification, may be subpar because cross-entropy is not the surrogate function of AUC, which call for the study of AUC maximization. Recent works have achieved remarkable progress in directly optimizing AUROC with single-machine and multi-party training algorithms \cite{zhao2011online, liu2019stochastic}. \citet{liu2019stochastic} constructed deep AUC as a minimax problem and resolved the stochastic AUC maximization problem with a deep neural network as the classifier. Recently, \citet{yuan2021federated} and \citet{guo2020communication} extended the single-machine training to federated learning and proposed a PL-strongly-concave minimax optimization method to maximize AUROC. 

However, AUROC is not suitable for data with a much larger number of negative examples than positive examples, and AUPRC can address this issue because it doesn’t rely on true negatives. Given that an algorithm that maximizes AUROC does not necessarily maximize AUPRC \cite{davis2006relationship} and matching of loss and metric is important \cite{dou2022learning, ma2020statistical, dou2022sampling}, the design of AUPRC maximization algorithms has attracted attention  \cite{qi2021stochastic,wang2021momentum, wu2022fast, jiang2022multi,why2022finite}.
Nonetheless, the  multi-party algorithm for AUPRC maximization problems has not been studied. Existing AUPRC optimization methods cannot be directly applied to  multi-party collaborative training, since they mainly focus on the finite-sum problem and maintain an inner state for each positive data point, which is not permitted in a multi-party online environment. In addition, to improve communication efficiency, serverless multi-party collaborative learning algorithms are needed to avoid the server node bottleneck in model training. Thus, it is desired to develop efficient stochastic optimization algorithms for serverless multi-party AUPRC maximization for deep learning to meet the challenge of large-scale imbalanced data mining.

The challenges to design serverless multi-party collaborative AUPRC maximization algorithm are three-fold. The first difficulty lies in the complicated integral definition. 
To overcome the problem of the continuous integral, we can use some point estimators. 
The average precision (AP) estimator is one of the most popularly used estimators. AP can be directly calculated based on the sample prediction scores and is not subject to sampling bias. It is ideally suited to be used in stochastic optimization problems due to these advantages. 

The second difficulty lies in the nested structure and the non-differential ranking functions in the AP. Traditional gradient-based gradient descent techniques cannot directly be used with the original concept of AP. Most existing optimization works use the surrogate function to replace the ranking function in the AP function \cite{liu2019stochastic, guo2020communication, qi2021stochastic,wang2021momentum, wu2022fast,jiang2022multi}. We can follow these works and substitute a surrogate loss for the ranking function in the AP function. 

The third difficulty is that existing algorithms only focus on finite-sum settings and maintain inner estimators $u_t$ for each positive data point, which is not permitted in multi-party collaborative online learning.
Therefore, despite recent developments, it is still unclear if there is a strategy to optimize AUPRC for multi-party collaborative imbalanced data mining. It is natural to ask the following question: \textbf{Can we design multi-party stochastic optimization algorithms to directly maximize AUPRC with guaranteed convergence?}

In this paper, we provide an affirmative answer to the aforementioned question. We propose the new algorithms for multi-party collaborative AUPRC maximization and provide systematic analysis. Our main contributions can be summarized as follows:
\begin{itemize}[leftmargin=5.5mm]
\item  We cast the AUPRC maximization problem into non-convex conditional stochastic optimization problem by substituting a surrogate loss for the indicator function in the definition of AP. Unlike existing methods that just focus on finite-sum settings, we consider the stochastic online setting.

\item We propose the first multi-party collaborative learning algorithm, ServerLess biAsed sTochastic gradiEnt (SLATE), to solve our new objective. It can be used in an online environment and has no reliance on specific local data points. In addition, with different network topologies, our algorithm can also be used for distributed learning and federated learning.

\item  Furthermore, we propose a stochastic method (\emph{i.e.}, SLATE-M) based on the momentum-based variance-reduced technique to reduce the convergence complexity in multi-party collaborative  learning. Our method can reach iteration complexity of $O\left(1 / \epsilon^{5}\right)$, which matches the lower bound proposed in the single-machine conditional stochastic optimization.

\item Extensive experiments on various datasets compared with baselines verify the effectiveness of our methods.
\end{itemize}

\section{Related work}

\subsection{AUROC Maximization}
There is a long line of research that investigated the imbalanced data mining with AUROC metric \cite{zhao2011online,ying2016stochastic,liu2019stochastic,yuan2021federated,guo2020communication}, which highlight the value of the AUC metric in imbalanced data mining.  Earlier works about AUROC focused on linear models with pairwise surrogate
losses \cite{joachims2005support}. Furthermore, \citet{ying2016stochastic} solved the AUC square surrogate loss using a stochastic gradient descent ascending approach and provided a minimax reformulation of the loss to address the scaling problem of AUC optimization. Later, \citet{liu2019stochastic} studied the application of AUROC in deep learning and reconstructed deep AUC as a minimax problem, which offers a strategy to resolve the stochastic AUC maximization problem with a deep neural network as the predictive model. Furthermore, some methods were proposed for multi-party AUROC maximization. \citet{yuan2021federated} and \citet{guo2020communication} reformulated the federated deep AUROC maximization as non-convex-strongly-concave problem in the federated setting. However, the analyses of methods in \cite{yuan2021federated} and \cite{guo2020communication} rely on the assumption of PL condition on the deep models. Recently, \cite{yuan2022compositional} developed the compositional deep
AUROC maximization model and \cite{zhang2023federated} extend it to federated learning.
\subsection{AUPRC Maximization}
Early works about AUPRC optimization mainly depend on traditional optimization techniques. Recently, \citet{qi2021stochastic} analyzed AUPRC maximization with deep models in the finite-sum setting. They use a surrogate loss to replace the ranking function in the AP function and maintain biased estimators of the surrogate ranking functions for each positive data point. They proposed the algorithm to directly optimize AUPRC and show a guaranteed convergence. 
Afterward, \citet{wang2021momentum} presented adaptive and non-adaptive methods (\emph{i.e.} ADAP and MOAP) with a new strategy to update the biased estimators for each data point. The momentum average is applied to both the outer and inner estimators to track individual ranking scores. More recently, algorithms proposed in \cite{why2022finite} reduce convergence complexity with the parallel speed-up and \citet{wu2022fast, jiang2022multi} introduced the momentum-based variance-reduction technology into AUPRC maximization to reduce the convergence complexity. While we developed distributed AUPRC optimization concurrently with \cite{guo2023fedx}, they pay attention to X-Risk Optimization in federated learning. Because X-Risk optimization is a sub-problem in conditional stochastic optimization and federated learning could be regarded as decentralized learning with a specific network topology (seen \ref{sec:3:1}), our methods could also be applied to their problem.

Overall, existing methods mainly focus on finite-sum single-machine setting \cite{qi2021stochastic,wang2021momentum, wu2022fast, why2022finite, jiang2022multi}. To solve the biased stochastic gradient, they maintain an inner state for local each data point. However, this strategy limits methods to be applied to real-world big data applications because we cannot store an inner state for each data sample in the online environment. In addition, we cannot extend them directly from the single-machine setting to multi-party setting, because under non-IID assumption, the data point on each machine is different and this inner state can only contain the local data information and make it difficult to train a global model. 

In the perspective of theoretical analysis, \citet{hu2020biased} studied the general condition stochastic optimization and proposed two single-machine algorithms with and without using the variance-reduction technique (SpiderBoost) named BSGD and BSpiderboost, and established the lower bound at $\varepsilon^{-5}$ in the online setting.

AUPRC is widely utilized in binary classification tasks. It is simple to adapt it for multi-class classifications. If a task has multiple classes, we can assume that each class has a binary classification task and adopt the one vs. the rest classification strategy. We can then calculate average precision based on all classification results.
\vspace{-6pt}
\subsection{Serverless Multi-Party Collaborative Learning}
Multi-party collaborative learning (i.e, distributed and federated learning) has wide applications in data mining and machine learning problems \cite{wang2023applications, zhang2021low, mei2023mac, zhang2022learning, liu2022fast, luo2022multisource, he2023robust}. Multi-party collaborative learning in this paper has a more general definition that does not rely on the IID assumption of data to guarantee the convergence analysis. 
In the last years, many serverless multi-party collaborative learning approaches have been put out because they avoid the communication bottlenecks or constrained bandwidth between each worker node and the central server, and also provide some level of data privacy \cite{yuan2016convergence}. \citet{lian2017can} offered the first theoretical backing for serverless multi-party collaborative training. Then serverless multi-party collaborative training attracts attention \cite{tang2018d, lu2019gnsd, liu2020decentralized, xin2021hybrid} and the convergence rate has been improved using many different strategies, including variance extension \cite{tang2018d}, variance reduction \cite{pan2020d, zhang2021gt}, gradient tracking \cite{lu2019gnsd}, and many more. 
In addition, serverless multi-party collaborative learning has been applied to various applications, such as reinforcement learning \cite{zhang2021taming, he2020data, }, robust training \cite{xian2021faster}, generative adversarial nets (GAN) \cite{liu2020decentralized},  robust
principal component analysis \cite{wu2023decentralized} and other optimization problems \cite{liu2022interact, zhang2022net, zhang2019compressed,zhang2020private}. However, none of them focus on imbalanced data mining. 
In application, the serverless multi-party collaborative learning setting in this paper is different from federated learning \cite{mcmahan2017communication} which uses a central server with different communication mechanisms to periodically average the model parameters for indirectly aggregating data from numerous devices. 
However, with a specific network topology (seen \ref{sec:3:1}), federated learning could be regarded as multi-party collaborative learning. Thus, our algorithms could be regarded as a general federated AUPRC maximization.

\section{Preliminary}
\subsection{Serverless Multi-Party Collaborative Learning} \label{sec:3:1}
\textbf{Notations}: 
We use $\mathbf{x}$ to denote a collection of all local model parameters $\mathbf{x}_n$, where $n \in [N]$, \emph{i.e.}, $\mathbf{x} = [x_{1}^{\top}, x_{2}^{\top}, \dots, x_{N}^{ \top}]^{\top} \in \mathbb{R}^{N d}$. Similarly, we define $\mathbf{u}, \mathbf{v}$ as the concatenation of $\mathbf{u}_n, \mathbf{v}_n$ for $n \in [N]$. In addition, $\otimes$ denotes the Kronecker product, and
$\|\cdot\|$ denotes the $\ell_2$ norm for vectors, respectively. 
$\mathcal{D}_n^{+}$ denotes the positive dataset on the $N$ worker nodes and $\mathcal{D}$ denotes the whole dataset on the $n$ worker nodes.
In multi-party collaborative
training, the network system of N worker nodes $\mathcal{G} = (\mathcal{V},  \mathcal{E}) $ is represented by double stochastic matrix $\underline{\mathbf{W}} = \{\underline{w}_{ij} \}  \in \mathbb{R}^{N \times N}$, which is defined as follows: (1) if there exists a link between node i and node j, then $\underline{w}_{ij} > 0$, otherwise $\underline{w}_{ij}$ = 0, (2) $\underline{\mathbf{W}} = \underline{\mathbf{W}}^{\top}$ and (3) $\underline{\mathbf{W}} \mathbf{1} = \mathbf{1}$ and $\mathbf{1}^{\top} \underline{\mathbf{W}} = \mathbf{1}^{\top}$. We define the second-largest eigenvalue of $\mathbf{W}$ as $\lambda$ and $\mathbf{W} := \underline{\mathbf{W}} \otimes \mathbf{I}_{d}$. We denote the exact averaging matrix as $\mathbf{J} = \frac{1}{N}(\mathbf{1}_n \mathbf{1}_n^{\top}) \otimes \mathbf{I}_d  $ and $\lambda = \| \mathbf{W} - \mathbf{J} \|$. Taking ring network topology as an example, where each node can only exchange information with its two neighbors. The corresponding W is in the form of
\begin{align}
\small
\underline{\mathbf{W}} =\left(\begin{array}{cccccc}
1 / 3 & 1 / 3 &  & &  & 1 / 3 \\
1 / 3 & 1 / 3 & 1 / 3 & & & \\
& 1 / 3 & 1 / 3 & \ddots & & \\
& & \ddots & \ddots & 1 / 3 & \\
& & & 1 / 3 & 1 / 3 & 1 / 3 \\
1 / 3 & & & & 1 / 3 & 1 / 3
\end{array}\right) \in \mathbb{R}^{N \times N} \nonumber
\end{align}
If we change the network topology, multi-Party collaborative
learning could become different types of multi-party collaborative training. If $\mathbf{W}$ is $\frac{1}{N} \mathbf{1} \mathbf{1}^{\top}$, it is converted to distributed learning with the average operation in each iteration. If we choose $\mathbf{W}$  as the Identity matrix and change it to $\frac{1}{N} \mathbf{1} \mathbf{1}^{\top}$ every q iteration, it would be federated learning.

\subsection{AUPRC}
AUPRC can be defined as the following integral problem \cite{bamber1975area}:
\begin{align} 
\mathrm{AUPRC} =\int_{-\infty}^{\infty} \operatorname{Pr}(y = 1 \mid h(\mathbf{x}; \mathbf{z}) \geq c) d \operatorname{Pr}(h(\mathbf{x}; \mathbf{z}) \leq c \mid y=1) \nonumber
\end{align}
where $h(\mathbf{x}; \mathbf{z})$ is the prediction score function, $\mathbf{x}$ is the model parameter, $\mathbf{\xi} = \left(\mathbf{z}, y\right)$ is the data point, and $Pr(y = 1 | h(\mathbf{x}; \mathbf{z}) \geq c)$ is the precision at the threshold value of c.

To overcome the problem of the continuous integral, we use AP as the estimator to approximate AUPRC, which is given by \cite{boyd2013area}:
\begin{equation}  \label{eq:1}
\mathrm{AP} = \mathbb{E}_{\mathbf{\xi} \sim  \mathcal{D}^{+}} Precision \left(h\left(\mathbf{x}; \mathbf{z} \right)\right) 
= \mathbb{E}_{\mathbf{\xi} \sim  \mathcal{D}^{+}} \frac{\mathrm{r}^{+} \left(\mathbf{x}; \mathbf{z}\right)}{\mathrm{r}\left(\mathbf{x}; \mathbf{z}\right)} 
\end{equation}
where $\mathcal{D}^{+}$ denotes the positive dataset, and samples $\mathbf{\xi} = \left(\mathbf{z}, y\right)$ are drawn from positive dataset $\mathcal{D}^{+}$ where $\mathbf{z} \in \mathcal{Z}$ represents the data features and $y = +1$ is the positive label. $\mathrm{r}^{+}$ denotes the positive data rank ratio of  prediction score (\emph{i.e.}, the number of positive data points with no less prediction score than that of $\mathbf{\xi}$  including itself over total data number)  and $\mathrm{r}$ denotes its prediction score rank ratio among all data points (\emph{i.e.}, the number of data points with no less prediction score than that of $\mathbf{\xi}$  including itself  over total data number).  $\mathcal{D}$ denotes the whole datasets and $\mathbf{\xi}^{\prime} = (\mathbf{z}^{\prime}, y^{\prime}) \sim \mathcal{D}$ denote a random data drawn from an unknown distribution $\mathcal{D}$, where $\mathbf{z}^{\prime} \in \mathcal{Z}$ represents the data features and $y^{\prime} \in \mathcal{Y} = \{-1, +1\}$. Therefore, \eqref{eq:1} is the same as:
\begin{align} 
\mathrm{AP} = \mathbb{E}_{\mathbf{\xi} \sim  \mathcal{D}^{+}} \frac{\mathbb{E}_{\mathbf{\xi}^{\prime} \sim  \mathcal{D}} \mathbf{I} (h(\mathbf{x}; \mathbf{z}^{\prime}) \geq h(\mathbf{x}; \mathbf{z})) \cdot \mathbf{I}\left(y^{\prime} = 1\right)}{\mathbb{E}_{\mathbf{\xi}^{\prime} \sim \mathcal{D}} \mathbf{I} (h(\mathbf{x}; \mathbf{z}^{\prime}) \geq h(\mathbf{x}; \mathbf{z}))} \nonumber
\end{align}
We employ the following squared hinge loss:
\begin{align} \label{eq:2}
\ell\left(\mathbf{x}; \mathbf{z}, \mathbf{z}^{\prime} \right) = (max\{s - h(\mathbf{x}; \mathbf{z}) + h(\mathbf{x}; \mathbf{z}^{\prime}), 0\})^2
\end{align}
as the surrogate for the indicator function $\mathbf{I} (h(\mathbf{x}; \mathbf{z}^{\prime}) \geq h(\mathbf{x}; \mathbf{z}))$, where $s$ is a margin parameter, that is a common choice used by previous studies
\cite{qi2021stochastic, wang2021momentum, jiang2022multi}. As a result, the AUPRC maximization problem can be formulated as:
\begin{align} 
AP = \mathbb{E}_{\mathbf{\xi} \sim  \mathcal{D}^{+}} \frac{\mathbb{E}_{\mathbf{\xi}^{\prime} \sim  \mathcal{D}} \mathbf{I}\left(y^{\prime}=1\right)  \ell \left(\mathbf{x}; \mathbf{z}, \mathbf{z}^{\prime}\right)}{\mathbb{E}_{\mathbf{\xi}^{\prime} \sim \mathcal{D}} \quad \ell\left(\mathbf{x}; \mathbf{z}, \mathbf{z}^{\prime}\right)}  \nonumber
\end{align}
In the finite-sum setting, it is defined as :
\begin{align} 
AP = \frac{1}{| \mathcal{D}^{+}|} \sum_{\mathbf{x}_{i}, y_{i}=1} \frac{\mathrm{r}^{+}\left(\mathbf{x}_{i}\right)}{\mathrm{r}\left(\mathbf{x}_{i}\right)} = \frac{1}{| \mathcal{D}^{+}|} \sum_{\mathbf{\xi} \sim  \mathcal{D}^{+}} \frac{ 
 \frac{1}{| \mathcal{D}|} \sum_{\mathbf{\xi} \sim  \mathcal{D}}
\mathbf{I}\left(y^{\prime}=1\right)  \ell \left(\mathbf{x}; \mathbf{z}, \mathbf{z}^{\prime}\right)}{ \frac{1}{| \mathcal{D}|} \sum_{\mathbf{\xi} \sim  \mathcal{D}} \quad \ell\left(\mathbf{x}; \mathbf{z}, \mathbf{z}^{\prime}\right)}  \nonumber
\end{align}
For convenience, we define the elements in $g(\mathbf{x})$ as the surrogates of the two prediction score ranking function $\mathrm{r}^{+}\left(\mathbf{x}\right)$ and $\mathrm{r} \left(\mathbf{x}\right)$ respectively. Define the following equation: 
\begin{align} 
g\left(\mathbf{x}; \mathbf{\xi}, \mathbf{\xi}^{\prime} \right)=
\begin{bmatrix}
g^1\left(\mathbf{x}; \mathbf{\xi}, \mathbf{\xi}^{\prime} \right) \\
g^2\left(\mathbf{x}; \mathbf{\xi}, \mathbf{\xi}^{\prime} \right)
\end{bmatrix} =
\begin{bmatrix}
\ell\left(\mathbf{x} ; \mathbf{z}, \mathbf{z}^{\prime} \right) \mathbf{I}\left(y^{\prime}=1\right) \\
\ell\left(\mathbf{x} ; \mathbf{z}, \mathbf{z}^{\prime} \right) 
\end{bmatrix} \nonumber
\end{align}
and $g(\mathbf{x}; \mathbf{\xi}) = \mathbb{E}_{\mathbf{\xi}^{\prime} \sim \mathcal{D}} g\left(\mathbf{x} ; \mathbf{\xi}, \mathbf{\xi}^{\prime}\right) \in \mathbb{R}^{2} $, and assume $f(\mathbf{u}) = -\frac{u_1}{u_2}:\mathbb{R}^{2} \mapsto \mathbb{R}$ for any $\mathbf{u} = [u_1, u_2]^{\top} \in  \mathbb{R}^{2} $. 
We reformulate the optimization objective into the following stochastic optimization problem:
\begin{align} \label{eq:3}
\min_{\mathbf{x}} F (\mathbf{x}) &=\mathrm{E}_{\mathbf{\xi} \sim \mathcal{D}^{+}}\left[f(g(\mathbf{x}; \mathbf{\xi}) \right] \nonumber\\
&=\mathrm{E}_{\mathbf{\xi} \sim \mathcal{D}^{+}} \left[f(\mathrm{E}_{\mathbf{\xi}^{\prime} \sim \mathcal{D}} g(\mathbf{x}; \mathbf{\xi}, \mathbf{\xi}^{\prime}))\right]
\end{align}
It is similar to the two-level conditional stochastic optimization \cite{hu2020biased}, where the inner layer function depends on the data points sampled from both inner and outer layer functions. Given that $f(\cdot)$ is a nonconvex function, problem \eqref{eq:3} is a noncvonex optimiztion problem. In this paper, we considers serverless multi-party collaborative non-convex optimization where N worker nodes cooperate to solve the following problem:
\begin{align} \label{eq:4}
\min_{\mathbf{x}} F (\mathbf{x}) &= \min_{\mathbf{x}} \frac{1}{N} \sum_{n=1}^n F_{n}(\mathbf{x})
\end{align}
where $F_{n}(\mathbf{x}) =
\mathbb{E}_{\xi_n \sim \mathcal{D}_n^{+}} f(\mathbb{E}_{\xi^{\prime}_n \sim \mathcal{D}_n} g_n(\mathbf{x}; \xi_n, \xi^{\prime}_n))$ where $\mathbf{\xi}_n^{\prime} = (\mathbf{z}_n^{\prime}, y_n^{\prime}) \sim \mathcal{D}_n$ and $\mathbf{\xi}_n = (\mathbf{z}_n, y_n) \sim \mathcal{D}^{+}$  
We consider heterogeneous data setting in this paper, which refers to a situation where $\mathcal{D}_{i}$ and $\mathcal{D}_{j}$ are different ($i \neq j$ ) on different worker nodes.

In order to design the method, we first consider how to compute the gradient of $F(\mathbf{x})$.
\begin{equation} \label{eq:8}
\begin{aligned}
\nabla F_n(\mathbf{x})= & \mathbb{E}_{\xi_n \sim \mathcal{D}_n^{+}} \nabla g_n(\mathbf{x}; \xi_n)^{\top} \nabla f\left(g_n(\mathbf{x}; \xi_n)\right) \\
= & \mathbb{E}_{\xi_n \sim \mathcal{D}_n^{+}} \nabla g_n(\mathbf{x}; \xi_n)^{\top}\left(\frac{-1}{ g^{2}_n(\mathbf{x}; \xi_n)}, \frac{ g^{1}_n(\mathbf{x}; \xi_n) }{\left( g^{2}_n(\mathbf{x}; \xi_n)\right)^{2}}\right)^{\top} \nonumber
\end{aligned}
\end{equation}
where 
\begin{align} 
\nabla  g_n(\mathbf{x}; \xi_n) &= 
\begin{bmatrix}
\nabla  g^{1}_n(\mathbf{x}; \xi_n) \\
\nabla  g^{2}_n(\mathbf{x}; \xi_n)
\end{bmatrix}
\nonumber\\
&= \begin{bmatrix}
\mathbb{E}_{\xi^{\prime}_n \sim \mathcal{D}_n} \mathbf{I}\left(y_n^{\prime}=1\right) \nabla \ell\left(\mathbf{x} ; \mathbf{z}_n, \mathbf{z}_n^{\prime}\right) \\
\mathbb{E}_{\xi^{\prime}_n \sim \mathcal{D}_n} \nabla \ell\left(\mathbf{x}; \mathbf{z}_n, \mathbf{z}_n^{\prime}\right) \nonumber
\end{bmatrix}
\end{align}

We can notice that it is different from the standard gradient since there are two levels of functions and the inner function also depends on the sample data from the outer layer. Therefore, the stochastic gradient estimator is not an unbiased estimation for the full gradient. Instead of constructing an unbiased stochastic estimator of the gradient \cite{sun2023scheduling}, we consider a biased estimator of $\nabla F_n(x)$ using one sample $\xi$ from $\mathcal{D}^{+}_n$ and $m$ sample $\xi^{\prime}$ from $\mathcal{D}_n$ as $\mathcal{B}_{n}$ in the following form:
\begin{align} \label{eq:5}
&\nabla \hat{F}_n\left(\mathbf{x}; \xi_n, \mathcal{B}_{n}\right) \\
=&(\frac{1}{m} \sum_{\xi^{\prime} \in \mathcal{B}_{n}} \nabla g_n(\mathbf{x}; \xi_n, \xi^{\prime}_n))^{\top} \nabla f (\frac{1}{m} \sum_{\xi^{\prime}_n \in \mathcal{B}_{n}} g_n(\mathbf{x}; \xi_n, \xi^{\prime}_n))   \nonumber 
\end{align}
where $\mathcal{B}_{n} = \left\{\xi^{\prime  j}\right\}_{j=1}^m$.
It is observed that $\nabla \hat{F}_n\left(\mathbf{x}; \xi_n, \mathcal{B}_{n} \right)$ is the gradient of an empirical objective such that
\begin{align}
\hat{F}_n\left(\mathbf{x}; \xi_n, \mathcal{B}_{n} \right):=f_n \left(\frac{1}{m} \sum_{\xi^{\prime}_n \in \mathcal{B}_{n}} g_n(\mathbf{x}; \xi_n, \xi^{\prime}_n)\right) \,.\nonumber
\end{align}

\section{Algorithms}
In this section, we propose the new serverless multi-party collaborative learning algorithms for
solving the problem \eqref{eq:4}. Specifically, we use the gradient tracking technique ( which could be ignored in practice) and propose a ServerLess biAsed sTochastic gradiEnt (SLATE). We further propose an accelerated version of SLATE with momentum-based variance reduction \citep{cutkosky2019momentum} technology (SLATE-M).

\subsection{Serverless Biased Stochastic Gradient (SLATE)}

Based on the above analysis, we design a serverless multi-party collaborative algorithms with biased stochastic gradient and is named SLATE. \Cref{alg:1} shows the algorithmic framework of the SLATE. \textbf{Step 8 could be ignored in practice}.

\setlength{\textfloatsep}{0.15cm}
\begin{algorithm}[!t]
\caption{SLATE Algorithm }
\label{alg:1}
\begin{algorithmic}[1] 
\STATE {\bfseries Input:} $T$, step size $ \eta$ inner batch size $m$ and mini-batch size $b$; 
$\mathbf{u}_{n, 0} = 0$ and $\mathbf{v}_{n, 0} = 0$ for $n \in \{1, \cdots, N\}$
\STATE {\bfseries Initialize:} $x_{n, 0} = \frac{1}{N} \sum_{k=1}^{N} x_{n, 0}$. \\
\FOR{$t = 0, 1, \ldots, T$}
\FOR{$n = 1, 2, \ldots, N$}
\STATE Draw $b$ samples $\mathcal{B}^{+}_{n, t} = \{\xi^i_{n, t}\}_{i = 1}^{b}$ from $\mathcal{D}^{+}_n$ \\
\STATE  Draw $m$ samples $\mathcal{B}_{n, t} = \left\{\xi^{\prime  j}_{n, t}\right\}_{j=1}^m$ from $\mathcal{D}_n$,
\\
\STATE $\mathbf{u}_{n, t} = \frac{1}{b} \sum_{i=1}^{b} \nabla \hat{F}_n(\mathbf{x}_{n,t}; \xi^i_{n,t}, \mathcal{B}_{n, t})$ as in \eqref{eq:6} \\
\STATE $\mathbf{v}_{n, t} = \sum_{r=1}^{N} \underline{w}_{nr} (\mathbf{v}_{t - 1}^r + \mathbf{u}_t^r - \mathbf{u}_{t - 1}^r)$ \\
\STATE $\mathbf{x}_{n, t+1} = \sum_{r = 1}^{N} \underline{w}_{nr} (\mathbf{x}_{t}^{n} -  \eta \mathbf{v}_{n, t})$\\
\ENDFOR
\ENDFOR
\STATE {\bfseries Output:} $x$ chosen uniformly random from $\{\bar{\mathbf{x}}_t\}_{t=1}^{T}$.
\end{algorithmic}
\end{algorithm}

At the beginning of Algorithm \ref{alg:1}, one simply initializes local model parameters $\mathbf{x}$ for all worker nodes. 
Given the couple structure of problem \eqref{eq:4}. We can assign the value of gradient estimator $\mathbf{u}_{n, 0}$ and gradient tracker $\mathbf{v}_{n, 0}$ as 0. 

At the Lines 5-6 of Algorithm \ref{alg:1}, we draw $b$ samples as $\mathcal{B}^{+}_{n, t}$ from positive dataset $\mathcal{D}_n^{+}$ and $m$ samples as $\mathcal{B}_{n, t}$from full data sets $\mathcal{D}_n$ on each node, respectively. We use a biased stochastic gradient to update the gradient estimator $\mathbf{u}_{n,t}$ according to the \eqref{eq:6}.
\begin{align} \label{eq:6}
&\mathbf{u}_{n, t} = \\
& \sum_{\mathbf{\xi} \in \mathcal{B}^{+}_{n,t}} \sum_{\mathbf{\xi}^{\prime} \in \mathcal{B}_{n , t}} \frac{\left(g^1\left(\mathbf{x}_{n,t}; \mathbf{\xi}, \mathbf{\xi}^{\prime} \right) - g^2\left(\mathbf{x}_{n,t}; \mathbf{\xi}, \mathbf{\xi}^{\prime} \right) \mathbf{I}\left(\mathbf{y}^{\prime}=1\right)\right) \nabla \ell\left(\mathbf{x}_{n,t}; \mathbf{z}, \mathbf{z}^{\prime}\right)}{b m \left( g^2\left(\mathbf{x}_{n,t}; \mathbf{\xi}, \mathbf{\xi}^{\prime} \right) \right)^2} \nonumber
\end{align}
where $\mathbf{\xi} = \left(\mathbf{z}, y\right)$ and $\mathbf{\xi}^{\prime} = \left(\mathbf{z}^{\prime}, y^{\prime}\right)$

Afterward, at the Line 8 of Algorithm \ref{alg:1} (optional), we adopt the gradient tracking technique \cite{lu2019gnsd} to reduce network consensus error, where we update the $\mathbf{v}_{n, t}$ and then do the consensus step with double stochastic matrix $\mathbf{W}$ as:
\begin{align}
\mathbf{v}_{n, t} = \sum_{r=1}^{N} \underline{w}_{nr} (\mathbf{v}_{t - 1}^r + \mathbf{u}_t^r - \mathbf{u}_{t - 1}^r) \nonumber
\end{align}

Finally, at the Line 9 of Algorithm \ref{alg:1}, we update the model with gradient tracker $\mathbf{v}_{n, t}$, following the consensus step among worker nodes with double stochastic matrix $\mathbf{W}$:
\begin{align}
\mathbf{x}_{n, t+1} = \sum_{r = 1}^{N} \underline{w}_{nr} (\mathbf{x}_{r, t} -  \eta \mathbf{v}_{r, t}) \nonumber
\end{align}
The output $\bar{\mathbf{x}}_t$ is defined as:
$
\bar{\mathbf{x}}_t = \frac{1}{N} \sum_{n=1}^{N} \mathbf{x}_{n,t}\,. \nonumber
$
\subsection{SLATE-M}
Furthermore, we further propose an accelerated version of SLATE (SLATE-M) based on the momentum-based variance reduced technique, which has the better convergence complexity. The algorithm is shown in \Cref{alg:2}. \textbf{Step 11 could be ignored in practice}.

\begin{algorithm}[!t]
\caption{SLATE-M Algorithm }
\label{alg:2}
\begin{algorithmic}[1] 
\STATE {\bfseries Input:} $T$, step size $ \eta$, momentum coefficient $\alpha$, inner batch size $m$ and mini-batch size $b$, and initial batch size $B$; 
\STATE {\bfseries Initialize:} $\mathbf{x}_{n, 0} = \frac{1}{N} \sum_{k=1}^{N} \mathbf{x}_{n, 0}$ 
\STATE  Draw $B$ samples of $\{\xi^i_{n, 0}\}_{i=1}^B$ from $\mathcal{D}^{+}_n$, and draw $m$ samples $\mathcal{B}_{n, 0} = \left\{\xi_{n,0}^{\prime  j}\right\}_{j=1}^m$ from $\mathcal{D}_n$, $\mathbf{u}_{n, 0} = \frac{1}{B} \sum_{i=1}^{B} \nabla \hat{F}_n(\mathbf{x}_{n, 0}; \xi^i_{n, 0}, \mathcal{B}_{n, 0}) \forall n \in [N]$
\\
\STATE $\mathbf{v}_{n, 0} = \sum_{r=1}^{N} \underline{w}_{nr} \mathbf{u}_{r, 0} \forall n \in [N]$ \\
\STATE $\mathbf{x}_{n, 1} = \sum_{r = 1}^{N} \underline{w}_{nr} (\mathbf{x}_{n, 0} -  \eta \mathbf{v}_{n, 0}) \forall n \in [N]$\\
\FOR{$t = 1, 2, \ldots, T$}
\FOR{$n = 1, 2, \ldots, N$}
\STATE Draw $b$ samples  $\mathcal{B}^{+}_{n, t} = \{\xi^0_{n, 1}, \cdots, \xi^{b}_{n, 1} \}$ from $\mathcal{D}^{+}_n$ \\
\STATE  Draw $m$ samples $\mathcal{B}_{n, t} = \left\{\xi_{n,t}^{\prime  j}\right\}_{j=1}^m$ from $\mathcal{D}_n$,
\\
\STATE $\mathbf{u}_{n, t} = \frac{1}{b} \sum_{i=1}^{b} \nabla \hat{F}_n(\mathbf{x}_{n, t}; \xi^i_{n,t}, \mathcal{B}_{n, t}) + (1 - \alpha) (\mathbf{u}_{n, t - 1}  - \frac{1}{b} \sum_{i=1}^{b} \nabla \hat{F}_n(\mathbf{x}_{n, t - 1}; \xi^i_{n,t}, \mathcal{B}_{n, t}))$ \\
\STATE $\mathbf{v}_{n, t} = \sum_{r=1}^{N} \underline{w}_{nr} (\mathbf{v}_{t - 1}^r + \mathbf{u}_t^r - \mathbf{u}_{t - 1}^r)$ \\
\STATE $\mathbf{x}_{n, t+1} = \sum_{r = 1}^{N} \underline{w}_{nr} (\mathbf{x}_{t}^{n} -  \eta \mathbf{v}_{n, t}) $\\
\ENDFOR
\ENDFOR
\STATE {\bfseries Output:} $x$ chosen uniformly random from $\{\bar{\mathbf{x}}_t\}_{t=1}^{T}$.
\end{algorithmic}
\end{algorithm}

At the beginning, similar to the SLATE,  worker nodes initialize local model parameters $\mathbf{x}$ as seen in Lines 1-2 in \Cref{alg:2}. 

Different from SLATE, we initialize the $\mathbf{u}_{n,0}$ with initial batch size $B$ and $\mathbf{v}_{n,0} \forall n \in [N]$, which can be seen in Lines 3-4 in \Cref{alg:2}. Then we do the consensus step to update the model parameters $\mathbf{x}_n$. The definition of $\hat{F}_n(\mathbf{x}_{n, 0}; \xi^i_{n, 0}, \mathcal{B}_{n, 0}) $ is similar to \eqref{eq:6} as below:
\begin{align} \label{eq:7}
&\frac{1}{|\mathcal{B}^{+}_{n, t} | } \hat{F}_n(\mathbf{x}_{n, t}; \xi^i_{n, t}, \mathcal{B}_{n, t})  =\\
& \sum_{\mathbf{\xi} \in \mathcal{B}^{+}_{n,t}} \sum_{\mathbf{\xi}^{\prime} \in \mathcal{B}_{n , t}} \frac{\left(g^1\left(\mathbf{x}_{n,t}; \mathbf{\xi}, \mathbf{\xi}^{\prime} \right) - g^2\left(\mathbf{x}_{n,t}; \mathbf{\xi}, \mathbf{\xi}^{\prime} \right) \mathbf{I}\left(\mathbf{y}^{\prime}=1\right)\right) \nabla \ell\left(\mathbf{x}_{n,t}; \mathbf{z}, \mathbf{z}^{\prime}\right)}{|\mathcal{B}^{+}_{n, t} | m \left( g^2\left(\mathbf{x}_{n,t}; \mathbf{\xi}, \mathbf{\xi}^{\prime} \right) \right)^2}  \nonumber
\end{align}
where $|\mathcal{B}_{n, t} |$ denotes the size of batch $\mathcal{B}_{n, t} $ and $\mathbf{\xi} = \left(\mathbf{z}, y\right)$ and $\mathbf{\xi}^{\prime} = \left(\mathbf{z}^{\prime}, y^{\prime}\right)$.

Afterwards, similar to SLATE, each iteration, we draw $b$ samples from positive dataset $\mathcal{D}_n^{+}$ and $m$ samples from full data sets $\mathcal{D}_n$ on each worker node, respectively to construct the biased stochastic gradient, seen in Line 8-9 of Algorithm \ref{alg:2}.

The key different between SLATE and SLATE-M is that we update gradient estimator $\mathbf{u}_{n, t}$ in SLATE-M with the following variance reduction method: 
\begin{align}
& \mathbf{u}_{n, t} = \frac{1}{b} \sum_{i=1}^{b} \nabla \hat{F}_n(\mathbf{x}_{n, t}; \xi^i_{n,t}, \mathcal{B}_{n, t}) + (1 - \alpha) (\mathbf{u}_{n, t - 1} \nonumber\\
&- \frac{1}{b} \sum_{i=1}^{b} \nabla \hat{F}_n(\mathbf{x}_{n, t - 1}; \xi^i_{n,t}, \mathcal{B}_{n, t})) \nonumber
\end{align}
where $\frac{1}{b} \sum_{i=1}^{b} \nabla \hat{F}_n(\mathbf{x}_{n, t}; \xi^i_{n,t}, \mathcal{B}_{n, t})$ and $\frac{1}{b} \sum_{i=1}^{b} \nabla \hat{F}_n(\mathbf{x}_{n, t - 1}; \xi^i_{n,t}, \mathcal{B}_{n, t})$ are defined in \eqref{eq:7}

Finally, we update gradient tracker $\mathbf{v}_{n,t}$ and model parameters $\mathbf{x}_{n,t}$ as in Lines 11-12 in \Cref{alg:2}.

\section{Theoretical Analysis}
We will discuss some mild assumptions and present the convergence results of our algorithms (SLATE and SLATE-M).

\subsection{Assumptions}
In this section, we introduce some basic assumptions used for theoretical analysis.
\begin{assumption} \label{ass:1} $\forall n \in [N]$, we assume (i) there is $C ( > 0)$ that $\ell(x; z_n, z_n) > C$; (ii) there is $M ( > 0)$ that $0 < \ell(x; z_n, z_n^{\prime}) < M$; (iii)  $\ell(x; z_n, z_n^{\prime})$ is Lipschitz continuous and smooth with respect to model $\mathbf{x}$ for any $\xi_n = (z_n, y_n) \sim \mathcal{D}^{+}, \xi^{\prime}_n = (z^{\prime}_n, y^{\prime}_n) \sim \mathcal{D}$. \end{assumption}
\begin{assumption} \label{ass:2}
$\forall n \in [N]$, we assume there exists a positive constant $\sigma$, such that $\| \nabla g(\mathbf{x}; \xi , \xi^{\prime})\|^2 \leq \sigma^2,  \forall \xi \sim \mathcal{D}_n^{+}, \xi^{\prime} \sim \mathcal{D}_n$
\end{assumption}
Assumptions \ref{ass:1} and \ref{ass:2} are a widely used assumption in optimization analysis of AUPRC maximization  \cite{qi2021stochastic, wang2021momentum,jiang2022multi}. They can be easily satisfied when we choose a smooth surrogate loss function $\ell(\mathbf{x}; \mathbf{z}, \mathbf{z}^{\prime})$ and a bounded score function model $h(\mathbf{x}; \cdot)$.

Furthermore, based on Assumptions \ref{ass:1} and \ref{ass:2}, we can build the smoothness and lipschitz continuity of objective function in the problem \eqref{eq:4}.

\begin{lemma} \label{lem:1}(Lemma 1 in \cite{wang2021momentum}) Suppose Assumptions  \ref{ass:1} and \ref{ass:2} hold, then $\forall \mathbf{x}, \| g_n(\mathbf{x}; \mathbf{\xi}) \|^2 \leq \sigma_g^2$, $g_n(\mathbf{x}; \mathbf{\xi})$ is $L_g$-Lipschitz and $S_g$-smooth for $\mathbf{xi} \sim \mathcal{D}_n^{+}$, and $\forall u \in \Omega, f(u)$ is $L_f$-Lipschitz and $S_f$-smooth. $\forall \mathbf{x}, F_n(\mathbf{x})$ is $L_F$-Lipschiz and $S_F$-smooth.
\end{lemma}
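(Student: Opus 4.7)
The plan is to verify the four properties in sequence: boundedness of $g_n$, Lipschitz/smoothness of $g_n$, Lipschitz/smoothness of $f$ on the relevant domain, and finally Lipschitz/smoothness of $F_n$ by composition. Throughout, I would lift pointwise bounds on $\ell(\mathbf{x};\mathbf{z},\mathbf{z}')$ (and its derivatives) to bounds on $g_n(\mathbf{x};\xi)=\mathbb{E}_{\xi'}[g(\mathbf{x};\xi,\xi')]$ using Jensen's inequality or the ability to exchange $\nabla$ and $\mathbb{E}$ under bounded-gradient conditions.

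First, for the boundedness claim, I would use Assumption \ref{ass:1}(ii) to note $|g^1(\mathbf{x};\xi,\xi')|\leq \ell(\mathbf{x};\mathbf{z},\mathbf{z}')\leq M$ and similarly for $g^2$, so $\|g(\mathbf{x};\xi,\xi')\|^2\leq 2M^2$, hence $\|g_n(\mathbf{x};\xi)\|^2\leq 2M^2=:\sigma_g^2$ by Jensen. Next, the Lipschitz continuity $\|g_n(\mathbf{x};\xi)-g_n(\mathbf{y};\xi)\|\leq L_g\|\mathbf{x}-\mathbf{y}\|$ follows by combining Assumption \ref{ass:2} with the mean value theorem applied pointwise to $g(\cdot;\xi,\xi')$ and then taking expectation; $L_g$ comes directly from $\sigma$. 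Smoothness of $g_n$, i.e. $\|\nabla g_n(\mathbf{x};\xi)-\nabla g_n(\mathbf{y};\xi)\|\leq S_g\|\mathbf{x}-\mathbf{y}\|$, is obtained by differentiating through the expectation (justified by Assumption \ref{ass:2}) and using the smoothness of $\ell$ from Assumption \ref{ass:1}(iii).

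The main obstacle is establishing Lipschitz/smoothness of $f(\mathbf{u})=-u_1/u_2$ on the effective domain $\Omega$, since these properties fail globally as $u_2\to 0$. The key observation is that the second coordinate of $g_n$ satisfies $g^2_n(\mathbf{x};\xi)=\mathbb{E}_{\xi'}[\ell(\mathbf{x};\mathbf{z},\mathbf{z}')]\geq C$ by Assumption \ref{ass:1}(i) (using that the loss is uniformly lower bounded at the diagonal, combined with boundedness from above by $M$). Thus the effective range $\Omega$ is contained in $\{(u_1,u_2):|u_1|\leq M,\ C\leq u_2\leq M\}$. On this compact set one has explicit bounds
\begin{equation*}
\nabla f(\mathbf{u})=\Bigl(-\tfrac{1}{u_2},\tfrac{u_1}{u_2^2}\Bigr),\qquad \nabla^2 f(\mathbf{u})=\begin{pmatrix}0 & 1/u_2^2\\ 1/u_2^2 & -2u_1/u_2^3\end{pmatrix},
\end{equation*}
so $\|\nabla f\|$ and $\|\nabla^2 f\|$ are uniformly bounded, giving constants $L_f$ and $S_f$.

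Finally, for $F_n(\mathbf{x})=\mathbb{E}_{\xi}[f(g_n(\mathbf{x};\xi))]$, I would invoke the standard composition rule: $\nabla(f\circ g_n)=\nabla g_n^\top \nabla f(g_n)$ is bounded by $L_g L_f$, giving $L_F=L_g L_f$; and by the chain rule
\begin{equation*}
\nabla^2(f\circ g_n)=\nabla g_n^\top\,\nabla^2 f(g_n)\,\nabla g_n + \sum_{k=1}^2 (\nabla f(g_n))_k\,\nabla^2 g_n^{(k)},
\end{equation*}
so combining bounds $L_g,S_g,L_f,S_f$ yields $S_F=L_g^2 S_f + L_f S_g$. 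Taking expectation over $\xi$ preserves all inequalities, completing the proof. The whole argument is essentially bookkeeping; the only delicate point is the lower bound $g^2_n\geq C$, which hinges on the precise reading of Assumption \ref{ass:1}(i) and legitimizes restricting $f$ to a domain where division by $u_2$ is safe.
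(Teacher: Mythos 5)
Your proposal is essentially correct, but note that the paper itself does not prove this lemma at all: it is imported verbatim as Lemma~1 of \cite{wang2021momentum}, so there is no in-paper argument to compare against. What you have written is a faithful reconstruction of the standard proof used in that line of work (Qi et al., Wang et al.): bound $g^1,g^2$ by $M$ and apply Jensen to get $\sigma_g^2$; get $L_g$ from the gradient bound $\sigma$ in Assumption~2 and $S_g$ from smoothness of $\ell$ in Assumption~1(iii), exchanging $\nabla$ and $\mathbb{E}$; restrict $f(\mathbf{u})=-u_1/u_2$ to the set where $u_2$ is bounded away from zero and read off $L_f,S_f$ from the explicit gradient and Hessian; then compose, giving $L_F=L_gL_f$ and $S_F=L_g^2S_f+L_fS_g$. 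This is exactly the route the cited reference takes, so your proof buys self-containedness rather than a genuinely different method.

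The one step you should state more carefully is the lower bound on $g^2_n$. Assumption~1(i) only lower-bounds the diagonal value $\ell(\mathbf{x};\mathbf{z},\mathbf{z})>C$; it does not directly give $\mathbb{E}_{\xi'}\ell(\mathbf{x};\mathbf{z},\mathbf{z}')\geq C$ as you wrote. In the finite-sum setting of \cite{wang2021momentum} the inner average includes the sample itself, so nonnegativity of $\ell$ plus the diagonal bound yields $g^2_n\geq C/|\mathcal{D}|$, which is all that is needed (any positive constant suffices, it only changes $L_f,S_f$). In the purely online/population setting adopted in this paper, the diagonal has no mass, so one either needs Assumption~1(ii) interpreted as a uniform positive lower bound on $\ell$, or a continuity argument near $\mathbf{z}'=\mathbf{z}$; you correctly flag this as the delicate point, but your displayed inequality $g^2_n\geq C$ overstates what Assumption~1(i) alone gives. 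With that constant adjusted, the rest of your bookkeeping (Jensen, exchange of gradient and expectation under bounded derivatives, and the composition bounds) is sound.
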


From the \Cref{lem:1}, we have $f_n$ and $g_n$ are $S_f$-smooth and $S_h$-smooth. This implies that for an samgle $\mathbf{\xi}_n \sim \mathcal{D}_n^{+}$ there exist $S_f > 0$  and $S_g > 0$ such that
\begin{align}
\mathbb{E}\|\nabla f_n(x_1) - \nabla f_n(x_2) \| \leq S_{f} \|x_1 - x_2\| \nonumber\\
\mathbb{E}\|\nabla g_n(y_1, \xi_n) - \nabla g_n(y_2, \xi_n) \| \leq S_{g} \|y_1 - y_2\| \nonumber
\end{align}
And $f_n$ and $g_n$ are $L_f$-Lipchitz continuous and $L_g$-Lipchitz continuous. This implies that there exist $L_f > 0$  and $L_g > 0$ such that
\begin{align}
\mathbb{E}\|\nabla f_n(x)  \|^2 \leq L^2_{f}  \nonumber\\
\mathbb{E}\|\nabla g_n(y_1, \xi_n) \|^2 \leq S^2_{g} \nonumber 
\end{align}
In addition, we also have bounded variance of $g_n$. There exist $\sigma_g > 0$  such that
\begin{align}
\mathbb{E}_{\xi_n \sim \mathcal{D}_n^{+}} \|g_n(\mathbf{x}; \mathbf{\xi}_n, \mathbf{\xi}^{\prime}_n) - \mathbb{E}_{\xi_n^{\prime} \sim \mathcal{D}_n} g_n(\mathbf{x}; \mathbf{\xi}_n, \mathbf{\xi}^{\prime}_n) \|^2  \leq \sigma_g^2  \nonumber
\end{align}
which indicates that the inner function $g_n$ has bounded variance. To control the estimation bias, we follow the analysis in single-machine conditional  stochastic optimization \cite{hu2020biased}. 

\begin{lemma} \label{lem:A1}  (Proposition B.1 in \cite{hu2020biased}) Under Assumptions \ref{ass:1} and \ref{ass:2}, on the $n$-th worker node, for a sample $\mathbf{\xi}_n \sim \mathcal{D}_n^{+}$ and $m$ samples $\mathcal{B}_{n} $ from $\mathcal{D}_n$, \\
(a) $\mathcal{B}_{n} = \left\{\xi^{\prime  j}\right\}_{j=1}^m $ and we have
\begin{align}
\|\mathbb{E} \nabla \hat{F}_n(x; \xi_n, \mathcal{B}_{n}) - \nabla F_n (x) \|^2 \leq \frac{L_g^2 S_f^2 \sigma_g^2}{m}
\end{align} .\\
(b) $\mathbb{E} \nabla \hat{F}_n(x; \xi_n, \mathcal{B}_{n})$ are $S_F$-Lipschitz smooth \\
(c) \begin{align}
\left\|\nabla\left(f(\hat{g}_n(\mathbf{x}, \xi_n))\right) - \nabla \hat{F}(x)\right\|_2^2 \leq L_f^2 L_g^2
\end{align}
\end{lemma}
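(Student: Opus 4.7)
The plan is to mimic the proof of Proposition B.1 in Hu and Gu (2020) and verify that Assumptions \ref{ass:1}, \ref{ass:2}, combined with the consequences recorded in Lemma \ref{lem:1}, are enough to deliver each of the three bounds. Throughout, I will condition on the outer sample $\xi_n$ and exploit the fact that $\hat g_n(x;\xi_n)=\frac{1}{m}\sum_{j=1}^m g_n(x;\xi_n,\xi'_j)$ and $\nabla\hat g_n(x;\xi_n)=\frac{1}{m}\sum_{j=1}^m \nabla g_n(x;\xi_n,\xi'_j)$ are unbiased estimators of $g_n(x;\xi_n)$ and $\nabla g_n(x;\xi_n)$ respectively, with variances reduced by a factor of $m$ under the i.i.d.\ sampling from $\mathcal{D}_n$.

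For part~(a), the bias originates entirely from the nonlinearity of $f$. Conditioning on $\xi_n$, I would add and subtract $\nabla\hat g_n^\top\nabla f(g_n(x;\xi_n))$ and rewrite
\begin{align*}
\mathbb{E}\bigl[\nabla\hat g_n^\top\nabla f(\hat g_n)\bigr]-\nabla g_n^\top\nabla f(g_n)
=\mathbb{E}\bigl[\nabla\hat g_n^\top\bigl(\nabla f(\hat g_n)-\nabla f(g_n)\bigr)\bigr],
\end{align*}
since the remaining term is zero by unbiasedness of $\nabla\hat g_n$. Cauchy--Schwarz together with the $L_g$-Lipschitz bound $\|\nabla\hat g_n\|\le L_g$ and the $S_f$-smoothness bound $\|\nabla f(\hat g_n)-\nabla f(g_n)\|\le S_f\|\hat g_n-g_n\|$ reduces the problem to controlling $\mathbb{E}\|\hat g_n-g_n\|$. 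By Jensen and the i.i.d.\ variance estimate $\mathbb{E}\|\hat g_n-g_n\|^2\le\sigma_g^2/m$, I obtain $\|\mathbb{E}\nabla\hat F_n-\nabla F_n\|\le L_g S_f\sigma_g/\sqrt{m}$, whose square is the claim. Taking an outer expectation over $\xi_n$ is harmless since the bound is uniform in $\xi_n$.

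For part~(b), I would show the empirical map $x\mapsto\nabla\hat F_n(x;\xi_n,\mathcal{B}_n)=\nabla\hat g_n(x;\xi_n)^\top\nabla f(\hat g_n(x;\xi_n))$ is Lipschitz by the standard composition argument: split $\nabla\hat F_n(x)-\nabla\hat F_n(y)$ into two pieces, one bounded by $\|\nabla\hat g_n(x)-\nabla\hat g_n(y)\|\cdot\|\nabla f(\hat g_n(x))\|\le S_g L_f\|x-y\|$ using the $S_g$-smoothness of $g_n$ and the $L_f$-Lipschitzness of $f$, and the other by $\|\nabla\hat g_n(y)\|\cdot\|\nabla f(\hat g_n(x))-\nabla f(\hat g_n(y))\|\le L_g^2 S_f\|x-y\|$ using $S_f$-smoothness of $f$ and another $L_g$-Lipschitz bound on $g_n$. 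Summing and taking expectation preserves the Lipschitz constant, giving the value $S_F\le S_g L_f+L_g^2 S_f$ promised in Lemma \ref{lem:1}.

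For part~(c), I would just observe that both $\nabla\bigl(f(\hat g_n(x,\xi_n))\bigr)=\nabla\hat g_n^\top\nabla f(\hat g_n)$ and $\nabla\hat F(x)$ are gradients of compositions of an $L_f$-Lipschitz outer function with an $L_g$-Lipschitz inner function, hence each is bounded in norm by $L_f L_g$, and a triangle-inequality/$\|a-b\|^2\le 2\|a\|^2+2\|b\|^2$ estimate delivers the stated product bound $L_f^2 L_g^2$ up to the constant convention of the reference. The main obstacle is being consistent about notation (dependence of the same batch $\mathcal{B}_n$ in both the Jacobian and the function evaluation inside $\nabla f$) and tracking which expectations are conditional on $\xi_n$; once that bookkeeping is fixed, every step is a direct invocation of Assumptions \ref{ass:1}--\ref{ass:2} and Lemma \ref{lem:1}.
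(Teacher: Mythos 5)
The paper does not actually prove this lemma: it is imported verbatim as Proposition B.1 of \cite{hu2020biased}, so the only meaningful comparison is with that source's argument, which your sketch essentially reproduces. Your parts (a) and (b) are correct and follow the standard route: in (a), conditioning on $\xi_n$, the unbiasedness of $\nabla\hat g_n(x;\xi_n)$ makes the cross term $\mathbb{E}\big[\nabla\hat g_n^\top\nabla f(g_n)\big]-\nabla g_n^\top\nabla f(g_n)$ vanish, and Cauchy--Schwarz with $\|\nabla\hat g_n\|\le L_g$, $S_f$-smoothness of $f$, and $\mathbb{E}\|\hat g_n-g_n\|\le\sigma_g/\sqrt m$ (Jensen plus the i.i.d.\ variance bound) gives exactly $L_g S_f\sigma_g/\sqrt m$ for the bias, uniformly in $\xi_n$; in (b) the two-term composition split yields the smoothness constant $S_gL_f+L_g^2S_f$, consistent with Lemma~\ref{lem:1}, and it holds per sample and hence after taking expectations.

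The one genuine shortfall is (c). Your estimate $\|a-b\|^2\le 2\|a\|^2+2\|b\|^2$ yields $4L_f^2L_g^2$, and no pointwise argument can recover the stated constant: a vector of norm at most $L_fL_g$ can deviate from its mean by up to $2L_fL_g$ in the worst case, so the claim with constant $1$ is not a triangle-inequality fact ``up to convention.'' The bound with constant $1$ is a second-moment (variance) statement and should be read, and proved, in expectation: with $X=\nabla\hat F_n(x;\xi_n,\mathcal{B}_n)=\nabla\hat g_n(x;\xi_n)^\top\nabla f(\hat g_n(x;\xi_n))$ one has $\|X\|\le L_gL_f$ by Assumptions~\ref{ass:1}--\ref{ass:2} (via Lemma~\ref{lem:1}), hence $\mathbb{E}\|X-\mathbb{E}X\|^2=\mathbb{E}\|X\|^2-\|\mathbb{E}X\|^2\le\mathbb{E}\|X\|^2\le L_f^2L_g^2$. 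This expectation form is also how the paper actually uses (c), e.g.\ in the step bounding $\mathbb{E}\|\tfrac1N\sum_{n}\nabla\hat F_n(\mathbf{x}_{n,t})-\bar{\mathbf{u}}_t\|^2$ by $L_f^2L_g^2/N$, where the missing factor of $4$ would propagate into the constants of Theorems~\ref{thm:1} and~\ref{thm:2}. With that single correction, your reconstruction matches the cited proposition.
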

\Cref{lem:A1} (a) provide a bound of biased stochastic gradient, which will be used in the following theoretical analysis.

\begin{assumption} \label{ass:4}
The function $F_n(x)$ is bounded below, \emph{i.e.,} $\inf_{x} F_n(x) > -\infty$.
\end{assumption}

\subsection{The Communication Mechanism in Serverless Multi-Party Collaborative Training}
The network system of N worker nodes $\mathcal{G} = (\mathcal{V},  \mathcal{E}) $ is represented by double stochastic matrix $\underline{\mathbf{W}} = \{\underline{w}_{ij} \}  \in \mathbb{R}^{N \times N}$ in the analysis. 

For the ease of exposition, we write the $\mathbf{x}_t$ and $\mathbf{v}_t$-update in \Cref{alg:1} and \Cref{alg:2} in the following equivalent matrix form: $\forall t \geq 0$,
\begin{equation}
\mathbf{v}_{t} = \mathbf{W} (\mathbf{v}_{t - 1} + \mathbf{u}_{t} - \mathbf{u}_{t - 1}), \;\;
\mathbf{x}_{t+1} = \mathbf{W} (\mathbf{x}_{t - 1} - \eta \mathbf{v}_{t}) \nonumber
\end{equation}
where $\mathbf{W} := \underline{\mathbf{W}} \otimes \mathbf{I}_{d}$ and $\mathbf{x}_t, \mathbf{u}_t, \mathbf{v}_t$ are random vectors in $\mathbb{R}^{Nd}$ that respectively concatenate the local estimates $\{\mathbf{x}_{n, t} \}_{n=1}^{N}$ of a stationary point of $F$ , gradient trackers $\{\mathbf{v}_{n, t} \}_{n=1}^{N} $ , gradient estimators $\{\mathbf{u}_{n, t} \}_{n=1}^{N} $. With the exact averaging matrix $\mathbf{J}$, 
we have following quantities:
\begin{equation}
\mathbf{1} \otimes \mathbf{\bar{x}}_t  :=  \mathbf{J} \mathbf{x}_t, \;
\mathbf{1} \otimes  \bar{\mathbf{u}}_t := \mathbf{J} \mathbf{u}_t, \;
\mathbf{1} \otimes  \bar{\mathbf{v}}_t  := \mathbf{J} \mathbf{v}_t\, \nonumber
\end{equation}

Next, we enlist some useful results of gradient tracking-based algorithms for serveress multi-party collaborative stochastic optimization
\begin{lemma} \label{lem:A2} (Lemma 1 in  \cite{xin2021hybrid})
For double stochastic
matrix, we have the following:\\
(a) $\|\mathbf{W} \mathbf{x} - \mathbf{J} \mathbf{x}\| \leq \lambda\|\mathbf{x} - \mathbf{J} \mathbf{x}\|, \forall \mathbf{x} \in \mathbb{R}^{n d}$.\\
(b) $\bar{\mathbf{v}}_{t}=\bar{\mathbf{u}}_t, \forall t \geq 0$. 
As the update step in \ref{alg:1} and \ref{alg:2}, we have 
\begin{align}
 \bar{\mathbf{x}}_{t+1} = \bar{\mathbf{x}}_t - \eta  \bar{\mathbf{v}}_{t} = \bar{\mathbf{x}}_t - \eta \bar{\mathbf{u}}_t \nonumber
\end{align}
(c) According to the definition of network $\mathbf{W}$, we have the following inequalities: $\forall k \geq 0$,
\begin{align}
&\left\|\mathbf{x}_{t + 1} - \mathbf{J} \mathbf{x}_{t + 1}\right\|^2 \leq \frac{1 + \lambda^2}{2}\left\|\mathbf{x}_t - \mathbf{J} \mathbf{x}_t\right\|^2 + \frac{2 \eta^2 \lambda^2}{1 - \lambda^2}\left\|\mathbf{v}_{t} - \mathbf{J v}_{t}\right\|^2 \label{eq:16}\\
&\left\|\mathbf{x}_{t+1} - \mathbf{J} \mathbf{x}_{t+1} \right\|^2 \leq 2 \lambda^2\left\|\mathbf{x}_t - \mathbf{J} \mathbf{x}_t\right\|^2 + 2 \eta^2 \lambda^2\left\|\mathbf{v}_{t} - \mathbf{J} \mathbf{v}_{t}\right\|^2  \label{eq:17} \\
&\left\|\mathbf{x}_{t + 1} - \mathbf{J} \mathbf{x}_{t + 1}\right\| \leq \lambda \left\| \mathbf{x}_t - \mathbf{J} \mathbf{x}_t\right\|^2 + \eta \lambda \left\|\mathbf{v}_{t} - \mathbf{J} \mathbf{v}_{t}\right\| \label{eq:18}
\end{align}
\end{lemma}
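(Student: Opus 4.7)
The plan is to treat the three parts uniformly by exploiting the two structural identities $\mathbf{W}\mathbf{J} = \mathbf{J}\mathbf{W} = \mathbf{J}$ and $\mathbf{J}^2 = \mathbf{J}$ that follow from the double stochasticity of $\underline{\mathbf{W}}$ and the definitions $\mathbf{W} = \underline{\mathbf{W}} \otimes \mathbf{I}_d$, $\mathbf{J} = \frac{1}{N}(\mathbf{1}\mathbf{1}^{\top}) \otimes \mathbf{I}_d$. These identities, together with the fact that $\lambda = \|\mathbf{W} - \mathbf{J}\|$ is by definition the operator norm of $\mathbf{W} - \mathbf{J}$, will drive all bounds.

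For (a), I would first write $\mathbf{W}\mathbf{x} - \mathbf{J}\mathbf{x} = (\mathbf{W} - \mathbf{J})\mathbf{x}$ and then use $(\mathbf{W} - \mathbf{J})\mathbf{J} = \mathbf{W}\mathbf{J} - \mathbf{J}^2 = \mathbf{J} - \mathbf{J} = 0$ to insert $\mathbf{J}\mathbf{x}$ for free, yielding $(\mathbf{W} - \mathbf{J})(\mathbf{x} - \mathbf{J}\mathbf{x})$. Taking norms and applying $\|\mathbf{W} - \mathbf{J}\| = \lambda$ finishes it.

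For (b), I would multiply the tracking update $\mathbf{v}_t = \mathbf{W}(\mathbf{v}_{t-1} + \mathbf{u}_t - \mathbf{u}_{t-1})$ on the left by $\mathbf{J}$; since $\mathbf{J}\mathbf{W} = \mathbf{J}$, this collapses to $\mathbf{J}\mathbf{v}_t = \mathbf{J}\mathbf{v}_{t-1} + \mathbf{J}\mathbf{u}_t - \mathbf{J}\mathbf{u}_{t-1}$. An induction from the initialization $\mathbf{v}_0 = \mathbf{W}\mathbf{u}_0$ (so that $\mathbf{J}\mathbf{v}_0 = \mathbf{J}\mathbf{u}_0$, i.e. $\bar{\mathbf{v}}_0 = \bar{\mathbf{u}}_0$) gives $\bar{\mathbf{v}}_t = \bar{\mathbf{u}}_t$ for all $t$. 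The averaged $\mathbf{x}$-update then follows directly from applying $\mathbf{J}$ to $\mathbf{x}_{t+1} = \mathbf{W}(\mathbf{x}_t - \eta \mathbf{v}_t)$ and substituting $\bar{\mathbf{v}}_t = \bar{\mathbf{u}}_t$.

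For (c), the starting point is the key identity $\mathbf{x}_{t+1} - \mathbf{J}\mathbf{x}_{t+1} = (\mathbf{W} - \mathbf{J})(\mathbf{x}_t - \eta\mathbf{v}_t)$, which I would split as $(\mathbf{W} - \mathbf{J})(\mathbf{x}_t - \mathbf{J}\mathbf{x}_t) - \eta(\mathbf{W} - \mathbf{J})(\mathbf{v}_t - \mathbf{J}\mathbf{v}_t)$ using $(\mathbf{W}-\mathbf{J})\mathbf{J} = 0$ twice. Inequality \eqref{eq:18} is then simply the triangle inequality combined with part (a) (treating the squared norm on the right as a typo for $\|\mathbf{x}_t - \mathbf{J}\mathbf{x}_t\|$). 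Inequality \eqref{eq:17} comes from $\|a+b\|^2 \leq 2\|a\|^2 + 2\|b\|^2$. The one calculation that requires care is \eqref{eq:16}: I would invoke Young's inequality $\|a+b\|^2 \leq (1+c)\|a\|^2 + (1 + 1/c)\|b\|^2$ and tune $c = \tfrac{1-\lambda^2}{2\lambda^2}$ so that $(1+c)\lambda^2 = \tfrac{1+\lambda^2}{2}$; the companion coefficient becomes $\tfrac{\eta^2 \lambda^2 (1+\lambda^2)}{1-\lambda^2}$, which is bounded by $\tfrac{2\eta^2 \lambda^2}{1-\lambda^2}$ using $\lambda \in [0,1)$. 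Selecting this Young's parameter is the only delicate step; everything else is bookkeeping with the spectral gap and the projection identities, so I do not expect a serious obstacle beyond this.
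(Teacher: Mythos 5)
Your proposal is correct. Note, however, that the paper does not prove this lemma at all: it is imported verbatim (as Lemma~1) from the cited gradient-tracking reference \cite{xin2021hybrid}, so there is no in-paper argument to compare against; what you have written is precisely the standard derivation underlying that citation. Concretely, your three ingredients are the right ones and are used correctly: the projection identities $\mathbf{W}\mathbf{J}=\mathbf{J}\mathbf{W}=\mathbf{J}$, $\mathbf{J}^2=\mathbf{J}$ give $(\mathbf{W}-\mathbf{J})\mathbf{J}=0$, which together with the paper's definition $\lambda=\|\mathbf{W}-\mathbf{J}\|$ makes (a) immediate; left-multiplying the tracking recursion by $\mathbf{J}$ and inducting from $\mathbf{v}_0=\mathbf{W}\mathbf{u}_0$ (which is exactly how both algorithms initialize, with $\mathbf{v}_{-1}=\mathbf{u}_{-1}=0$ in Algorithm~\ref{alg:1} and Line~4 of Algorithm~\ref{alg:2}) gives (b); and the decomposition $\mathbf{x}_{t+1}-\mathbf{J}\mathbf{x}_{t+1}=(\mathbf{W}-\mathbf{J})(\mathbf{x}_t-\mathbf{J}\mathbf{x}_t)-\eta(\mathbf{W}-\mathbf{J})(\mathbf{v}_t-\mathbf{J}\mathbf{v}_t)$ plus triangle, $\|a+b\|^2\le 2\|a\|^2+2\|b\|^2$, and Young with $c=\tfrac{1-\lambda^2}{2\lambda^2}$ gives \eqref{eq:18}, \eqref{eq:17}, \eqref{eq:16} respectively; your check that $(1+c)\lambda^2=\tfrac{1+\lambda^2}{2}$ and that $\eta^2\lambda^2\tfrac{1+\lambda^2}{1-\lambda^2}\le\tfrac{2\eta^2\lambda^2}{1-\lambda^2}$ is exactly right. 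You are also correct that the exponent on $\|\mathbf{x}_t-\mathbf{J}\mathbf{x}_t\|$ in \eqref{eq:18} is a typo in the statement (it should be an unsquared norm, as your triangle-inequality argument produces). The only value your write-up adds beyond the citation is self-containedness; there is no gap.
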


Then, we study the convergence properties of SLATE and SLATEM. We first discus the metric
to measure convergence of our algorithms. Given that the loss function is nonconvex, we are unable to demonstrate convergence to an global minimum point. Instead, we establish convergence to an approximate stationary point,  defined below:

\begin{definition}
 A point $x$ is called $\epsilon$-stationary point if $\|\nabla f(x)\| \leq \epsilon$.  Generally, a stochastic algorithm is defined to achieve an $\epsilon$-stationary point in $T$ iterations if  $\mathbb{E}\|\nabla f(x_T)\| \leq \epsilon$.
\end{definition}

\subsection{Convergence Analysis of SLATE Algorithm}
First, we study the convergence properties of our SLATE algorithm. The detailed proofs are provided in the supplementary materials.
\begin{theorem} \label{thm:1} Suppose the sequence $\{\mathbf{\bar{x}}_t\}_{t=1}^T$ be generated from Algorithm \ref{alg:1} and  Assumptions \ref{ass:1}, \ref{ass:2}, and \ref{ass:4} hold, $0 < \eta \leq  \min \{\frac{1 - \lambda^2}{24 \lambda^2 S_F}, \frac{1}{6 S_F}\}$, SLATE in \cref{alg:1} has the following
\begin{align}
&\frac{1}{T} \sum_{t=0}^{T - 1} \mathbb{E} \|\nabla F(\bar{\mathbf{x}}_{t})\|^2 \leq  \frac{2  \mathbb{E} [F(\bar{\mathbf{x}}_{0}) - F(\bar{\mathbf{x}}_{T})] }{\eta T}
\nonumber\\
&+ \left(\frac{1}{ \lambda^2} + 5 N\right) \frac{32 \lambda^2 \eta^2 L_f^2 L_g^2 S_F^2}{(1 - \lambda^2)^2 N}  + \frac{2 L_g^2 S_f^2 \sigma_g^2}{m} + \frac{2 \eta S_F L_f^2 L_g^2}{N} \nonumber
\end{align}
\end{theorem}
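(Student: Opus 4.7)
The plan is to anchor the analysis on the classical descent lemma applied to the averaged iterate $\bar{\mathbf{x}}_t$, and then show that all error terms (bias of the conditional-stochastic gradient, consensus error across nodes, and stochastic noise) can be absorbed into a telescoping argument. First I would invoke $S_F$-smoothness of $F$ from \Cref{lem:1} together with the exact-average update $\bar{\mathbf{x}}_{t+1}=\bar{\mathbf{x}}_t-\eta\bar{\mathbf{u}}_t$ from \Cref{lem:A2}(b) to obtain the per-iteration inequality
\begin{align}
F(\bar{\mathbf{x}}_{t+1}) \le F(\bar{\mathbf{x}}_t) - \eta\langle \nabla F(\bar{\mathbf{x}}_t),\bar{\mathbf{u}}_t\rangle + \tfrac{\eta^2 S_F}{2}\|\bar{\mathbf{u}}_t\|^2. \nonumber
\end{align}
Using the polarization identity on the cross term and choosing $\eta$ small enough so that $\tfrac{\eta^2 S_F}{2}\le\tfrac{\eta}{2}$ (guaranteed by $\eta\le 1/(6S_F)$), the $\|\bar{\mathbf{u}}_t\|^2$ piece is dominated and I am left with a bound whose error is controlled by $\|\bar{\mathbf{u}}_t-\nabla F(\bar{\mathbf{x}}_t)\|^2$.

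Next I would decompose that error into three pieces: (i) the bias of $\tfrac{1}{N}\sum_n\nabla\hat F_n(\mathbf{x}_{n,t};\xi_{n,t},\mathcal{B}_{n,t})$ relative to $\tfrac{1}{N}\sum_n\nabla F_n(\mathbf{x}_{n,t})$, handled directly by \Cref{lem:A1}(a) which yields the $2L_g^2 S_f^2\sigma_g^2/m$ term; (ii) the consensus error $\|\tfrac{1}{N}\sum_n\nabla F_n(\mathbf{x}_{n,t})-\nabla F(\bar{\mathbf{x}}_t)\|^2$, which by $S_F$-smoothness is bounded by $S_F^2\|\mathbf{x}_t-\mathbf{J}\mathbf{x}_t\|^2/N$; (iii) a pure-noise piece that vanishes in expectation because the inner product with $\nabla F(\bar{\mathbf{x}}_t)$ is zero-mean. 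This is where \Cref{lem:A1}(c), giving $\|\nabla\hat F_n\|^2\le L_f^2L_g^2$, feeds in to control $\mathbb{E}\|\bar{\mathbf{u}}_t\|^2$ and hence the leftover $\eta S_F L_f^2 L_g^2/N$ term in the bound.

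The main obstacle, and the heart of the proof, is controlling the consensus errors $\|\mathbf{x}_t-\mathbf{J}\mathbf{x}_t\|^2$ and $\|\mathbf{v}_t-\mathbf{J}\mathbf{v}_t\|^2$ that arise from the gradient-tracking step. I plan to chain \eqref{eq:16}--\eqref{eq:18} of \Cref{lem:A2}: the first contraction gives a geometric recursion on $\|\mathbf{x}_t-\mathbf{J}\mathbf{x}_t\|^2$ driven by $\|\mathbf{v}_t-\mathbf{J}\mathbf{v}_t\|^2$, and a parallel contraction (obtained from the $\mathbf{v}$-update together with \Cref{lem:A1}(c)) gives $\|\mathbf{v}_{t}-\mathbf{J}\mathbf{v}_{t}\|^2$ controlled by its previous value plus $\|\mathbf{u}_t-\mathbf{u}_{t-1}\|^2$, which by $S_F$-Lipschitzness of $\nabla\hat F_n$ and the single-step change of $\mathbf{x}_t$ is itself $O(\eta^2 S_F^2 \|\mathbf{v}_t-\mathbf{J}\mathbf{v}_t\|^2 + \eta^2 L_f^2 L_g^2)$. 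Choosing $\eta\le(1-\lambda^2)/(24\lambda^2 S_F)$ makes the coupled linear system strictly contractive, and summing the resulting geometric series gives $\sum_{t}\|\mathbf{x}_t-\mathbf{J}\mathbf{x}_t\|^2 = O\bigl(\tfrac{\lambda^2\eta^2 T L_f^2 L_g^2}{(1-\lambda^2)^2}\bigr)$ with a similar bound for $\mathbf{v}$. This explains the $(1/\lambda^2+5N)\cdot 32\lambda^2\eta^2 L_f^2 L_g^2 S_F^2/((1-\lambda^2)^2 N)$ constant in the stated bound.

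Finally, I would sum the per-iteration descent inequality from $t=0$ to $T-1$, substitute the bounds on bias, noise, and consensus error, divide by $\eta T$, and rearrange. Telescoping $F(\bar{\mathbf{x}}_t)$ yields the leading $2\mathbb{E}[F(\bar{\mathbf{x}}_0)-F(\bar{\mathbf{x}}_T)]/(\eta T)$ term, while the three remaining terms fall into place as displayed in the theorem. The most delicate bookkeeping step, as noted above, is keeping the coefficients on the coupled consensus recursion sharp enough to yield the factor $1/(1-\lambda^2)^2$ rather than a weaker polynomial in $1/(1-\lambda)$.
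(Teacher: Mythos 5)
Your overall route is the same as the paper's: a descent lemma on $\bar{\mathbf{x}}_t$ via $S_F$-smoothness and $\bar{\mathbf{x}}_{t+1}=\bar{\mathbf{x}}_t-\eta\bar{\mathbf{u}}_t$, a three-way decomposition of the gradient error into the conditional-sampling bias from \Cref{lem:A1}(a), the node-consensus error, and the per-node variance from \Cref{lem:A1}(c) (with the $1/N$ gain from independence across nodes), followed by coupled contraction recursions for $\|\mathbf{x}_t-\mathbf{J}\mathbf{x}_t\|^2$ and $\|\mathbf{v}_t-\mathbf{J}\mathbf{v}_t\|^2$ and a geometric-sum/telescoping argument. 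However, there is a genuine error at the heart of your consensus control. You claim $\|\mathbf{u}_{t+1}-\mathbf{u}_t\|^2 = O(\eta^2 S_F^2\|\mathbf{v}_t-\mathbf{J}\mathbf{v}_t\|^2+\eta^2 L_f^2L_g^2)$, but $\mathbf{u}_{t+1}$ and $\mathbf{u}_t$ are built from \emph{fresh independent samples} at consecutive iterates, so their difference contains the full stochastic deviation of each, which is bounded by \Cref{lem:A1}(c) \emph{without} any factor of $\eta$; the correct bound (the one the paper derives) is of the form $3NL_f^2L_g^2 + 18S_F^2\|\mathbf{x}_t-\mathbf{J}\mathbf{x}_t\|^2 + 6N\eta^2S_F^2\|\bar{\mathbf{u}}_t\|^2 + 12\eta^2\lambda^2S_F^2\|\mathbf{v}_t-\mathbf{J}\mathbf{v}_t\|^2$. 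The slip is visible inside your own sketch: if your $O(\eta^2)$ noise bound were right, chaining \eqref{eq:16} would give $\sum_t\|\mathbf{x}_t-\mathbf{J}\mathbf{x}_t\|^2=O(\eta^4 T)$, not the $O\bigl(\tfrac{\lambda^2\eta^2 T L_f^2L_g^2}{(1-\lambda^2)^2}\bigr)$ you then assert and need for the stated constant.

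Two further points you leave unaddressed are exactly where the paper spends its effort. First, in the tracker recursion the cross term $2\langle(\mathbf{W}-\mathbf{J})\mathbf{v}_t,(\mathbf{W}-\mathbf{J})(\mathbf{u}_{t+1}-\mathbf{u}_t)\rangle$ does \emph{not} vanish in expectation, because $\mathbf{v}_t$ is itself built from $\mathbf{u}_t$; the paper splits it using measurability and conditional expectations (showing the correlated piece contributes only an additive $L_f^2L_g^2$) and applies Young's inequality with constants $c_0=\tfrac{1-\lambda^2}{6\lambda^2}$, $c_1=\tfrac{1-\lambda^2}{12\lambda^2}$ — this is precisely the mechanism that yields the $(\tfrac{1}{\lambda^2}+5N)$ constant and the $(1-\lambda^2)^{-2}$ factor you flag as "delicate" but do not supply. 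Second, the summed consensus bound re-introduces $\sum_t\|\nabla\hat F(\bar{\mathbf{x}}_t)\|^2$ (equivalently $\|\bar{\mathbf{u}}_t\|^2$) into the descent inequality; absorbing it requires keeping the negative $-(\tfrac{\eta}{2}-\eta^2 S_F)\|\tfrac{1}{N}\sum_n\nabla\hat F_n(\mathbf{x}_{n,t})\|^2$ term and verifying $1-2\eta S_F-\tfrac{384\lambda^4\eta^4 S_F^4}{(1-\lambda^2)^4}\ge 0$ under $\eta\le\min\{\tfrac{1-\lambda^2}{24\lambda^2 S_F},\tfrac{1}{6S_F}\}$ — merely requiring $\eta^2 S_F/2\le\eta/2$, as you propose, does not close this loop. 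With the corrected bound on $\|\mathbf{u}_{t+1}-\mathbf{u}_t\|^2$ and these two steps added, your outline matches the paper's proof.
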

\begin{corollary} \label{corollary:1} Based on the analysis in \Cref{thm:1}, by setting $\eta = O(\sqrt{\frac{N}{T}})$ , SLATE in \Cref{alg:1} has the
following
\begin{align}
&\frac{1}{T} \sum_{t=0}^{T - 1} \mathbb{E} \|\nabla F(\bar{\mathbf{x}}_{t})\|^2 \leq O(\frac{\mathbb{E} [F(\bar{\mathbf{x}}_{0}) - F(\bar{\mathbf{x}}_{T})] }{(N T)^{1/2}}) \nonumber\\
&+  \left(\frac{1}{ \lambda^2} + 5 \right) \frac{24 \lambda^2 L_f^2 L_g^2 S_F^2}{(1 - \lambda^2)^2} O(\frac{N}{T})  + \frac{2 L_g^2 S_f^2 \sigma_g^2}{m} + O(\frac{2 S_F L_f^2 L_g^2}{(NT)^{1/2}}) \nonumber
\end{align}
\end{corollary}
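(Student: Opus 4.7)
The plan is to obtain \Cref{corollary:1} directly from \Cref{thm:1} by substituting the step-size choice $\eta = c\sqrt{N/T}$ for a suitable constant $c$ into the four terms of the theorem's bound, so this is essentially a bookkeeping exercise rather than a new argument.

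First I would verify admissibility of the step size: since the theorem requires $0 < \eta \le \min\!\bigl\{\tfrac{1-\lambda^2}{24\lambda^2 S_F},\tfrac{1}{6 S_F}\bigr\}$, choosing $\eta = c\sqrt{N/T}$ automatically satisfies this for all $T \ge T_0(N,\lambda,S_F)$; one can simply set $c$ small enough and absorb the cutoff into the $O(\cdot)$ notation so the bound applies for all large $T$. This eliminates any worry about the feasibility constraint.

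Next I would substitute term by term into the \Cref{thm:1} bound. The first term $\tfrac{2\mathbb{E}[F(\bar{\mathbf{x}}_0) - F(\bar{\mathbf{x}}_T)]}{\eta T}$ becomes $O\!\bigl(\tfrac{\mathbb{E}[F(\bar{\mathbf{x}}_0) - F(\bar{\mathbf{x}}_T)]}{(NT)^{1/2}}\bigr)$ because $\eta T = c\sqrt{NT}$. The second term carries an $\eta^2 = c^2 N/T$ which, combined with the $1/N$ factor already sitting in front, produces a clean $O(N/T)$ rate (after noting that $5N$ dominates $1/\lambda^2$ so the bracket $(1/\lambda^2 + 5N)/N$ behaves like the stated $(1/\lambda^2 + 5)$ prefactor, up to the absolute constant $32$ vs.\ $24$ which can be absorbed into the $O(\cdot)$ by choosing $c$ appropriately). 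The third term $\tfrac{2L_g^2 S_f^2 \sigma_g^2}{m}$ is independent of $\eta$ and passes through unchanged. The fourth term $\tfrac{2\eta S_F L_f^2 L_g^2}{N}$ becomes $\tfrac{2 c S_F L_f^2 L_g^2}{N}\sqrt{N/T} = O\!\bigl(\tfrac{2 S_F L_f^2 L_g^2}{(NT)^{1/2}}\bigr)$, matching the stated form.

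Finally I would collect the four terms to obtain the displayed bound in \Cref{corollary:1}. There is no real obstacle here: the only subtlety is matching the prefactor structure $(1/\lambda^2 + 5)$ appearing in the corollary with the $(1/\lambda^2 + 5N)/N$ that falls out of the substitution, which is justified by noting that in the regime of interest $N \ge 1$ so $(1/\lambda^2 + 5N)/N \le 1/\lambda^2 + 5$; everything else is a direct plug-in. Hence the corollary follows immediately from \Cref{thm:1} and the step-size choice.
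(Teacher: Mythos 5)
Your proposal is correct and follows essentially the same route as the paper: a direct substitution of $\eta = O(\sqrt{N/T})$ into the four terms of \Cref{thm:1}, with the step-size constraint absorbed for large $T$. Your explicit handling of the prefactor bookkeeping, namely $(1/\lambda^2 + 5N)/N \le 1/\lambda^2 + 5$ for $N \ge 1$ and the absorption of the $32$ versus $24$ constant into the $O(\cdot)$, is a minor extra care that the paper leaves implicit (its appendix keeps the $(1/\lambda^2 + 5N)\,O(1/T)$ form), but it introduces no new argument.
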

Based on the result in \Cref{thm:1} and \Cref{corollary:1}, we can get the convergence result of SLATE. 
\begin{remark}
According to \Cref{corollary:1}, without loss of generality, we let $m = O(\varepsilon^{-2})$, $b = O(1)$ and $\sqrt{T} > N$, we know to make $\frac{1}{T} \sum_{t=0}^{T - 1} \mathbb{E} \|\nabla F(\bar{\mathbf{x}}_{t})\|^2 \leq \varepsilon^{2}$, we have iterations $T$ should be as large as $ O( N^{-1} \varepsilon^{-4} )$. 

In \Cref{alg:1}, we sample $b + m$ data points to build the biased stochastic gradients $\mathbf{u}_{n,t}$, and need $T$ iterations. Thus, our SLATE algorithm has a sample complexity of   $m \cdot T = O(N^{-1} \varepsilon^{-6}) $, for finding an $\epsilon$-stationary point. In addition, the result also indicates the linear speedup of our algorithm with respect to the number of worker nodes.
\end{remark}
\subsection{Convergence Analysis of SLATE-M}
In the subsection, we study the convergence properties of our SLATE-M algorithm. The details about proofs are provided in the supplementary materials.

\begin{theorem}  \label{thm:2}
Suppose the sequence $\{\mathbf{\bar{x}}_t\}_{t=1}^T$ be generated from Algorithm \ref{alg:2} and Assumptions \ref{ass:1}, \ref{ass:2}, and \ref{ass:4} hold, $0 < \eta \leq min \{\frac{1}{4}, \\ \frac{\left(1 - \lambda^2\right)^2}{90 \lambda^2}, \frac{\sqrt{1 - \lambda^2}}{12 \sqrt{7} \lambda} \} \frac{1}{S_F}$ and $\alpha = \frac{72 S^2_F \eta^2 }{Nb}$ , SLATE-M in \Cref{alg:2} has the following
\begin{align}
& \frac{1}{T}\sum_{t = 0}^{T - 1} \mathbb{E} \| \nabla \mathbf{F} (\bar{\mathbf{x}}_{t})\|^2 \leq \frac{2(\mathbf{F}(\bar{\mathbf{x}}_{0}) - \mathbf{F} ( \bar{\mathbf{x}}_{T}))}{\eta T}   + \frac{3 L_g^2 S_f^2 \sigma_g^2}{m} + 3 \frac{L_g^2 L_f^2 }{\alpha N B T}  \nonumber\\
&+ \frac{6 \alpha L_g^2 L_f^2}{Nb} + \frac{96 \lambda^2 L_g^2 L_f^2}{\left(1 - \lambda^2 \right)^3 B T} + \frac{256 \lambda^2 \alpha^2 L_f^2 L_g^2}{(1 - \lambda^2)^3} + \frac{64 \lambda^4 \mathbb{E} \left\|\nabla \hat{\mathbf{F}}_0\right\|^2}{(1 - \lambda^2)^3 N T}  \nonumber
\end{align}
\end{theorem}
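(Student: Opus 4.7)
\textbf{Proof plan for Theorem~\ref{thm:2}.}
The plan is to combine three ingredients into a single Lyapunov (potential) function and telescope: (i) a descent inequality for the average iterate $\bar{\mathbf{x}}_t$ using $S_F$-smoothness of $F$; (ii) a STORM-style recursion for the averaged gradient-estimator error, adapted to the biased conditional stochastic setting of \eqref{eq:3}; and (iii) gradient-tracking consensus bounds on $\|\mathbf{x}_t-\mathbf{J}\mathbf{x}_t\|^2$ and $\|\mathbf{v}_t-\mathbf{J}\mathbf{v}_t\|^2$ inherited from Lemma~\ref{lem:A2}. Since Lemma~\ref{lem:A2}(b) gives $\bar{\mathbf{x}}_{t+1}=\bar{\mathbf{x}}_t-\eta\bar{\mathbf{u}}_t$, the smoothness step yields
$F(\bar{\mathbf{x}}_{t+1})\le F(\bar{\mathbf{x}}_t)-\tfrac{\eta}{2}\|\nabla F(\bar{\mathbf{x}}_t)\|^2-\tfrac{\eta}{2}(1-\eta S_F)\|\bar{\mathbf{u}}_t\|^2+\tfrac{\eta}{2}\|\nabla F(\bar{\mathbf{x}}_t)-\bar{\mathbf{u}}_t\|^2$, reducing the problem to controlling $\mathbb{E}\|\nabla F(\bar{\mathbf{x}}_t)-\bar{\mathbf{u}}_t\|^2$.

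The second step is the heart of the argument. I would split $\nabla F(\bar{\mathbf{x}}_t)-\bar{\mathbf{u}}_t$ into (a) a \emph{bias} term between $\nabla F_n$ and $\mathbb{E}\nabla\hat F_n$, bounded by $L_g^2S_f^2\sigma_g^2/m$ via Lemma~\ref{lem:A1}(a); (b) a \emph{consensus} term depending on $\frac{1}{N}\sum_n\|\mathbf{x}_{n,t}-\bar{\mathbf{x}}_t\|^2$ via $S_F$-smoothness of each $\nabla\hat F_n$ (Lemma~\ref{lem:A1}(b)); and (c) a \emph{variance} term $\|\bar{\mathbf{u}}_t-\frac{1}{N}\sum_n\mathbb{E}\nabla\hat F_n(\mathbf{x}_{n,t})\|^2$. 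For (c) I would write down the per-node STORM update, average over $n$, and use the standard identity
\begin{align*}
\bar{\mathbf{u}}_t-\bar{\mathbf{G}}_t=(1-\alpha)(\bar{\mathbf{u}}_{t-1}-\bar{\mathbf{G}}_{t-1})+(1-\alpha)\bigl(\bar{\mathbf{H}}_t-\bar{\mathbf{G}}_t+\bar{\mathbf{G}}_{t-1}-\bar{\mathbf{H}}_{t-1}\bigr)+\alpha\bigl(\bar{\mathbf{H}}_t-\bar{\mathbf{G}}_t\bigr),
\end{align*}
where $\bar{\mathbf{H}}_t,\bar{\mathbf{G}}_t$ denote the averaged stochastic and true gradient estimators. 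Squaring, taking conditional expectations and using $S_F$-Lipschitzness together with the Young inequality $\|a+b\|^2\le(1+\alpha)\|a\|^2+(1+1/\alpha)\|b\|^2$ gives a contraction of the form $\mathbb{E}\|\bar{\mathbf{u}}_t-\bar{\mathbf{G}}_t\|^2\le(1-\alpha)\mathbb{E}\|\bar{\mathbf{u}}_{t-1}-\bar{\mathbf{G}}_{t-1}\|^2+\tfrac{cS_F^2\eta^2}{Nb}\mathbb{E}\|\bar{\mathbf{u}}_{t-1}\|^2+\tfrac{cS_F^2}{Nb}\mathbb{E}\|\mathbf{x}_t-\mathbf{x}_{t-1}\|_{\text{consensus}}^2+\tfrac{c\alpha^2L_f^2L_g^2}{Nb}$ for constants $c$. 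Setting $\alpha=72S_F^2\eta^2/(Nb)$ is exactly what is needed to absorb the $\|\bar{\mathbf{u}}_{t-1}\|^2$ term into the descent inequality.

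The third step handles the gradient-tracking consensus error. I would apply \eqref{eq:16}--\eqref{eq:18} of Lemma~\ref{lem:A2} together with the analogous recursion for $\|\mathbf{v}_t-\mathbf{J}\mathbf{v}_t\|^2\le\tfrac{1+\lambda^2}{2}\|\mathbf{v}_{t-1}-\mathbf{J}\mathbf{v}_{t-1}\|^2+\tfrac{2\lambda^2}{1-\lambda^2}\|\mathbf{u}_t-\mathbf{u}_{t-1}\|^2$ and expand $\|\mathbf{u}_t-\mathbf{u}_{t-1}\|^2$ using the STORM update to produce a three-variable coupled linear recursion in $(A_t,B_t,C_t):=(\|\mathbf{x}_t-\mathbf{J}\mathbf{x}_t\|^2,\|\mathbf{v}_t-\mathbf{J}\mathbf{v}_t\|^2,\mathbb{E}\|\bar{\mathbf{u}}_t-\bar{\mathbf{G}}_t\|^2)$. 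The initial-batch term $B/\alpha NBT$ and the $\|\nabla\hat F_0\|^2/((1-\lambda^2)^3NT)$ contributions arise from telescoping the initial conditions $\mathbf{v}_0,\mathbf{u}_0$, while the $\lambda^2/(1-\lambda^2)^3$ factors come from iterating the contraction $\tfrac{1+\lambda^2}{2}$ and applying the geometric-series bound.

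The final step is to define a Lyapunov function $\Phi_t=\mathbb{E} F(\bar{\mathbf{x}}_t)+c_1\mathbb{E}\|\mathbf{x}_t-\mathbf{J}\mathbf{x}_t\|^2+c_2\mathbb{E}\|\mathbf{v}_t-\mathbf{J}\mathbf{v}_t\|^2+c_3\mathbb{E}\|\bar{\mathbf{u}}_t-\bar{\mathbf{G}}_t\|^2$ and choose $c_1,c_2,c_3$ so that the three coupling terms cancel; this is where the step-size cap $\eta\le\min\{\tfrac14,\tfrac{(1-\lambda^2)^2}{90\lambda^2},\tfrac{\sqrt{1-\lambda^2}}{12\sqrt7\lambda}\}/S_F$ enters. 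Telescoping $\Phi_t-\Phi_{t+1}$, dividing by $\eta T/2$, and invoking Assumption~\ref{ass:4} produces the stated bound. The main obstacle I expect is step (c): carefully tracking how the STORM variance inherits both the bias $\sigma_g^2/m$ from the conditional inner expectation (via Lemma~\ref{lem:A1}(a)) and the consensus drift $\|\mathbf{x}_{n,t}-\bar{\mathbf{x}}_t\|$, and then choosing the Lyapunov coefficients so that the $1/(1-\lambda^2)^3$ factor ends up only on the already-small higher-order terms rather than on the leading $1/(\eta T)$ term.
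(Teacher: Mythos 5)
Your plan follows essentially the same route as the paper's proof: the same smoothness-based descent step (the paper's \Cref{lem:C1}) with the identical three-way split of $\nabla F(\bar{\mathbf{x}}_t)-\bar{\mathbf{u}}_t$ into the bias $L_g^2S_f^2\sigma_g^2/m$ via \Cref{lem:A1}(a), the consensus drift, and the STORM estimator error; the same momentum recursion whose $\|\bar{\mathbf{u}}_t\|^2$ contribution is absorbed by the choice $\alpha=72S_F^2\eta^2/(Nb)$ and the stated step-size cap; and the same gradient-tracking consensus recursions from \Cref{lem:A2}, which the paper closes not by an explicit weighted Lyapunov function but by summing each recursion with \Cref{lem:A3} and substituting — an equivalent bookkeeping. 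One small correction to your coupled system: the recursion for $\|\mathbf{v}_t-\mathbf{J}\mathbf{v}_t\|^2$ involves the stacked per-node estimator error $\|\mathbf{u}_t-\nabla\hat{\mathbf{F}}_t\|^2$, which is not controlled by the averaged error $\|\bar{\mathbf{u}}_t-\bar{\mathbf{G}}_t\|^2$, so you need this as a fourth tracked quantity with its own per-node STORM recursion (the second inequality of the paper's \Cref{lem:A6}) alongside $(A_t,B_t,C_t)$.
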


\begin{corollary} \label{corollary:2}
Based on the analysis in the \cref{thm:2}, we choose $b = O(1), \eta = O(\frac{N^{2/3}}{T^{1/3}}), \alpha = O(\frac{N^{1/3}}{T^{2/3}}), B = O(\frac{T^{1/3}}{N^{2/3}})$, SLATE-M in \Cref{alg:2} has the
following
\begin{align}
&\frac{1}{T}\sum_{t = 0}^{T - 1} \mathbb{E} \| \nabla \mathbf{F} (\bar{\mathbf{x}}_{t})\|^2 \leq O(\frac{2(\mathbf{F}(\bar{\mathbf{x}}_{0}) - \mathbf{F} ( \bar{\mathbf{x}}_{T}))}{(N T)^{2/3}} + \frac{3 L_g^2 S_f^2 \sigma_g^2}{m} \nonumber\\
&+ O(\frac{3 L_g^2 L_f^2 }{(NT)^{2/3}})  + O(\frac{6 L_g^2 L_f^2}{(NT)^{2/3}}) + \frac{352 \lambda^2 L_f^2 L_g^2}{(1 - \lambda^2)^3} O(\frac{N^{2/3}}{T^{4/3}}) \nonumber\\
&+ \frac{64 \lambda^4 \mathbb{E} \left\|\nabla \hat{\mathbf{F}}_0\right\|^2}{(1 - \lambda^2)^3 N T}  
\end{align}
\end{corollary}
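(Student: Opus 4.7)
The plan is to derive \Cref{corollary:2} as a direct corollary of \Cref{thm:2}: once I plug the prescribed parameters $b = O(1)$, $\eta = O(N^{2/3}/T^{1/3})$, $\alpha = O(N^{1/3}/T^{2/3})$, $B = O(T^{1/3}/N^{2/3})$ into the bound in \Cref{thm:2} and simplify each of its seven terms, the stated rate should fall out. So this is essentially a parameter-tuning computation rather than a new analytic argument.

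First I would verify that the prescribed $(\eta,\alpha,b)$ are \emph{admissible} for \Cref{thm:2}. The theorem requires $\eta \le c/S_F$ for an explicit constant $c$ depending on $\lambda$; since $\eta = N^{2/3}/T^{1/3}$ decays in $T$, this is satisfied once $T \ge T_0$ for some $T_0$ depending on $N$, $\lambda$, $S_F$ (implicit in the $O(\cdot)$). I would also check the algebraic coupling $\alpha = 72 S_F^2\eta^2/(Nb)$: substituting $\eta = N^{2/3}/T^{1/3}$ and $b=O(1)$ gives $\alpha \asymp S_F^2 N^{1/3}/T^{2/3}$, which matches the prescribed order $\alpha = O(N^{1/3}/T^{2/3})$.

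Next I would substitute into each term of the bound from \Cref{thm:2}, one at a time. The telescoping term contributes $\frac{2(F(\bar{\mathbf{x}}_0)-F(\bar{\mathbf{x}}_T))}{\eta T} = O\!\bigl((NT)^{-2/3}\bigr)$. The inner-batch bias term $\tfrac{3L_g^2 S_f^2\sigma_g^2}{m}$ is unchanged. For the third term, $\alpha N B T = (N^{1/3}/T^{2/3})\cdot N\cdot (T^{1/3}/N^{2/3})\cdot T = N^{2/3}T^{2/3}$, so it is $O((NT)^{-2/3})$. The fourth term $\tfrac{6\alpha L_g^2 L_f^2}{Nb}$ becomes $O(N^{-2/3}T^{-2/3}) = O((NT)^{-2/3})$. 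The fifth term $\tfrac{96\lambda^2 L_g^2 L_f^2}{(1-\lambda^2)^3 BT}$ gives the $\tfrac{N^{2/3}}{T^{4/3}}$ scaling; the sixth term $\tfrac{256\lambda^2\alpha^2 L_f^2 L_g^2}{(1-\lambda^2)^3}$ gives the same $\tfrac{N^{2/3}}{T^{4/3}}$ scaling, so the two combine into the $\tfrac{352\lambda^2 L_f^2 L_g^2}{(1-\lambda^2)^3}\,O(N^{2/3}/T^{4/3})$ term. The final term $\tfrac{64\lambda^4\,\mathbb{E}\|\nabla\hat{\mathbf{F}}_0\|^2}{(1-\lambda^2)^3 NT}$ carries over unchanged. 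Collecting these seven contributions yields exactly the stated bound.

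The only place requiring care, and where I expect the main (if minor) obstacle, is the consistency check between the momentum-coupling identity $\alpha = 72 S_F^2\eta^2/(Nb)$ that \Cref{thm:2} imposes and the stand-alone rate prescription $\alpha = O(N^{1/3}/T^{2/3})$: I must absorb the factor $S_F^2$ into the $O(\cdot)$ and confirm that the admissibility condition on $\eta$ holds for $T$ large enough depending on $N$ and $(1-\lambda^2)$. After that, the rest is routine arithmetic in exponents of $N$ and $T$, term by term.
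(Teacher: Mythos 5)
Your proposal is correct and follows essentially the same route as the paper: the paper's own derivation of \Cref{corollary:2} is exactly this direct substitution of $b=O(1)$, $\eta = O(N^{2/3}/T^{1/3})$, $\alpha = O(N^{1/3}/T^{2/3})$, $B = O(T^{1/3}/N^{2/3})$ into the seven terms of \Cref{thm:2}, with the fifth and sixth terms merging into the $352$-coefficient $O(N^{2/3}/T^{4/3})$ term. Your additional consistency check of the admissibility condition on $\eta$ and the coupling $\alpha = 72 S_F^2\eta^2/(Nb)$ is a slightly more careful bookkeeping than the paper performs, but it does not change the argument.
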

Based on the result in \cref{thm:2}, we can get the convergence result of SLATE-M. 
\begin{remark}
According to \Cref{corollary:2}, without loss of generality,  Let $m = O(\varepsilon^{-2})$, $b = O(1)$ $\eta = O(\frac{N^{2/3}}{T^{1/3}}), \alpha = O(\frac{N^{1/3}}{T^{2/3}})$, and $B = O(\frac{T^{1/3}}{N^{2/3}})$, we know to make $\frac{1}{T} \sum_{t=0}^{T - 1} \mathbb{E} \|\nabla F(\bar{\mathbf{x}}_{t})\|^2 \leq \varepsilon^{2}$, we have iterations $T$ should be as large as $ O( N^{-1} \varepsilon^{-3})$.

In \Cref{alg:2}, in each iteration, we sample $b + m$ data points to build the biased stochastic gradients $\mathbf{u}_{n,t}$, and need $T$ iterations. Thus, our SLATE-M algorithm has a sample complexity of   $m \cdot T = O(N^{-1} \varepsilon^{-5}) $, for finding an $\epsilon$-stationary point, which also achieves the linear speedup of our algorithm with respect to the number of worker nodes. 
\end{remark}
\begin{remark}
The sample complexity of  $O(N^{-1} \varepsilon^{-5})$ in SLATE-M matches the best convergence complexity achieved by the single-machine stochastic method for conditional stochastic optimization in the online setting, and also match the lower bound for the online stochastic algorithms \cite{hu2020biased}.
\end{remark}

\section{Experiments}
In this section, we conduct extensive experiments on imbalanced benchmark datasets to show the efficiency of our algorithms. All experiments are run over a machine with AMD EPYC 7513 32-Core Processors and NVIDIA RTX A6000 GPU. The source code is available at \textbf{https://github.com/xidongwu/D-AUPRC}.

The goal of our experiments is two-fold: (1) to verify that \eqref{eq:4} is the surrogate function of AUPRC and illustrate that directly optimizing the AUPRC in the multi-party collaborative training would improve the model performance compared with traditional loss optimization, and (2) to show the efficiency of our methods for AUPRC maximization. 

\begin{table*}[!t]
  \caption{Statistics of benchmark datasets } \label{tb1}
  \setlength{\tabcolsep}{12pt}
  \begin{tabular}{ccccc}
    \toprule
    Data Set &Training examples & Testing examples & Feature Size & Proportion of positive data\\
    \midrule
w7a & 24692 & 25057 & 300 & 2.99\%  \\
w8a & 49749 & 14951 & 300 & 2.97 \% \\
MNIST & 60000 & 10000 & $28\times 28$ & 16.7\% \\
Fashion MNIST  & 60000 & 10000 & $28\times 28$ & 16.7 \% \\
CIFAR-10 & 50000 & 10000 & $3 \times 32 \times 32$ & 16.7 \% \\
Tiny-ImageNet & 100000 & 10000 & $3 \times 64 \times 64$ & 16.7 \% \\
    \bottomrule
  \end{tabular}
\end{table*}

\begin{figure*}[!t]
     \centering
     \begin{subfigure}[b]{0.26\textwidth}
         \centering
         \includegraphics[width=\textwidth]{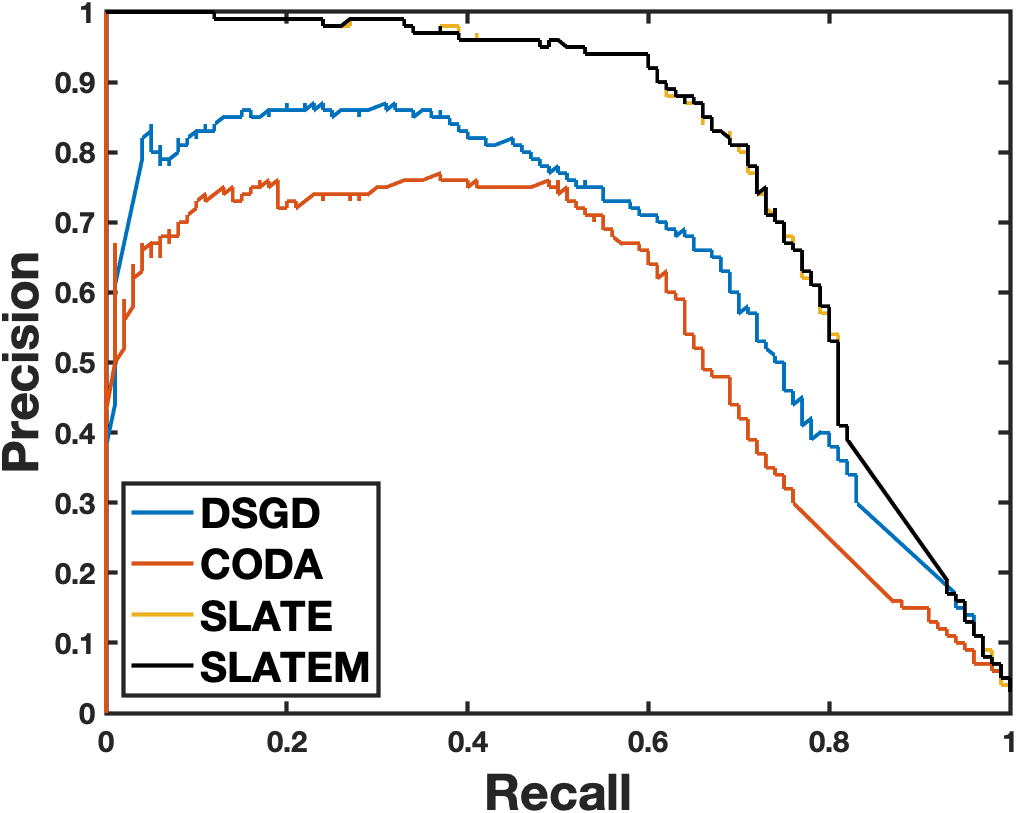}
         \caption{w7a dataset}
         \label{fig:1_1}
     \end{subfigure}
     \qquad
     \begin{subfigure}[b]{0.26\textwidth}
         \centering
         \includegraphics[width=\textwidth]{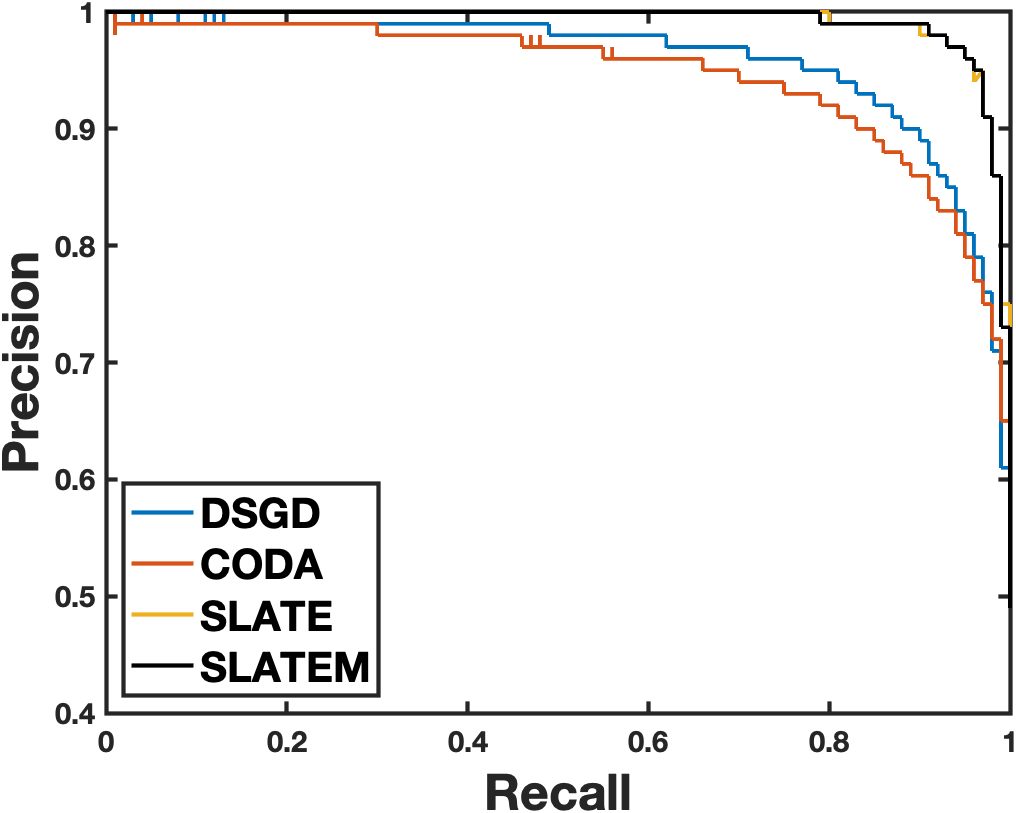}
         \caption{MNIST dataset}
         \label{fig:1:3}
     \end{subfigure}
     \qquad
     \begin{subfigure}[b]{0.26\textwidth}
         \centering
         \includegraphics[width=\textwidth]{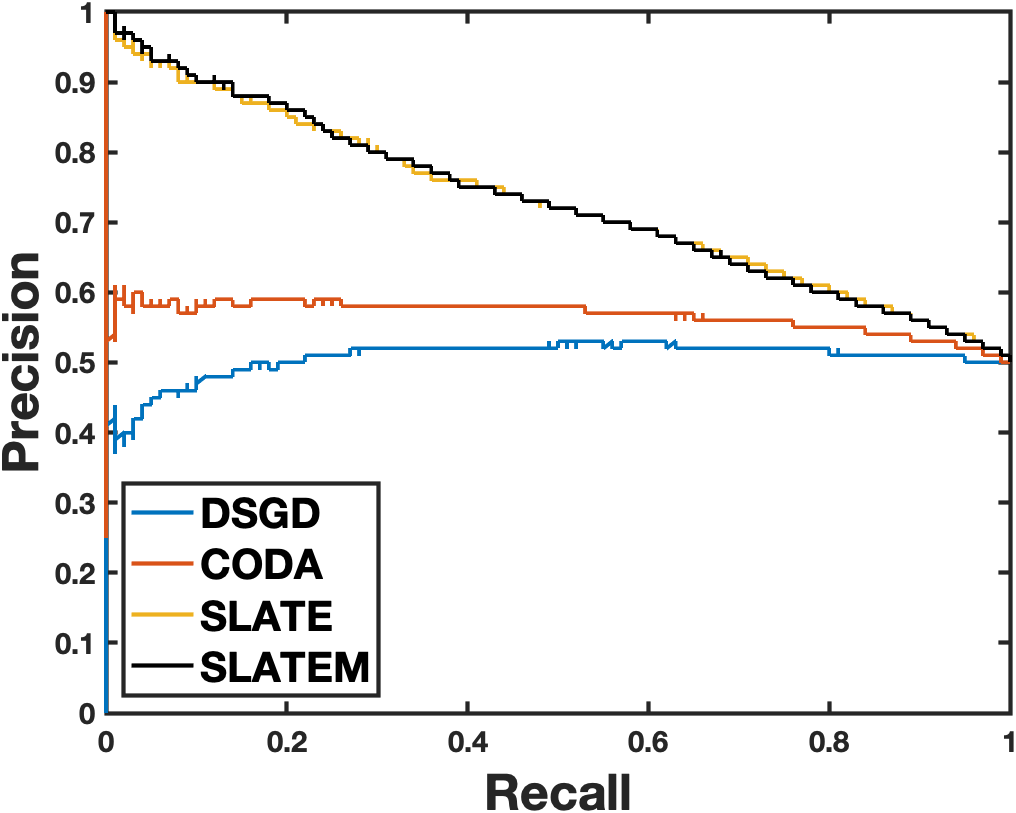}
         \caption{CIFAR-10 dataset}
         \label{fig:1_2}
     \end{subfigure}
     
        \caption{Precision-Recall curves of the models on the testing set}
        \label{fig:1}
\end{figure*}

\begin{figure*}[!t]
     \centering
     \begin{subfigure}[b]{0.26\textwidth}
         \centering
         \includegraphics[width=\textwidth]{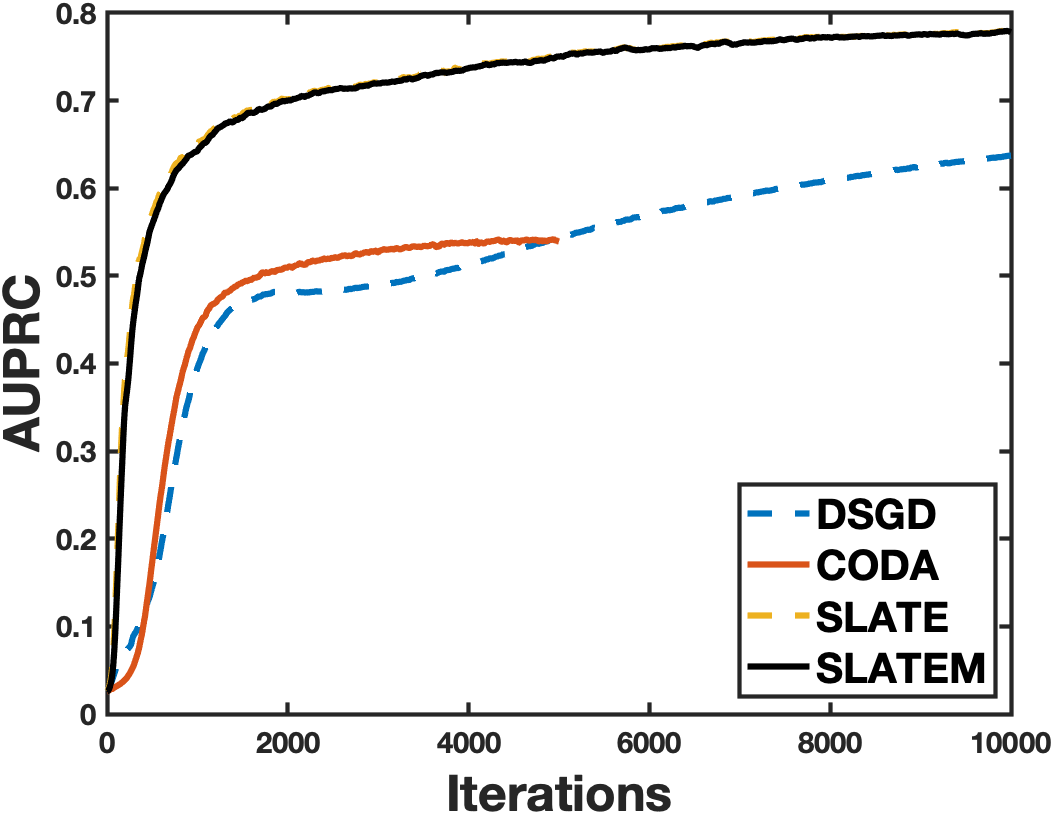}
         \caption{w7a  dataset}
         \label{phishing_1}
     \end{subfigure}
     \qquad
     \begin{subfigure}[b]{0.26\textwidth}
         \centering
         \includegraphics[width=\textwidth]{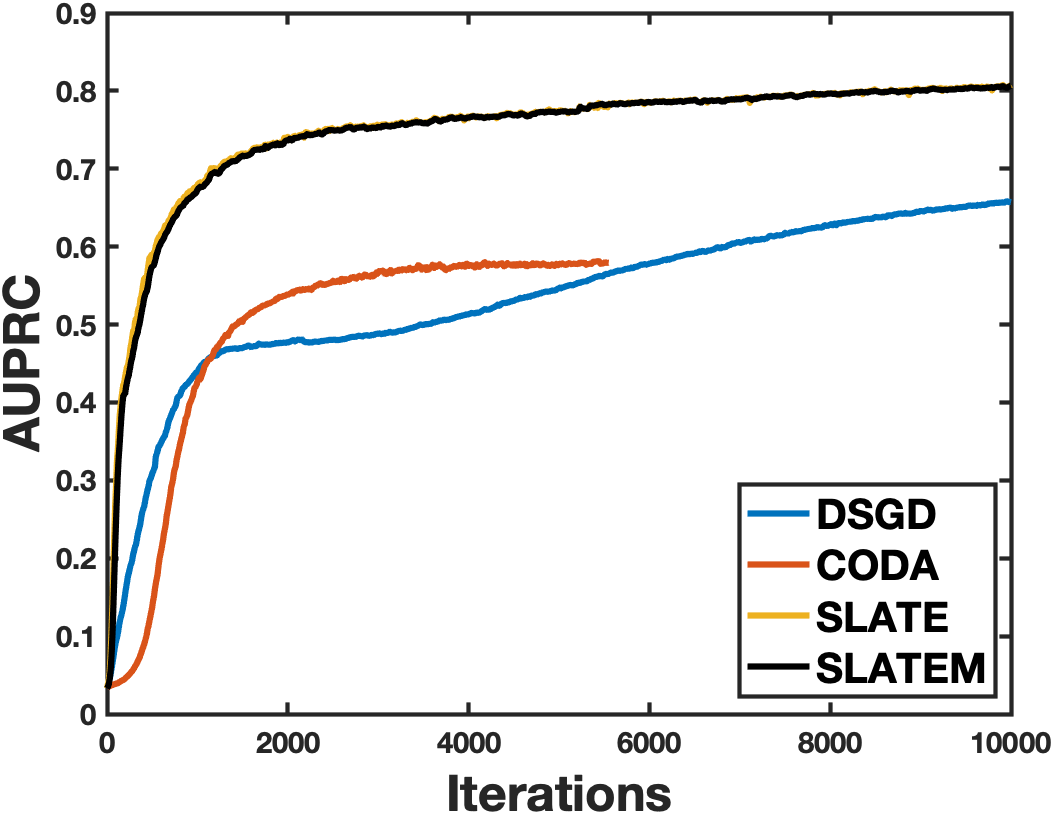}
         \caption{w8a dataset}
         \label{a6a_1}
     \end{subfigure}
     \qquad
     \begin{subfigure}[b]{0.26\textwidth}
         \centering
         \includegraphics[width=\textwidth]{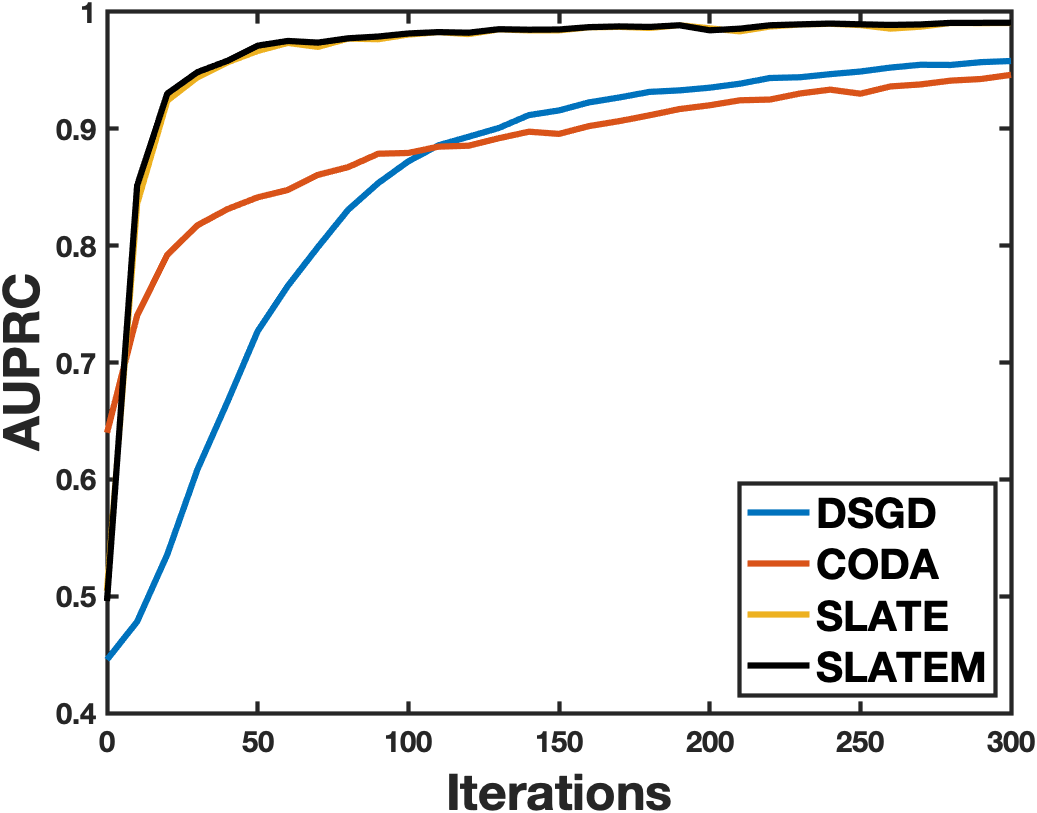}
         \caption{MNIST dataset}
         \label{a7a_1}
     \end{subfigure}
     
    \medskip
     \begin{subfigure}[b]{0.26\textwidth}
         \centering
         \includegraphics[width=\textwidth]{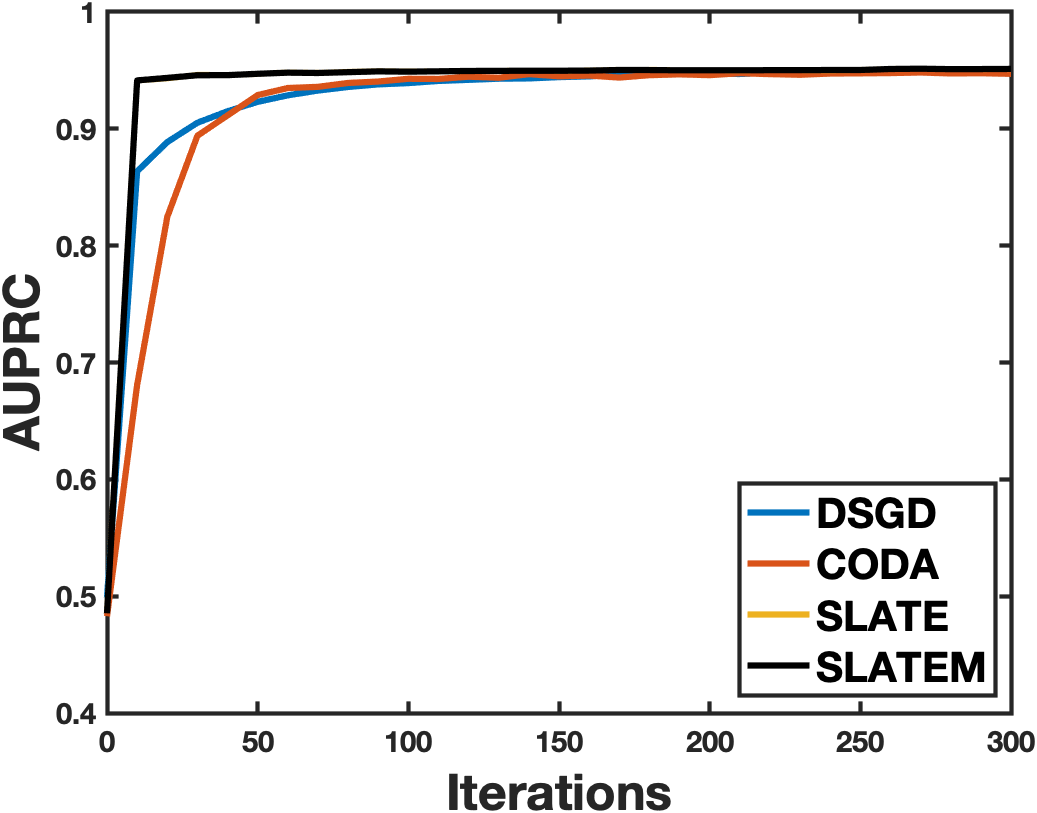}
         \caption{Fashion MNIST dataset}
         \label{w7a_1}
     \end{subfigure}
     \qquad
     \begin{subfigure}[b]{0.26\textwidth}
         \centering
         \includegraphics[width=\textwidth]{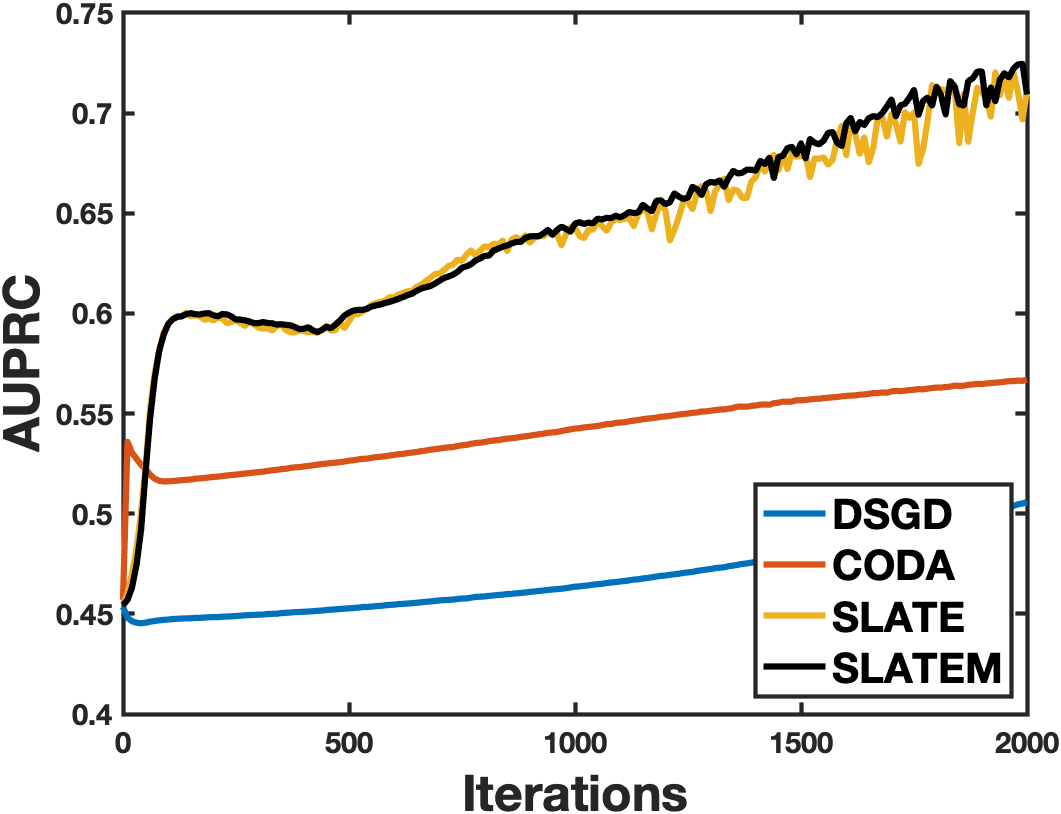}
         \caption{CIFAR-10 dataset}
         \label{w8a_1}
     \end{subfigure}
     \qquad
     \begin{subfigure}[b]{0.26\textwidth}
         \centering
         \includegraphics[width=\textwidth]{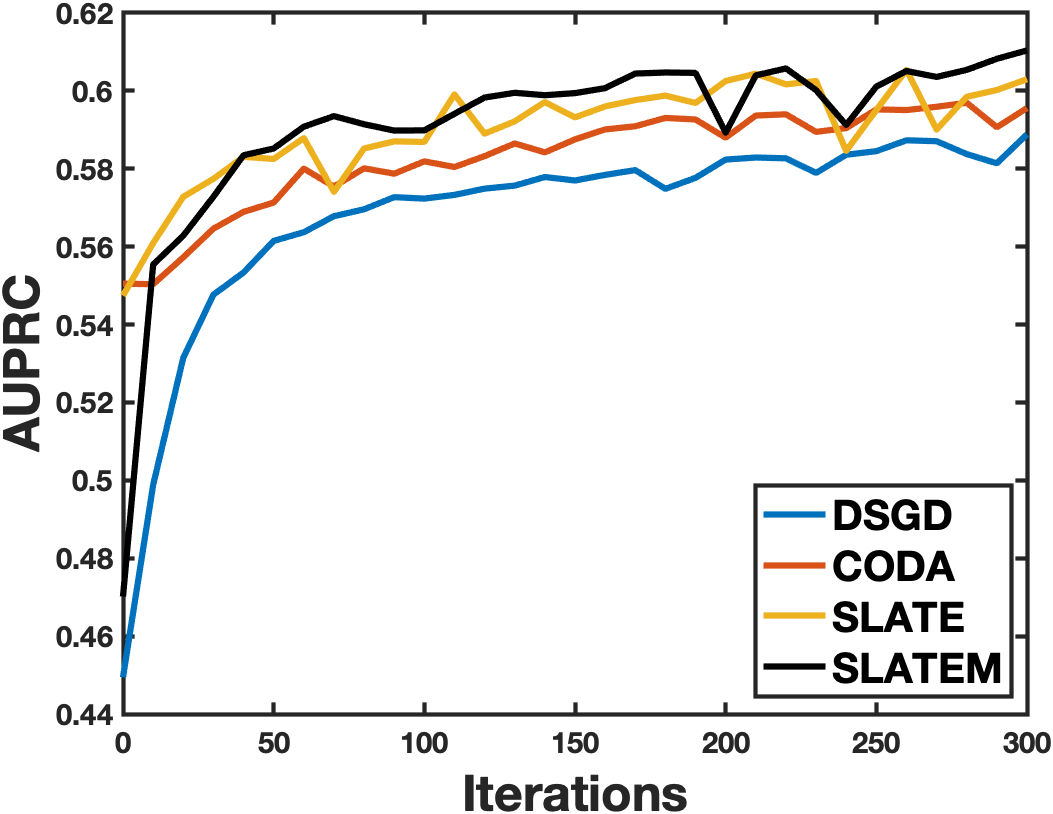}
         \caption{Tiny-ImageNet dataset}
         \label{covtype_1}
     \end{subfigure}
        \caption{AP vs Iterations on the test set } \label{fig:2}
\end{figure*}

\begin{table*}[!t]
  \caption{Final averaged AP scores on the testing data} \label{tb2}
  \setlength{\tabcolsep}{12pt}
  \centering
  \begin{tabular}{lcccccc}
    \hline
    Method & w7a & w8a & MNIST & Fashion MNIST & CIFAR-10 & Tiny-ImageNet\\
    \hline
D-PSGD & 0.6372 & 0.6585 & 0.9592 & 0.9497 & 0.5058 & 0.5906 \\
CODA   & 0.5414 & 0.5786 & 0.9460 & 0.9474 & 0.5668 & 0.5971 \\
SLATE  & 0.7788 & 0.8072 & 0.9911 & 0.9515 & 0.7279 & 0.6032  \\
SLATEM & 0.7778 & 0.8063 & 0.9913 & 0.9515 & 0.7285 & 0.6131 \\
    \hline
  \end{tabular}
\end{table*}
\subsection{Configurations}
\textbf{Datasets}:
We conduct experiments on imbalanced benchmark datasets from LIBSVM data \footnote{https://www.csie.ntu.edu.tw/~cjlin/libsvmtools/datasets/}: w7a and w8a, and four typical image datasets: MNIST dataset, Fashion-MNIST dataset, CIFAR-10, and Tiny-ImageNet dataset (seen in \Cref{tb1}). For w7a and w8a, we scale features to [0, 1]. For image datasets, following \citep{qi2021stochastic, wang2021momentum}, we construct the imbalanced binary-class versions as follows: Firstly, the first half of the classes (0 - 4) in the original MNIST, Fashion-MNIST, and CIFAR-10, and (0 - 99) in Tiny-ImageNet datasets are converted to be the negative class, and another half of classes are considered to be the positive class. Because the original distributions of image datasets are balanced, we randomly drop 80\% of the positive examples in the training set to make them imbalanced and keep test sets of image datasets unchanged. Finally, we evenly partition each datasets into disjoint sets and distribute datasets among worker nodes.

\textbf{Models}:
For w7a and w8a, we use two layers of neural networks with the dimension of the hidden layer as 28. The RELU is used as the activation function. For MNIST, Fashion MNIST and Cifar-10 data sets, we choose model architectures from \cite{ wu2022faster} for our imbalanced binary image classification task. For Tiny-ImageNet, we choose ResNet-18 \cite{he2016deep} as the classifier. In our algorithms, We modify the output of all models to 1 and the sigmoid function is followed since we consider binary classification tasks. 

In the experiments, the number of worker nodes is set as N = 20 and we use the ring-based topology as the communication network structure. \citep{lian2017can}.

\subsection{Comparison with Existing Multi-Party Stochastic Methods}

\textbf{Baselines}: We compare our algorithms with two baselines: 1) D-PSGD \cite{lian2017can}, a SGD-like serverless multi-party collaborative algorithm with the Cross-Entropy loss as the optimization objective. D-PSGD runs SGD locally and then computes the neighborhood weighted average by fetching model parameters from neighbors; 2) CODA, a typical federated learning algorithm for optimizing minimax formulated AUROC loss \cite{guo2020communication, yuan2021federated}. CODA runs local SGDA with the periodic model average in the federated learning setting. We convert it into the serverless multi-party setting and run local SGDA, following the consensus step to update the models. Gradient tracking steps are ignored. In the experiments, we ignore the gradient tracking steps to reduce computation and communication costs.

\noindent\textbf{Parameter tuning}:  We perform a grid search to tune all methods carefully. The total batch size m drawn from $\mathcal{D}$ is chosen in the set $\{20, 20, 20, 20, 60, 200\}$. For SLATE and SLATE-M, the positive batch size b in the total batch size is chosen in the set $\{2, 2, 3, 5, 20, 35\}$, and  m - b negative data points. The squared hinge loss is used as \eqref{eq:2}, $\alpha$ is chosen from $\{0.1, 0.9\}$ and the margin parameter s is selected from $\{0.1, 0.3, 0.5, 0.7, 0.9\}$. The step size is selected from the set $\{0.01, 0.005, 0.001\}$. For the D-PSGD, the step size is chosen in the set $\{0.01, 0.005, 0.001\}$. For the CODA, the 
step size for minimum variable is chose from the set $\{0.01, 0.005, 0.001\}$ and that for the maximum variable is chosen from the set $\{0.0001, 0.0005, 0.001\}$. Moreover, we use Xavier normal to initialize  models.
\begin{table}[!]
  \caption{SLATE test accuracy on CIFAR-10 with margin parameter $s$, and positive batch size b (B=60)} \label{tb3}
  \centering
  \begin{tabular}{cccccc}
    \hline
    Margin & 0.1 & 0.3 & 0.5 & 0.7 & 0.9\\
    \hline
b = 5 &  0.6270 & 0.5971 & 0.5992 & 0.5928 & 0.5765 \\
b = 10 & 0.6910 & 0.5887 & 0.5956 & 0.5986 & 0.5989 \\
b = 15 & 0.7248 & 0.5895 & 0.5902 & 0.5952 & 0.5979 \\
b = 20 & 0.7279 & 0.6001 & 0.5870 & 0.5929 & 0.5962\\
b = 25 & 0.7216 & 0.6038 & 0.5876 & 0.5933 & 0.5962\\
    \hline
  \end{tabular}
\end{table}

\begin{table}[!]
  \caption{SLATE-M test accuracy on CIFAR-10 with margin parameter $s$,  positive batch size b (B = 60), and  $\alpha$} \label{tb4}
  \centering
  \begin{tabular}{lcccc}
    \hline
    Margin & 0.1 ($\alpha$ = 0.1) & 0.1 ($\alpha$ = 0.9) & 0.3 ($\alpha$ = 0.1) & 0.3 ($\alpha$ = 0.9) \\
    \hline
b = 15 &  0.7273 & 0.7272 & 0.5853 & 0.5895 \\
b = 20 &  0.7285 & 0.7289 & 0.5986 & 0.6001 \\
b = 25 &  0.7167 & 0.7206 & 0.6024 & 0.6036 \\
    \hline
  \end{tabular}
\end{table}

\noindent\textbf{Experimental results}:
\Cref{tb2} summarizes the final results on the test sets. In order to present the advantage of optimization of AUPRC, we plot the Precision-Recall curves of final models on testing sets of W7A, MNIST, and CIFAR-10 when training stop in \Cref{fig:1}. Then we illustrate the convergence curve on test sets in \Cref{fig:2}. Results show that our algorithms (\emph{i.e.}, SLATE, SLATE-M) can outperform baselines in terms of AP with a great margin across each benchmark, regardless of model structure. The experiments verify that 1) the objective function \eqref{eq:4} is a good surrogate loss function of AUPRC and directly optimizing the AUPRC in the multi-party collaborative training would improve the model performance compared with traditional loss optimization in the imbalanced data mining. 2) Although CODA, with minimax formulated AUROC loss, has a relatively better performance compared with D-PSGD in large-scale datasets (CIFAR-10 and Tiny-ImageNet), the results verify the previous results that an algorithm that maximizes AUROC does not necessarily maximize AUPRC. Therefore, designing the algorithms for AUPRC in multi-party collaborative training is necessary. 3) our algorithms can efficiently optimize the \eqref{eq:4} and largely improve the performance in terms of AUPRC in multi-party collaborative imbalanced data mining. 4) In datasets CIFAR-10 and Tiny-ImageNet, SLATE-M has better performance compared with SLATE.

\noindent\textbf{Ablation study}: In this part, we study the effect of margin parameters, positive batch size, and $\alpha $ of SLATE-M. The results are listed in \Cref{tb3} and \Cref{tb4}.

\section{Conclusion}
In this paper, we systematically studied how to design serverless multi-party collaborative learning algorithms to directly maximize AUPRC and also provided the theoretical guarantee on algorithm convergence. To the best of our knowledge, this is the first work to optimize AUPRC in the multi-party collaborative training. We cast the AUPRC maximization problem into non-convex two-level stochastic optimization functions under the multi-party collaborative learning settings as the problem \eqref{eq:4}, and proposed the first multi-party collaborative learning algorithm, ServerLess biAsed sTochastic gradiEnt (SLATE). Theoretical analysis shows that SLATE has a sample complexity of $O(\varepsilon^{-6})$ and shows a linear speedup respective to the number of worker nodes. Furthermore, we proposed a stochastic method (\emph{i.e.}, SLATE-M) based on the momentum-based variance-reduced technique to reduce the convergence complexity for maximizing AP in multi-party collaborative optimization. Our methods reach iteration complexity of $O\left(1 / \epsilon^{5}\right)$, which matches the best convergence complexity achieved by the single-machine stochastic method for conditional stochastic optimization in the online setting, and also matches the lower bound for the online stochastic algorithms. Unlike existing single-machine methods that just focus on finite-sum settings and must keep an inner state for each positive data point, we consider the stochastic online setting. The extensive experiments on various data sets compared with previous stochastic multi-party collaborative optimization algorithms validate the effectiveness of our methods. Experimental results also demonstrate that directly optimizing the AUPRC in the multi-party collaborative training would largely improve the model performance compared with traditional loss optimization.


\newpage
\bibliographystyle{ACM-Reference-Format}
\bibliography{sample-base}


\newpage
\appendix
\onecolumn
\section{Supplementary material}



\subsection{Basic Lemma}
We draw one sample $\xi_n$ from $\mathcal{D}^{+}_n$ and $m$ sample $\xi^{\prime}_n$ from $\mathcal{D}_n$ as $\mathcal{B}_{n}$. We define 
\begin{align}
g_n(x; \mathbf{\xi}_n) = \mathbb{E}_{\mathbf{\xi}_n^{\prime} \sim \mathcal{D}_n} g_n \left(x; \mathbf{\xi}_n, \mathbf{\xi}_n^{\prime} \right) \nonumber
\end{align} 
\begin{align}
\hat{g}_n(x, \xi_n) = \frac{1}{m} \sum_{\xi_n^{\prime} \in \mathcal{B}_{n}} g_n(x, \xi_n; \xi_n^{\prime}) \nonumber
\end{align}
\begin{align}
\hat{F}_n(x; \xi_n, \mathcal{B}_{n}) = f(\hat{g}_n(x, \xi_n)) \nonumber
\end{align}

We also define 
\begin{align}
&\nabla F_n (x) = \nabla \mathbb{E}_{\xi_n} f(g_n(x, \xi_n))  = \mathbb{E}_{\xi_n}\left[\nabla\left(f(g_n(x, \xi_n))\right)\right] 
= \mathbb{E}_{\xi_n}\left[\nabla f(g_n(x, \xi_n)) \cdot \nabla g_n(x, \xi_n)\right] \nonumber
\end{align}
\begin{align}
\hat{F}_n(x) = \mathbb{E}\hat{F}_n(x; \xi_n, \mathcal{B}_{n}) = \mathbb{E} \left[f(\hat{g}_n(x, \xi_n))\right] \nonumber
\end{align}
\begin{align}
\nabla \hat{F}_n(x) = \mathbb{E}\nabla\hat{F}_n(x; \xi_n, \mathcal{B}_{n})
\end{align}

For convenience, we denote 
\begin{align} \bar{\mathbf{x}}_t = \frac{1}{N} \sum_{n=1}^N \mathbf{x}_{n, t}, \quad \bar{\mathbf{v}}_t = \frac{1}{N} \sum_{n=1}^N \mathbf{v}_{n,t},  \quad \bar{\mathbf{u}}_t = \frac{1}{N} \sum_{n=1}^N \mathbf{u}_{n,t} \quad F(\bar{\mathbf{x}}) = \frac{1}{N} \sum_{n = 1}^{N} F_n (\bar{\mathbf{x}}), \nabla \hat{F}(\bar{\mathbf{x}}) = \frac{1}{N} \sum_{n = 1}^{N} \nabla \hat{F}_n(\bar{\mathbf{x}}) 
\end{align}
and $\nabla \hat{\mathbf{F}}_t = [\nabla \hat{F}_1(\mathbf{x}_1)^{\top}, \nabla \hat{F}_2(\mathbf{x}_2)^{\top}, \dots, \nabla \hat{F}_N(\mathbf{x}_N)^{\top}]^{\top} \in \mathbb{R}^{n d}$.

\begin{lemma} (Lemma 6 in \cite{xin2021hybrid}) \label{lem:A3}
Let $\{ V_t \}_{t \geq 0}, \{ R_t \}_{t\geq 0}$ and $\{Q_t \}_{t\geq 0}$ be non-negative sequences and $C \geq 0$ be some constant such that $V_t \leq q V_{t-1}+q R_{t-1}+Q_t+C, \forall t \geq 1$, where $q \in(0,1)$ Then the following inequality holds: $\forall T \geq 1$,
\begin{align}
\sum_{t=0}^{T-1} V_t \leq \frac{V_0}{1-q}+\frac{1}{1-q} \sum_{t=0}^{T-2} R_t+\frac{1}{1-q} \sum_{t=1}^{T-1} Q_t + \frac{C T}{1-q}  
\end{align}
\end{lemma}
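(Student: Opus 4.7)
The plan is to follow a standard discrete Gronwall-style argument: sum the recursive inequality over $t = 1, \ldots, T-1$, shift indices to express everything in terms of a single summation of $V_t$, absorb that sum onto the left-hand side, and divide through by $1-q$.

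First I would sum the hypothesis $V_t \leq q V_{t-1} + q R_{t-1} + Q_t + C$ from $t = 1$ to $T-1$, obtaining
\begin{align*}
\sum_{t=1}^{T-1} V_t \;\leq\; q \sum_{t=0}^{T-2} V_t \;+\; q \sum_{t=0}^{T-2} R_t \;+\; \sum_{t=1}^{T-1} Q_t \;+\; (T-1) C,
\end{align*}
after reindexing $V_{t-1}$ and $R_{t-1}$. Then I would add $V_0$ to both sides so that the LHS becomes $\sum_{t=0}^{T-1} V_t$. Since the sequences are non-negative, I can enlarge $\sum_{t=0}^{T-2} V_t$ on the right to $\sum_{t=0}^{T-1} V_t$ at no cost, giving
\begin{align*}
\sum_{t=0}^{T-1} V_t \;\leq\; V_0 \;+\; q \sum_{t=0}^{T-1} V_t \;+\; q \sum_{t=0}^{T-2} R_t \;+\; \sum_{t=1}^{T-1} Q_t \;+\; (T-1) C.
\end{align*}

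Next I would move the $q \sum_{t=0}^{T-1} V_t$ term to the left, factor $(1-q)$, and divide by it (legal since $q \in (0,1)$). This yields
\begin{align*}
\sum_{t=0}^{T-1} V_t \;\leq\; \frac{V_0}{1-q} \;+\; \frac{q}{1-q} \sum_{t=0}^{T-2} R_t \;+\; \frac{1}{1-q} \sum_{t=1}^{T-1} Q_t \;+\; \frac{(T-1) C}{1-q}.
\end{align*}
Finally, I would loosen $\tfrac{q}{1-q} \leq \tfrac{1}{1-q}$ and $(T-1) \leq T$, both trivially valid because $q \in (0,1)$ and the $R_t$ are non-negative, to arrive at the stated bound.

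I do not anticipate a real obstacle: this is a purely algebraic telescoping argument and the only subtle step is the non-negativity-based extension of the $V$-summation range, which is what lets the $q \sum V_t$ term be absorbed on the left. The bookkeeping of shifted indices $t \mapsto t-1$ on the $V$ and $R$ sums is the one spot where a careful check is warranted, but nothing structural stands in the way.
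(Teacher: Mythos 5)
Your argument is correct. The only care points are the index shift when summing the recursion from $t=1$ to $T-1$ and the absorption step, and you handle both properly: reindexing gives $q\sum_{t=0}^{T-2}V_t + q\sum_{t=0}^{T-2}R_t$, adding $V_0$ completes the left-hand sum, non-negativity of $V_{T-1}$ lets you enlarge $\sum_{t=0}^{T-2}V_t$ to $\sum_{t=0}^{T-1}V_t$, and dividing by $1-q>0$ plus the loosenings $q/(1-q)\le 1/(1-q)$ and $(T-1)C\le TC$ (valid since $R_t\ge 0$, $C\ge 0$) give exactly the stated bound; the degenerate case $T=1$ also goes through since all the trailing sums are empty. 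Note that the paper itself gives no proof of this lemma—it is quoted as Lemma 6 of \cite{xin2021hybrid}—and the proof there proceeds by unrolling the recursion to $V_t\le q^tV_0+\sum_{s=1}^{t}q^{t-s}(qR_{s-1}+Q_s+C)$ and then summing geometric series; your sum-and-absorb variant reaches the same conclusion more directly and is equally rigorous, so there is no gap.
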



\section{SLATE}
\subsection{Proofs of the Intermediate Lemmas}
\begin{lemma} \label{lem:B1}
Let Assumptions \ref{ass:1}, \ref{ass:2} hold, and $F$ is $S_F$-smooth, we have 
\begin{align}
\mathbb{E} F(\bar{\mathbf{x}}_{t+1}) &  \leq \mathbb{E} F(\bar{\mathbf{x}}_{t}) - (\frac{\eta}{2} - \eta^2 S_F ) \mathbb{E} \|\nabla \hat{F}(\bar{\mathbf{x}}_t) \|^2 - \frac{\eta}{2} \mathbb{E}\| \nabla F(\bar{\mathbf{x}}_{t})\|^2 \nonumber\\
&+ \frac{\eta S_F^2}{N} \sum_{n=1}^{N}\|x_{n,t} - \bar{\mathbf{x}}_t\|^2 + \frac{\eta L_g^2 S_f^2 \sigma_g^2}{m} + \frac{\eta^2 S_F L_f^2 L_g^2}{N}
\end{align}
\end{lemma}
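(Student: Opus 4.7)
The plan is to start from $S_F$-smoothness of $F$ applied along the averaged iterate update. By Lemma~A2(b), the gradient-tracking identity $\bar{\mathbf{v}}_t = \bar{\mathbf{u}}_t$ yields $\bar{\mathbf{x}}_{t+1} = \bar{\mathbf{x}}_t - \eta \bar{\mathbf{u}}_t$, so smoothness gives
\begin{align*}
F(\bar{\mathbf{x}}_{t+1}) \leq F(\bar{\mathbf{x}}_t) - \eta\langle \nabla F(\bar{\mathbf{x}}_t), \bar{\mathbf{u}}_t\rangle + \frac{\eta^2 S_F}{2}\|\bar{\mathbf{u}}_t\|^2 .
\end{align*}
Taking the conditional expectation, observe that $\mathbb{E}_t[\bar{\mathbf{u}}_t] = \frac{1}{N}\sum_n \nabla \hat{F}_n(\mathbf{x}_{n,t})$, which differs from the target gradient $\nabla F(\bar{\mathbf{x}}_t)$ through two distinct sources: the conditional-sampling bias $\nabla F_n - \nabla \hat{F}_n$ (controlled by Lemma~A1(a)) and the consensus drift $\nabla \hat{F}_n(\mathbf{x}_{n,t}) - \nabla \hat{F}_n(\bar{\mathbf{x}}_t)$ (controlled by $S_F$-smoothness from Lemma~A1(b)).

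Next, I would decompose $\mathbb{E}_t[\bar{\mathbf{u}}_t] = \nabla \hat{F}(\bar{\mathbf{x}}_t) + \delta_t$ with $\delta_t := \frac{1}{N}\sum_n[\nabla \hat{F}_n(\mathbf{x}_{n,t}) - \nabla \hat{F}_n(\bar{\mathbf{x}}_t)]$, so that Jensen's inequality plus Lemma~A1(b) gives $\|\delta_t\|^2 \leq \frac{S_F^2}{N}\sum_n\|\mathbf{x}_{n,t} - \bar{\mathbf{x}}_t\|^2$. For the linear term, I would apply the polarization identity $-2\langle a,b\rangle = \|a-b\|^2 - \|a\|^2 - \|b\|^2$ with $a = \nabla F(\bar{\mathbf{x}}_t)$ and $b = \nabla \hat{F}(\bar{\mathbf{x}}_t)$ to extract $-\tfrac{\eta}{2}\|\nabla F(\bar{\mathbf{x}}_t)\|^2 - \tfrac{\eta}{2}\|\nabla \hat{F}(\bar{\mathbf{x}}_t)\|^2$ at the cost of $\tfrac{\eta}{2}\|\nabla F(\bar{\mathbf{x}}_t) - \nabla \hat{F}(\bar{\mathbf{x}}_t)\|^2$, which Lemma~A1(a) bounds by $\tfrac{\eta L_g^2 S_f^2 \sigma_g^2}{2m}$. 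The residual cross-term $-\eta\langle \nabla F(\bar{\mathbf{x}}_t), \delta_t\rangle$ is then absorbed by a Young-style inequality tuned so that the induced $\|\nabla F(\bar{\mathbf{x}}_t)\|^2$ is cancelled against available slack from the polarization split, while the $\|\delta_t\|^2$ contribution is pushed into the consensus budget $\frac{\eta S_F^2}{N}\sum_n\|\mathbf{x}_{n,t}-\bar{\mathbf{x}}_t\|^2$.

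For the quadratic term $\frac{\eta^2 S_F}{2}\mathbb{E}_t\|\bar{\mathbf{u}}_t\|^2$, I would split mean and variance: $\mathbb{E}_t\|\bar{\mathbf{u}}_t\|^2 = \|\mathbb{E}_t[\bar{\mathbf{u}}_t]\|^2 + \mathbb{E}_t\|\bar{\mathbf{u}}_t - \mathbb{E}_t[\bar{\mathbf{u}}_t]\|^2$. Using the independence of sampling across the $N$ workers together with the per-sample bound in Lemma~A1(c), the variance is at most $\frac{L_f^2 L_g^2}{N}$, producing the final noise term $\frac{\eta^2 S_F L_f^2 L_g^2}{N}$. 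Expanding $\|\mathbb{E}_t[\bar{\mathbf{u}}_t]\|^2 \leq 2\|\nabla \hat{F}(\bar{\mathbf{x}}_t)\|^2 + 2\|\delta_t\|^2$ then yields the positive $\eta^2 S_F \|\nabla \hat{F}(\bar{\mathbf{x}}_t)\|^2$ that, combined with the $-\tfrac{\eta}{2}\|\nabla \hat{F}(\bar{\mathbf{x}}_t)\|^2$ from the polarization step, gives the stated coefficient $-(\tfrac{\eta}{2}-\eta^2 S_F)$ on $\|\nabla \hat{F}(\bar{\mathbf{x}}_t)\|^2$; the extra $\|\delta_t\|^2$ joins the consensus budget.

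I expect the main obstacle to be the simultaneous production of both negative curvature-like terms $-\tfrac{\eta}{2}\|\nabla F(\bar{\mathbf{x}}_t)\|^2$ and $-(\tfrac{\eta}{2}-\eta^2 S_F)\|\nabla \hat{F}(\bar{\mathbf{x}}_t)\|^2$, because naive Young-type bounds on the cross term $\langle \nabla F, \delta_t\rangle$ tend to cancel the $\|\nabla F\|^2$ slack entirely; the bookkeeping of Young's constants so that consensus drift is routed only into the $\|\mathbf{x}_{n,t}-\bar{\mathbf{x}}_t\|^2$ summation (with the single factor $S_F^2/N$, not $2S_F^2/N$) is the delicate part. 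Once this split is fixed, the remaining steps are routine applications of Lemmas~A1(a)--(c) and Jensen's inequality, and the bounds collect into the stated form.
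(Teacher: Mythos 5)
Your overall skeleton (descent step along $\bar{\mathbf{x}}_{t+1}=\bar{\mathbf{x}}_t-\eta\bar{\mathbf{u}}_t$ via Lemma~\ref{lem:A2}(b), bias controlled by Lemma~\ref{lem:A1}(a), consensus drift by $S_F$-smoothness, variance by Lemma~\ref{lem:A1}(c) and independence across the $N$ nodes) matches the paper, but the step you yourself flag as ``the delicate part'' is a genuine gap, and with the decomposition you chose it cannot be closed with the stated constants. By splitting $\mathbb{E}_t[\bar{\mathbf{u}}_t]=\nabla\hat F(\bar{\mathbf{x}}_t)+\delta_t$ \emph{before} applying the polarization identity, you create the cross term $-\eta\langle\nabla F(\bar{\mathbf{x}}_t),\delta_t\rangle$; any Young bound $\le\frac{\eta\beta}{2}\|\nabla F(\bar{\mathbf{x}}_t)\|^2+\frac{\eta}{2\beta}\|\delta_t\|^2$ with $\beta>0$ necessarily weakens the coefficient of $-\frac{\eta}{2}\|\nabla F(\bar{\mathbf{x}}_t)\|^2$, and the lemma leaves no slack: it keeps exactly $-\frac{\eta}{2}$ there and exactly $\frac{\eta S_F^2}{N}$ on the consensus sum, with no step-size condition available to absorb leftovers. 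The same leakage occurs in your quadratic step, where expanding $\|\mathbb{E}_t\bar{\mathbf{u}}_t\|^2\le 2\|\nabla\hat F(\bar{\mathbf{x}}_t)\|^2+2\|\delta_t\|^2$ injects an extra $\eta^2 S_F\|\delta_t\|^2$ into the consensus budget beyond the stated coefficient.

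The paper avoids the cross term altogether: it applies $-\langle a,b\rangle=\frac{1}{2}\left(\|a-b\|^2-\|a\|^2-\|b\|^2\right)$ with $a=\nabla F(\bar{\mathbf{x}}_t)$ and $b=\frac{1}{N}\sum_n\nabla\hat F_n(\mathbf{x}_{n,t})$, i.e.\ with the conditional mean of $\bar{\mathbf{u}}_t$ taken as a whole. The negative term it produces (and later cancels in Theorem~\ref{thm:1}) is $\bigl\|\frac{1}{N}\sum_n\nabla\hat F_n(\mathbf{x}_{n,t})\bigr\|^2$; the statement's ``$\nabla\hat F(\bar{\mathbf{x}}_t)$'' is loose notation for this quantity, and your insistence on literally producing the gradient at the averaged point is exactly what creates the obstruction. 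With the paper's choice of $b$, the only error term is $\frac{\eta}{2}\bigl\|\nabla F(\bar{\mathbf{x}}_t)-\frac{1}{N}\sum_n\nabla\hat F_n(\mathbf{x}_{n,t})\bigr\|^2$, which is split by $\|p+q\|^2\le 2\|p\|^2+2\|q\|^2$ through the intermediate point $\frac{1}{N}\sum_n\nabla F_n(\mathbf{x}_{n,t})$ into $\frac{\eta S_F^2}{N}\sum_n\|\mathbf{x}_{n,t}-\bar{\mathbf{x}}_t\|^2$ (smoothness of each $F_n$) plus $\frac{\eta L_g^2 S_f^2\sigma_g^2}{m}$ (Lemma~\ref{lem:A1}(a)); the quadratic term is handled as $\frac{\eta^2 S_F}{2}\mathbb{E}\|\bar{\mathbf{u}}_t\|^2\le \eta^2 S_F\bigl(\mathbb{E}\|\bar{\mathbf{u}}_t-\mathbb{E}_t\bar{\mathbf{u}}_t\|^2+\|\mathbb{E}_t\bar{\mathbf{u}}_t\|^2\bigr)$, whose first piece gives $\frac{\eta^2 S_F L_f^2 L_g^2}{N}$ by independence and Lemma~\ref{lem:A1}(c) and whose second merges with the polarization term to yield the coefficient $-(\frac{\eta}{2}-\eta^2 S_F)$. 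Replacing your decomposition by this one repairs the argument; the remaining steps in your proposal are correct and coincide with the paper's.
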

\begin{proof}
Based on the smoothness of F, we have 
\begin{align}
\mathbb{E} F(\bar{\mathbf{x}}_{t+1}) & \stackrel{(a)}{\leq}  \mathbb{E} F(\bar{\mathbf{x}}_{t}) + \mathbb{E}\langle\nabla F(\bar{\mathbf{x}}_{t}), \bar{\mathbf{x}}_{t+1} - \bar{\mathbf{x}}_{t}\rangle + \frac{S_F}{2}\mathbb{E}\left\|\bar{\mathbf{x}}_{t+1} - \bar{\mathbf{x}}_{t}\right\|^{2}  \nonumber\\
&\stackrel{(b)}{\leq} \mathbb{E}F(\bar{\mathbf{x}}_{t}) - \eta \mathbb{E} \langle\nabla F(\bar{\mathbf{x}}_{t}), \bar{\mathbf{u}}_{t}\rangle + \frac{\eta^2 S_F }{2} \mathbb{E} \|\bar{\mathbf{u}}_{t}\|^{2} \nonumber\\
&\stackrel{(c)}{\leq} \mathbb{E}F(\bar{\mathbf{x}}_{t}) - \eta \mathbb{E} \langle\nabla F(\bar{\mathbf{x}}_{t}), \frac{1}{N} \sum_{n=1}^{N} \nabla \hat{F}_n(\mathbf{x}_{n, t})\rangle + \eta^2 S_F\mathbb{E} [\|\bar{\mathbf{u}}_{t} -  \frac{1}{N} \sum_{n=1}^{N} \nabla \hat{F}_n(\mathbf{x}_{n, t})\|^{2} + \|\frac{1}{N} \sum_{n=1}^{N} \nabla \hat{F}_n(\mathbf{x}_{n, t})\|^2] \nonumber\\
&\stackrel{(d)}{=} \mathbb{E} F(\bar{\mathbf{x}}_{t}) - (\frac{\eta}{2} - \eta^2 S_F ) \mathbb{E}\| \frac{1}{N} \sum_{n=1}^{N} \nabla \hat{F}_n(\mathbf{x}_{n, t})\|^2 - \frac{\eta}{2} \mathbb{E} \| \nabla F(\bar{\mathbf{x}}_{t})\|^2 + \frac{\eta}{2} \mathbb{E}\|\nabla F(\bar{\mathbf{x}}_{t}) - \frac{1}{N} \sum_{n=1}^{N} \nabla \hat{F}_n(\mathbf{x}_{n, t})\|^2 + \eta^2 S_F \mathbb{E} \|\bar{\mathbf{u}}_{t} -  \frac{1}{N} \sum_{n=1}^{N} \nabla \hat{F}_n(\mathbf{x}_{n, t})\|^{2} \nonumber\\
&\leq  \mathbb{E} F(\bar{\mathbf{x}}_{t}) - (\frac{\eta}{2} - \eta^2 S_F ) \| \frac{1}{N} \sum_{n=1}^{N} \nabla \hat{F}_n(\mathbf{x}_{n, t}) \|^2 - \frac{\eta}{2} \mathbb{E}\| \nabla F(\bar{\mathbf{x}}_{t})\|^2 + \eta \mathbb{E}\|\nabla F(\bar{\mathbf{x}}_{t})- \frac{1}{N} \sum_{n=1}^N \nabla F_n(\mathbf{x}_{n, t})\|^2 \nonumber\\
&+ \eta \mathbb{E} \|\frac{1}{N} \sum_{n=1}^N \nabla F_n(\mathbf{x}_{n, t}) - \frac{1}{N} \sum_{n=1}^N \nabla \hat{F}_n(\mathbf{x}_{n, t})\|^2 + \eta^2 S_F \mathbb{E} \|\frac{1}{N} \sum_{n=1}^N \nabla \hat{F}_n(\mathbf{x}_{n, t}) - \bar{\mathbf{u}}_{t}\|^2
\end{align}
where inequality (a) holds by the smoothness of $F$; equality (b) follows from update step in Step 9 of Algorithm \ref{alg:1} and \cref{lem:A1} (b); (c) uses the fact that $\mathbb{E} u_{n,t} = \nabla \hat{F}_n(\mathbf{x}_{n, t})$ and $\|a + b\|^2 \leq 2\|a\|^2 + 2\|b\|^2$; (d) holds since the inequality $ \langle a, b\rangle = \frac{1}{2}[ \|a\|^2 +\|a\|^2 - \|a - b\|^2]$. Taking expectation on both sides and considering the last third term
\begin{align}
\mathbb{E}\|\nabla F(\bar{\mathbf{x}}_{t})- \frac{1}{N} \sum_{n=1}^N \nabla F_n(\mathbf{x}_{n, t})\|^{2} &\leq \frac{1}{N} \sum_{i=1}^{N} \mathbb{E}\|\nabla F_n(\bar{\mathbf{x}}_{t}) - \nabla F_n(\mathbf{x}_{n, t})\|^{2} \nonumber\\
 &\leq \frac{S_F^{2}}{N} \sum_{i=1}^{N} \|x_{n,t} - \bar{\mathbf{x}}_{t}\|^{2} 
\end{align}
Considering the last second term and \Cref{lem:A1} (a), we have
\begin{align}
\mathbb{E}\|\frac{1}{N} \sum_{n=1}^N \nabla F_n(\mathbf{x}_{n, t}) - \frac{1}{N} \sum_{n=1}^N \nabla \hat{F}_n(\mathbf{x}_{n, t})\|^2 &\leq \frac{1}{N} \sum_{n=1}^{N} \mathbb{E} \| \nabla F_n(\mathbf{x}_{n, t}) - \nabla \hat{F}_n(\mathbf{x}_{n, t})\|^2 \nonumber\\
&\leq \frac{L_g^2 S_f^2 \sigma_g^2 }{m}
\end{align}
Given that $\mathbb{E} u_{n,t} = \nabla \hat{F}_n(\mathbf{x}_{n, t})$ and \Cref{lem:A1} (c), for the last term, we have 
\begin{align} \label{eq:21}
\mathbb{E} \|\frac{1}{N} \sum_{n=1}^N \nabla \hat{F}_n(\mathbf{x}_{n, t}) - \bar{\mathbf{u}}_{t}\|^2 \leq \frac{1}{N^2} \sum_{n=1}^N \mathbb{E} \| \nabla \hat{F}_n(\mathbf{x}_{n, t}) - \mathbf{u}_{n, t}\|^2 \leq  \frac{ L_f^2 L_g^2}{N}
\end{align}

Therefore, we obtain
\begin{align}
\mathbb{E} F(\bar{\mathbf{x}}_{t+1}) &  \leq \mathbb{E} F(\bar{\mathbf{x}}_{t}) - (\frac{\eta}{2} - \eta^2 S_F ) \mathbb{E} \|\nabla \hat{F}(\bar{\mathbf{x}}_t) \|^2 - \frac{\eta}{2} \mathbb{E}\| \nabla F(\bar{\mathbf{x}}_{t})\|^2 + \frac{\eta S_F^2}{N} \sum_{n=1}^{N}\|x_{n,t} - \bar{\mathbf{x}}_t\|^2 \nonumber\\
&+ \frac{\eta L_g^2 S_f^2 \sigma_g^2}{m} + \frac{\eta^2 S_F L_f^2 L_g^2}{N}
\end{align}
\end{proof}

\begin{lemma} \label{lem:7}
Let Assumptions \ref{ass:1}, \ref{ass:2}  hold. We have: $\forall t \geq 0$,
\begin{align}
& \mathbb{E}\left[\left\|\mathbf{v}_{t + 1} - \mathbf{J} \mathbf{v}_{t + 1}\right\|^2\right] 
\leq  \frac{1 +\lambda^2}{2} \mathbb{E}\left [\left\|\mathbf{v}_{t} - \mathbf{J} \mathbf{v}_{t}\right\|^2\right] + \frac{6 \lambda^2 \eta^2 S_F^2 N}{1 - \lambda^2} \mathbb{E} \left\|\nabla \hat{F}_t\right\|^2 \nonumber\\
&+ \frac{24 \lambda^2 S_F^2}{1 - \lambda^2} \mathbb{E}\left[\left\|\mathbf{x}_t - \mathbf{J} \mathbf{x}_t\right\|^2\right] + \left(\frac{6 \lambda^2 \eta^2 S_F^2 N}{1 - \lambda^2} + 3 N + 2\right) L_f^2 L_g^2 
\end{align}
\end{lemma}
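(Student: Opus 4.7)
The plan is to exploit the gradient-tracking structure and the contractive property of the mixing matrix $\mathbf{W}$. Writing the update in matrix form as $\mathbf{v}_{t+1} = \mathbf{W}(\mathbf{v}_t + \mathbf{u}_{t+1} - \mathbf{u}_t)$ and observing that $\mathbf{J}\mathbf{W} = \mathbf{J}$, so $\mathbf{J}\mathbf{v}_{t+1} = \mathbf{J}(\mathbf{v}_t + \mathbf{u}_{t+1} - \mathbf{u}_t)$, Lemma \ref{lem:A2}(a) gives
\begin{equation*}
\|\mathbf{v}_{t+1} - \mathbf{J}\mathbf{v}_{t+1}\|^2 \leq \lambda^2 \|(\mathbf{v}_t - \mathbf{J}\mathbf{v}_t) + (\mathbf{I}-\mathbf{J})(\mathbf{u}_{t+1} - \mathbf{u}_t)\|^2 .
\end{equation*}
I would then apply Young's inequality $\|a+b\|^2 \leq (1+\epsilon)\|a\|^2 + (1+1/\epsilon)\|b\|^2$ with $\epsilon = (1-\lambda^2)/(2\lambda^2)$. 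This produces the advertised contraction $\tfrac{1+\lambda^2}{2}\|\mathbf{v}_t - \mathbf{J}\mathbf{v}_t\|^2$ and, using $1+\lambda^2\leq 2$ together with the non-expansiveness of $\mathbf{I}-\mathbf{J}$, a coefficient of order $\tfrac{2\lambda^2}{1-\lambda^2}$ in front of $\|\mathbf{u}_{t+1} - \mathbf{u}_t\|^2$.

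Next I would bound $\mathbb{E}\|\mathbf{u}_{t+1} - \mathbf{u}_t\|^2$ by inserting $\nabla\hat{\mathbf{F}}_{t+1}$ and $\nabla\hat{\mathbf{F}}_t$. Conditionally on the filtration just before the fresh samples at step $t+1$, $\mathbf{u}_{t+1} - \nabla\hat{\mathbf{F}}_{t+1}$ is zero-mean, so the cross term vanishes and the per-node variance bound from Lemma \ref{lem:A1}(c) contributes $NL_f^2 L_g^2$ for both $\mathbf{u}_{t+1}$ and $\mathbf{u}_t$. The telescoping piece $\|\nabla\hat{\mathbf{F}}_{t+1} - \nabla\hat{\mathbf{F}}_t\|^2$ is controlled by the $S_F$-smoothness of Lemma \ref{lem:1} as $S_F^2\|\mathbf{x}_{t+1} - \mathbf{x}_t\|^2$. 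Combining these gives $\mathbb{E}\|\mathbf{u}_{t+1} - \mathbf{u}_t\|^2 \lesssim N L_f^2 L_g^2 + S_F^2\mathbb{E}\|\mathbf{x}_{t+1} - \mathbf{x}_t\|^2$, which feeds the $(3N+2) L_f^2 L_g^2$ additive noise after scaling by the $\tfrac{2\lambda^2}{1-\lambda^2}$ factor.

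To close the loop I would bound $\|\mathbf{x}_{t+1} - \mathbf{x}_t\|^2$ from $\mathbf{x}_{t+1} - \mathbf{x}_t = (\mathbf{W}-\mathbf{I})(\mathbf{x}_t - \mathbf{J}\mathbf{x}_t) - \eta\mathbf{W}\mathbf{v}_t$, splitting $\mathbf{W}\mathbf{v}_t = \mathbf{J}\mathbf{v}_t + \mathbf{W}(\mathbf{v}_t - \mathbf{J}\mathbf{v}_t)$ and using the orthogonality of the consensus/non-consensus components. Lemma \ref{lem:A2}(b) identifies $\mathbf{J}\mathbf{v}_t$ with $\mathbf{1}\otimes\bar{\mathbf{u}}_t$, and cross-node independence of the stochastic errors gives $\mathbb{E}\|\bar{\mathbf{u}}_t - \tfrac{1}{N}\sum_n\nabla\hat{F}_n\|^2 \leq L_f^2 L_g^2/N$, which is exactly what turns $\eta^2 \|\mathbf{J}\mathbf{v}_t\|^2$ into the $\tfrac{6\lambda^2\eta^2 S_F^2 N}{1-\lambda^2}\|\nabla\hat{\mathbf{F}}_t\|^2$ term plus an extra $L_f^2L_g^2$ constant. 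Adding the $S_F^2\|\mathbf{x}_t - \mathbf{J}\mathbf{x}_t\|^2$ contribution coming from $(\mathbf{W}-\mathbf{I})(\mathbf{x}_t - \mathbf{J}\mathbf{x}_t)$ produces the $\tfrac{24\lambda^2 S_F^2}{1-\lambda^2}\|\mathbf{x}_t - \mathbf{J}\mathbf{x}_t\|^2$ term.

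The main obstacle, and the reason this lemma needs care, is that the expansion of $\|\mathbf{x}_{t+1} - \mathbf{x}_t\|^2$ naturally reintroduces an $\eta^2\lambda^2\|\mathbf{v}_t - \mathbf{J}\mathbf{v}_t\|^2$ term, which competes with the target contraction coefficient $\tfrac{1+\lambda^2}{2}$. I expect to handle this by routing the drift through $\mathbf{J}\mathbf{v}_t$ wherever possible, so that only the $\mathbf{W}(\mathbf{v}_t - \mathbf{J}\mathbf{v}_t)$ piece contributes the problematic feedback, and then either absorbing it under the step-size condition $\eta \leq \min\{\tfrac{1-\lambda^2}{24\lambda^2 S_F}, \tfrac{1}{6S_F}\}$ of Theorem \ref{thm:1} or by tightening the Young parameter at the first step. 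The remainder of the proof is essentially coefficient bookkeeping: tracking the powers of $\lambda^2$, the $(1-\lambda^2)^{-1}$ denominators, and the factors of $N$ arising from the per-node variance bound.
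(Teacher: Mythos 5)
Your skeleton (contract the tracking recursion with $\lambda$, bound $\mathbb{E}\|\mathbf{u}_{t+1}-\mathbf{u}_t\|^2$ via conditional unbiasedness plus $S_F$-smoothness, bound $\|\mathbf{x}_{t+1}-\mathbf{x}_t\|^2$, absorb the feedback on $\|\mathbf{v}_t-\mathbf{J}\mathbf{v}_t\|^2$ under the step-size condition) matches the paper's overall structure, but there is a genuine gap at the first step. You apply Young's inequality wholesale, which puts the \emph{entire} increment $\|\mathbf{u}_{t+1}-\mathbf{u}_t\|^2$ — including its stochastic-noise floor of order $3N L_f^2L_g^2$ — under a factor of order $\tfrac{2\lambda^2}{1-\lambda^2}$. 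This yields an additive constant of order $\tfrac{\lambda^2 N L_f^2L_g^2}{1-\lambda^2}$, which cannot be reduced to the stated $\left(3N+2\right)L_f^2L_g^2$ (no spectral-gap amplification on the $N$-scaled noise) and blows up as $\lambda\to 1$. The paper avoids exactly this by \emph{not} Young-ing the cross term $2\mathbb{E}\langle(\mathbf{W}-\mathbf{J})\mathbf{v}_t,(\mathbf{W}-\mathbf{J})(\mathbf{u}_{t+1}-\mathbf{u}_t)\rangle$ immediately: it first uses $\mathbb{E}[\mathbf{u}_{t+1}\mid\mathcal{F}_{t+1}]=\nabla\hat{\mathbf{F}}_{t+1}$ to replace the fresh noise by its mean inside the inner product, then splits the remainder into (i) the term with $\nabla\hat{\mathbf{F}}_t-\mathbf{u}_t$, which is computed exactly as a quadratic form and bounded by $\mathbb{E}\|\mathbf{u}_t-\nabla\hat{\mathbf{F}}_t\|^2/N\le L_f^2L_g^2$ (the diagonal of $\mathbf{J}-\mathbf{W}^{\top}\mathbf{W}^2$ contributes the crucial $1/N$, giving the ``$+2$'' with no $(1-\lambda^2)^{-1}$ factor and no $N$), and (ii) the smooth, $\eta$-scaled difference $\nabla\hat{\mathbf{F}}_{t+1}-\nabla\hat{\mathbf{F}}_t$, which is the only piece subjected to Young with parameters $c_0,c_1\sim\tfrac{1-\lambda^2}{\lambda^2}$ and then absorbed by $\eta\le\min\{\tfrac{1-\lambda^2}{24\lambda^2 S_F},\tfrac{1}{6S_F}\}$. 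Only the pure square $\lambda^2\|\mathbf{u}_{t+1}-\mathbf{u}_t\|^2$ carries the $3\lambda^2 N L_f^2L_g^2$ noise, unamplified. That refined treatment of the cross term is the missing idea in your plan.

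Your proposed fallback — tightening the Young parameter to leave slack for the $\eta^2\lambda^2\|\mathbf{v}_t-\mathbf{J}\mathbf{v}_t\|^2$ feedback — does fix the contraction coefficient, but it further inflates the $(1+1/\epsilon)$ factor and hence the coefficients in front of $\|\mathbf{x}_t-\mathbf{J}\mathbf{x}_t\|^2$, $\|\nabla\hat{F}_t\|^2$, and the noise term, so the inequality you would obtain is a strictly weaker variant of the lemma: same structure, but with the additive noise scaling as $\tfrac{\lambda^2 N}{1-\lambda^2}L_f^2L_g^2$. Propagated through Theorem \ref{thm:1}, this degrades the network dependence from $(1-\lambda^2)^{-2}$ to roughly $(1-\lambda^2)^{-3}$ (the rate in $T$, $N$, and $\epsilon$ is unaffected). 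So your argument is salvageable for a weaker statement, but it does not prove the lemma with the stated constants; to do that you need the conditional-expectation decomposition of the cross term and the exact quadratic-form bound used in the paper.
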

\begin{proof}
Using the gradient tracking update step in \Cref{alg:1}, and the fact that $\mathbf{W J}=\mathbf{J} \mathbf{W}=\mathbf{J}$, we have: $\forall t \geq 0$,
\begin{align} \label{eq:22}
&\mathbb{E}\left\|\mathbf{v}_{t + 1} - \mathbf{J} \mathbf{v}_{t + 1}\right\|^2 \nonumber\\
=&\mathbb{E}\left\|\mathbf{W}\left(\mathbf{v}_{t} + \mathbf{u}_{t + 1} - \mathbf{u}_{t}\right) - \mathbf{J} \left(\mathbf{v}_{t} + \mathbf{u}_{t + 1} - \mathbf{u}_{t}\right)\right\|^2  \nonumber\\
=&\mathbb{E} \left\|\mathbf{W} \mathbf{v}_{t} - \mathbf{J} \mathbf{v}_{t} + (\mathbf{W} - \mathbf{J})\left(\mathbf{u}_{t + 1} - \mathbf{u}_{t}\right)\right\|^2  \nonumber\\
=&\mathbb{E}\left\|\mathbf{W} \mathbf{v}_{t} - \mathbf{J} \mathbf{v}_{t}\right\|^2 + \mathbb{E} \left\|(\mathbf{W} - \mathbf{J})\left(\mathbf{u}_{t + 1} - \mathbf{u}_{t}\right)\right\|^2  \nonumber\\
&+2\mathbb{E}\left\langle(\mathbf{W}-\mathbf{J}) \mathbf{v}_{t}, (\mathbf{W} - \mathbf{J})\left(\mathbf{u}_{t + 1} - \mathbf{u}_{t}\right)\right\rangle  \nonumber\\
\leq & \lambda^2\mathbb{E}\left\|\mathbf{v}_{t} - \mathbf{J} \mathbf{v}_{t} \right\|^2+\lambda^2\mathbb{E}\left\|\mathbf{u}_{t + 1} - \mathbf{u}_{t}\right\|^2  \nonumber\\
&+2 \mathbb{E}\left\langle(\mathbf{W} - \mathbf{J}) \mathbf{v}_{t}, (\mathbf{W} - \mathbf{J})\left(\mathbf{u}_{t + 1} - \mathbf{u}_{t}\right)\right\rangle
\end{align}

where the last inequality is due to \Cref{lem:A2} (a). since $u_{t+1}$ and $v_{t+1}$ are $\mathcal{F}_{t+1}$ -measurable.

For the second term in \eqref{eq:22}, we have 
\begin{align} \label{eq:23}
& \mathbb{E} \left\|\mathbf{u}_{t + 1} - \mathbf{u}_{t}\right\|^2 \nonumber\\
\stackrel{(a)}{=}& \mathbb{E}[\|\mathbf{u}_{t + 1} - \nabla \hat{\mathbf{F}}_{t + 1}\|^2 + \|\nabla \hat{\mathbf{F}}_{t + 1} - \mathbf{u}_{t}\|^2]  \nonumber\\
=& \mathbb{E}[\|\mathbf{u}_{t + 1} - \nabla \hat{\mathbf{F}}_{t + 1}\|^2 + 2\|\nabla \hat{\mathbf{F}}_{t + 1} - \nabla \hat{\mathbf{F}}_{t}\|^2 + 2\|\mathbf{u}_{t} - \nabla \hat{\mathbf{F}}_{t}\|^2]  \nonumber\\
\leq & 3 N L_f^2 L_g^2  + 2 S_F^2 \mathbb{E}\left[\left\|\mathbf{x}_{t + 1}-\mathbf{x}_{t}\right\|^2\right]
\end{align}
where the equality (a) follows that $\mathbb{E}[\mathbf{u}_{t + 1} \mid \mathcal{F}_{t+1}] = \nabla \hat{\mathbf{F}}_{t + 1}$ and the last inequality uses \Cref{lem:A1} (b) and (c). $\forall t \geq 0$,
\begin{align} \label{eq:24}
\mathbb{E}\left\|\mathbf{x}_{t + 1} - \mathbf{x}_{t}\right\|^2 =& \mathbb{E}\left\|\mathbf{x}_{t + 1} - \mathbf{J} \mathbf{x}_{t + 1} + \mathbf{J} \mathbf{x}_{t + 1} - \mathbf{J} \mathbf{x}_{t} + \mathbf{J} \mathbf{x}_{t} - \mathbf{x}_{t}\right\|^2 \nonumber\\
\stackrel{(a)}{\leq} & 3 \mathbb{E} \left\|\mathbf{x}_{t + 1} - \mathbf{J} \mathbf{x}_{t + 1}\right\|^2 + 3 N \eta^2 \mathbb{E} \left\|\bar{\mathbf{u}}_{t}\right\|^2 + 3 \mathbb{E} \left\|\mathbf{x}_{t} - \mathbf{J} \mathbf{x}_{t}\right\|^2  \nonumber \\
\stackrel{(b)}{\leq} & 9 \mathbb{E}\left[\left\|\mathbf{x}_{t} - \mathbf{J} \mathbf{x}_{t}\right\|^2\right] + 3 N \eta^2 \mathbb{E}\left[\left\|\bar{\mathbf{u}}_{t}\right\|^2\right] + 6 \eta^2 \lambda^2 \mathbb{E}\left[\left\|\mathbf{v}_{t} - \mathbf{J} \mathbf{v}_{t}\right\|^2\right]
\end{align}
where (b) holds due to \Cref{lem:A2} (b), and (c) holds due to \eqref{eq:17} and $\lambda \leq 1$.
Putting the \eqref{eq:24} into \eqref{eq:23}, we have 
\begin{align} \label{eq:25}
& \mathbb{E} \left\|\mathbf{u}_{t + 1} - \mathbf{u}_{t}\right\|^2 \leq 3 N L_f^2 L_g^2  + 18 S_F^2 \mathbb{E} \left\|\mathbf{x}_{t} - \mathbf{J} \mathbf{x}_{t}\right\|^2 + 6 N \eta^2 S_F^2 \mathbb{E} \left\|\bar{\mathbf{u}}_{t}\right\|^2 + 12 \eta^2 \lambda^2 S_F^2 \mathbb{E} \left\|\mathbf{v}_{t} - \mathbf{J} \mathbf{v}_{t}\right\|^2
\end{align}

For the last term in \eqref{eq:22}, we have,
\begin{align} \label{eq:31}
& \mathbb{E}\left[\left\langle(\mathbf{W} - \mathbf{J}) \mathbf{v}_{t}, (\mathbf{W} - \mathbf{J})\left(\mathbf{u}_{t + 1} - \mathbf{u}_{t}\right)\right\rangle \mid \mathcal{F}_{t + 1}\right]  \nonumber\\
=&\mathbb{E}\left\langle(\mathbf{W} - \mathbf{J}) \mathbf{v}_{t}, (\mathbf{W} - \mathbf{J}) \left(\nabla \hat{\mathbf{F}}_{t + 1} - \mathbf{u}_{t}\right)\right\rangle  \nonumber\\
=&\mathbb{E}\left\langle(\mathbf{W} - \mathbf{J}) \mathbf{v}_{t}, (\mathbf{W} - \mathbf{J}) \left(\nabla \hat{\mathbf{F}}_{t} - \mathbf{u}_{t} \right) \right \rangle + \mathbb{E}\left\langle(\mathbf{W} - \mathbf{J}) \mathbf{v}_{t}, (\mathbf{W} - \mathbf{J}) \left( \nabla \hat{\mathbf{F}}_{t + 1} -  \nabla \hat{\mathbf{F}}_{t}\right)\right\rangle
\end{align}

Furthermore, we have 
\begin{align} \label{eq:32} &\mathbb{E}\left[\left\langle(\mathbf{W} - \mathbf{J}) \mathbf{v}_{t}, (\mathbf{W} - \mathbf{J}) \left(\nabla \hat{\mathbf{F}}_{t} - \mathbf{u}_{t}\right)\right\rangle \mid \mathcal{F}_t\right] \nonumber\\
\stackrel{(a)}{=}& \mathbb{E}\left[\left\langle\mathbf{W} \mathbf{v}_{t}, (\mathbf{W} - \mathbf{J}) \left(\nabla \hat{\mathbf{F}}_{t} - \mathbf{u}_{t}\right)\right\rangle \mid \mathcal{F}_t\right] \nonumber\\
\stackrel{(b)}{=}& \mathbb{E}\left[\left\langle\mathbf{W}^2 \left(\mathbf{v}_{t - 1} + \mathbf{u}_{t} - \mathbf{u}_{t-1}\right), (\mathbf{W} - \mathbf{J}) \left(\nabla \hat{\mathbf{F}}_{t} - \mathbf{u}_{t}\right)\right\rangle \mid \mathcal{F}_t\right] \nonumber\\
\stackrel{(c)}{=}& \mathbb{E}\left[\left\langle\mathbf{W}^2 \mathbf{u}_t, (\mathbf{W} - \mathbf{J}) \left(\nabla \hat{\mathbf{F}}_{t} - \mathbf{u}_{t}\right) \right\rangle \mid \mathcal{F}_t\right] \nonumber\\
=& \mathbb{E}\left[\left\langle\mathbf{W}^2\left(\mathbf{u}_t - \nabla \hat{\mathbf{F}}_t\right), (\mathbf{W} - \mathbf{J}) \left(\nabla \hat{\mathbf{F}}_t - \mathbf{u}_t\right)\right\rangle \mid \mathcal{F}_t \right] \nonumber\\
=& \mathbb{E}\left[\left(\mathbf{u}_t - \nabla \hat{\mathbf{F}}_t \right)^{\top}\left(\mathbf{J}-\mathbf{W}^{\top} \mathbf{W}^2\right)\left(\mathbf{u}_t - \nabla \hat{\mathbf{F}}_t\right) \mid \mathcal{F}_t\right] \nonumber\\
=& \mathbb{E}\left[\left(\mathbf{u}_t - \nabla \hat{\mathbf{F}}_t\right)^{\top} \operatorname{diag}\left(\mathbf{J}-\mathbf{W}^{\top} \mathbf{W}^2\right)\left(\mathbf{u}_t - \nabla \hat{\mathbf{F}}_t\right) \mid \mathcal{F}_t\right] \nonumber\\
\leq &  \mathbb{E}\left[\left\|\mathbf{u}_t - \nabla \hat{\mathbf{F}}_t\right\|^2 \mid \mathcal{F}_t\right] / N \leq L_f^2 L_g^2
\end{align}
where (a) holds since $\mathbf{J} (\mathbf{W} - \mathbf{J}) = \mathbf{O}_{np}$, (b) because the update step of $\mathbf{v}_t$; (c) because $\mathbb{E} \mathbf{u}_t = \nabla \hat{F}_t$ and $\mathbf{v}_{t - 1}$ and $\mathbf{u}_{t - 1}$ are independently. 

For the second term in \eqref{eq:31}, we have
\begin{align} \label{eq:31}
&\mathbb{E} \left\langle(\mathbf{W} - \mathbf{J}) \mathbf{v}_{t}, (\mathbf{W}-\mathbf{J}) \left(\nabla \hat{\mathbf{F}}_{t+1} - \nabla \hat{\mathbf{F}}_t \right)\right\rangle \nonumber\\
\stackrel{(a)}{=}& \mathbb{E} \left\langle(\mathbf{W} - \mathbf{J})\left(\mathbf{v}_{t} - \mathbf{J} \mathbf{v}_{t}\right), (\mathbf{W} - \mathbf{J}) \left(\nabla \hat{\mathbf{F}}_t - \nabla \hat{\mathbf{F}}_{t - 1} \right)\right\rangle \nonumber\\
\stackrel{(b)}{\leq} & \lambda^2 S_F \left\|\mathbf{v}_{t}-\mathbf{J} \mathbf{v}_{t}\right\|\left\|\mathbf{x}_{t+1}-\mathbf{x}_t\right\|
\end{align}
where (a) follows  $(\mathbf{W}-\mathbf{J}) \mathbf{J}=\mathbf{O}_{n p}$ and (b) the Cauchy-Schwarz inequality and smooth of $\hat{F}(x) $, we have: $\forall k \geq 0$,
where the last inequality uses $\|\mathbf{W}-\mathbf{J}\|=\lambda$ and the $S_F$ smoothness of $\hat{F}_{n,t}$. 

Furthermore, $\forall t \geq 0$,
\begin{align} \label{eq:29}
&\left\|\mathbf{x}_{t+1} - \mathbf{x}_t\right\| \nonumber\\
=&\left\|\mathbf{x}_{t+1} - \mathbf{J} \mathbf{x}_{t+1} + \mathbf{J} \mathbf{x}_{t+1} - \mathbf{J} \mathbf{x}_t + \mathbf{J} \mathbf{x}_t - \mathbf{x}_t\right\| \nonumber\\
\leq &\left\|\mathbf{x}_{t+1} - \mathbf{J} \mathbf{x}_{t+1}\right\| + \eta \sqrt{N}\left\|\bar{\mathbf{u}}_t\right\| + \left\|\mathbf{x}_t - \mathbf{J} \mathbf{x}_t\right\| \nonumber\\
\leq & 2\left\|\mathbf{x}_t - \mathbf{J} \mathbf{x}_t \right\| + \eta \sqrt{N} \left\|\bar{\mathbf{u}}_t\right\| + \eta \lambda\left\|\mathbf{v}_{t} - \mathbf{J} \mathbf{v}_{t} \right\|
\end{align}

where the last inequality uses \eqref{eq:18}. Combining the above two inequalities, we have

\begin{align} \label{eq:31}
&\left\langle(\mathbf{W} - \mathbf{J}) \mathbf{v}_{t}, (\mathbf{W} - \mathbf{J}) \left(\nabla \hat{\mathbf{F}}_{t+1} - \nabla \hat{\mathbf{F}}_{t}\right)\right\rangle \nonumber\\
\leq & \lambda^3 \eta S_F \left\|\mathbf{v}_{t} - \mathbf{J} \mathbf{v}_{t} \right\|^2 + \left(\lambda\left\|\mathbf{v}_{t} - \mathbf{J v}_{t}\right\|\right)\left(\lambda \eta S_F \sqrt{N}\left\|\bar{\mathbf{u}}_t\right\|\right) + 2\left(\lambda\left\|\mathbf{v}_{t} - \mathbf{J v}_{t}\right\|\right)\left(\lambda S_F\left\|\mathbf{x}_k-\mathbf{J} \mathbf{x}_k\right\|\right) \nonumber \\
\stackrel{(a)}{\leq} & \lambda^3 \eta S_F \left\|\mathbf{v}_{t} - \mathbf{J} \mathbf{v}_{t} \right\|^2 +  \frac{c_0 \lambda^2}{2}\left\|\mathbf{v}_{t} - \mathbf{J v}_{t}\right\|^2 + \frac{ \lambda^2 \eta^2 S_F^2 N}{2c_0}\left\|\bar{\mathbf{u}}_t\right\|^2 + c_1 \lambda^2\left\|\mathbf{v}_{t} - \mathbf{J} \mathbf{v}_{t}\right\|^2 + \frac{\lambda^2 S_F^2}{c_1} \left\|\mathbf{x}_t - \mathbf{J} \mathbf{x}_t\right\|^2 \nonumber\\
\leq& (\lambda^3 \eta S_F + \frac{c_0 \lambda^2}{2} + c_1 \lambda^2) \left\|\mathbf{v}_{t} - \mathbf{J} \mathbf{v}_{t} \right\|^2  + \frac{ \lambda^2 \eta^2 S_F^2 N}{2c_0}\left\|\bar{\mathbf{u}}_t\right\|^2 +  \frac{\lambda^2 S_F^2}{c_1} \left\|\mathbf{x}_t - \mathbf{J} \mathbf{x}_t\right\|^2
\end{align}
where (a) holds due to the Young’s inequality and $c_0, c_1 (> 0)$ are arbitrary.

Then putting \eqref{eq:25}, \eqref{eq:31}, \eqref{eq:32} and \eqref{eq:31} into \eqref{eq:22}, we have
\begin{align}
\mathbb{E} \|\mathbf{v}_{t + 1} - \mathbf{J v}_{t + 1} \|^2  \leq& \lambda^2 (1 + 12 \lambda^2 \eta^2 S_F^2 + 2 \lambda \eta S_F + c_0 + 2 c_1) \mathbb{E}  \|\mathbf{v}_{t} - \mathbf{J v}_{t} \|^2 + 3 \lambda^2 N L_f^2L_g^2 \nonumber\\
+ 2 L_f^2 L_g^2  &+ (18 + \frac{2}{c_1} ) \lambda^2 S_F^2 \mathbb{E} \| \mathbf{x}_{t} - \mathbf{J x}_{t} \|^2 + (6 + \frac{1}{c_0}) \lambda^2 \eta^2 S_F^2 N \mathbb{E} \|\bar{\mathbf{u}}_t \|^2.
\end{align}
We set $c_0 = \frac{1 - \lambda^2}{6 \lambda^2}$ and $c_1 = \frac{1 - \lambda^2}{12 \lambda^2}$. When $0 < \eta \leq \min \{\frac{1 - \lambda^2}{24 \lambda^2 S_F}, \frac{1}{6 S_F}\}$, we have $\lambda^2 (1 + 12 \lambda^2 \eta^2 S_F^2 + 2 \lambda \eta S_F + c_0 + 2 c_1) \leq \frac{1 + \lambda^2}{2}$, and the fact 
\begin{align}
\mathbb{E} \|\bar{\mathbf{u}}_t\| =&\mathbb{E} \|\frac{1}{N} \sum_{n=1}^N \nabla \hat{F}_n(\mathbf{x}_{n, t})\|^2 +  \mathbb{E} \|\frac{1}{N} \sum_{n=1}^N \nabla \hat{F}_n(\mathbf{x}_{n, t}) - \bar{\mathbf{u}}_{t}\|^2 \nonumber\\
\leq& 2\mathbb{E} \|\frac{1}{N} \sum_{n=1}^N \nabla \hat{F}_n(\mathbf{x}_{n, t}) - \frac{1}{N} \sum_{n=1}^N \nabla \hat{F}_n(\mathbf{\bar{x}}_{t})\|^2 + 2\mathbb{E} \| \frac{1}{N} \sum_{n=1}^N \nabla \hat{F}_n(\mathbf{\bar{x}}_{t})\|^2 +  \frac{ L_f^2 L_g^2}{N} \nonumber\\
\leq& \frac{2S_F^2}{N} \sum_{n=1}^N\mathbb{E} \| \mathbf{x}_{n, t} - \mathbf{\bar{x}}_{t}\|^2 + 2\mathbb{E} \| \nabla \hat{F}(\mathbf{\bar{x}}_{t})\|^2 +  \frac{ L_f^2 L_g^2}{N} \nonumber
\end{align}
then we have
\begin{align}
\mathbb{E}\left[\left\|\mathbf{v}_{t + 1} - \mathbf{J} \mathbf{v}_{t + 1}\right\|^2\right] 
\leq & \frac{1 +\lambda^2}{2} \mathbb{E}\left [\left\|\mathbf{v}_{t} - \mathbf{J} \mathbf{v}_{t}\right\|^2\right] + \left(\frac{6 \lambda^2 \eta^2 S_F^2 N}{1 - \lambda^2} + 3 N + 2\right) L_f^2 L_g^2 \nonumber\\
&+ \frac{36 \lambda^2 S_F^2}{1 - \lambda^2} \mathbb{E}\left[\left\|\mathbf{x}_t - \mathbf{J} \mathbf{x}_t\right\|^2\right] + \frac{12 \lambda^2 \eta^2 S_F^2 N}{1 - \lambda^2} \mathbb{E} \left\|\nabla \hat{F}(\bar{\mathbf{x}}_{ t})\right\|^2 
\end{align}
where 

\end{proof}

\subsection{Proofs of Theorem }
Based on previous lemmas, we start to prove the convergence of Theorem. 
\begin{proof}
Recall \Cref{lem:A2},  we have 
\begin{align} \label{eq:38}
\left\|\mathbf{x}_{t+1}-\mathbf{J} \mathbf{x}_{t+1}\right\|^2 \leq & \frac{1+\lambda^2}{2}\left\|\mathbf{x}_t-\mathbf{J} \mathbf{x}_t\right\|^2 +\frac{2 \eta^2 \lambda^2}{1-\lambda^2}\left\|\mathbf{v}_{t}-\mathbf{J} \mathbf{v}_{t}\right\|^2
\end{align}

Putting \eqref{eq:38} and \cref{lem:7} into \cref{lem:A3}, then we have 
\begin{align} \label{eq:39}
\sum_{t=0}^{T}\|\mathbf{x}_t - \mathbf{J x}_{t} \|^2 \leq \frac{4\lambda^2 \eta^2}{(1- \lambda^2)^2} \sum_{t=0}^{T}\|\mathbf{v}_t - \mathbf{J v}_{t} \|^2
\end{align}

\begin{align} \label{eq:40}
&\sum_{t=0}^{T} \|\mathbf{v}_t - \mathbf{J v}_{t} \|^2 \leq  \frac{2}{1 - \lambda^2} \|\mathbf{v}_0 - \mathbf{J v}_{0} \|^2 + \left(\frac{6 \lambda^2 \eta^2 S_F^2}{1 - \lambda^2} + 5 N\right) \frac{2 L_f^2 L_g^2 T}{1 - \lambda^2 } \nonumber\\
&+ \frac{72 \lambda^2 S_F^2}{(1 - \lambda^2)^2} \sum_{t=0}^{ T} \mathbb{E}\left[\left\|\mathbf{x}_t - \mathbf{J} \mathbf{x}_t\right\|^2\right] + \frac{24 \lambda^2 \eta^2 S_F^2 N}{(1 - \lambda^2)^2} \sum_{t=0}^{t = T}\mathbb{E}\left\| \nabla \hat{F}(\bar{\mathbf{x}}_{ t}) \right\|^2
\end{align}

Then putting \eqref{eq:39} into \eqref{eq:40}, we get 
\begin{align}
&\sum_{t=0}^{T}\|\mathbf{x}_t - \mathbf{J x}_{t} \|^2 
\leq \frac{4\lambda^2 \eta^2}{(1- \lambda^2)^2}\frac{2}{1 - \lambda^2} \|\mathbf{v}_0 - \mathbf{J v}_{0} \|^2 + \frac{96 \lambda^4 \eta^4 S_F^2 N}{(1 - \lambda^2)^4} \sum_{t=0}^{t = T}\mathbb{E}\left\|\nabla \hat{F}(\bar{\mathbf{x}}_{ t})\right\|^2  \nonumber\\
&+ \frac{288 \lambda^4 \eta^2 S_F^2}{(1 - \lambda^2)^4} \sum_{t=0}^{ T} \mathbb{E} \left\|\mathbf{x}_t - \mathbf{J} \mathbf{x}_t\right\|^2 + \left(\frac{6 \lambda^2 \eta^2 S_F^2}{1 - \lambda^2} + 5 N\right) \frac{8 \lambda^2 \eta^2 L_f^2 L_g^2 T}{(1 - \lambda^2)^3} \nonumber
\end{align}

Then we have 
\begin{align}
&[1 - \frac{288 \lambda^4 \eta^2 S_F^2}{(1 - \lambda^2)^4} ]\sum_{t=0}^{T}\|\mathbf{x}_t - \mathbf{J x}_{t} \|^2 
\leq  \left(\frac{6 \lambda^2 \eta^2 S_F^2}{1 - \lambda^2} + 5 N\right) \frac{8 \lambda^2 \eta^2 L_f^2 L_g^2 T}{(1 - \lambda^2)^3} + \frac{96 \lambda^4 \eta^4 S_F^2 N}{(1 - \lambda^2)^4} \sum_{t=0}^{t = T}\mathbb{E}\left\|\nabla \hat{F}(\bar{\mathbf{x}}_{ t}) \right\|^2
\end{align}
When $0 < \eta \leq  \min \{\frac{1 - \lambda^2}{24 \lambda^2 S_F}, \frac{1}{6 S_F}\}$, we have $[1 - \frac{192 \lambda^4 \eta^2 S_F^2}{(1 - \lambda^2)^4} ] \geq \frac{1}{2}$,
Therefore, we have 
\begin{align} \label{eq:43}
\sum_{t=0}^{T}\|\mathbf{x}_t - \mathbf{J x}_{t} \|^2 \leq  
\left(\frac{1}{ \lambda^2} + 5 N\right) \frac{16 \lambda^2 \eta^2 L_f^2 L_g^2 T}{(1 - \lambda^2)^2}  + \frac{192 \lambda^4 \eta^4 S_F^2 N}{(1 - \lambda^2)^4} \sum_{t=0}^{t = T}\mathbb{E}\left\|\nabla \hat{F}_t\right\|^2 
\end{align}
Putting \eqref{eq:43} into \cref{lem:B1}, we have
\begin{align}
&\frac{1}{T} \sum_{t=0}^{T - 1} \mathbb{E} \|\nabla F(\bar{\mathbf{x}}_{t})\|^2 
\leq \frac{2  \mathbb{E} [F(\bar{\mathbf{x}}_{0}) - F(\bar{\mathbf{x}}_{T})] }{\eta T}  + \frac{2 \eta S_F L_f^2 L_g^2}{N} 
\nonumber\\
&+ \frac{2 S_F^2}{N T } \sum_{t=0}^{T - 1} \sum_{n=1}^{N}\|x_{n,t} - \bar{\mathbf{x}}_t\|^2 + \frac{2 L_g^2 S_f^2 \sigma_g^2}{m} - \frac{(1 - 2 \eta S_F )}{T} \sum_{t=0}^{T - 1} \mathbb{E}\|\nabla \hat{F}(\bar{\mathbf{x}}_t) \|^2 \nonumber\\
&\leq \frac{2  \mathbb{E} [F(\bar{\mathbf{x}}_{0}) - F(\bar{\mathbf{x}}_{T})] }{\eta T}
- \frac{1}{T}[1 - 2 \eta S_F - \frac{384 \lambda^4 \eta^4 S_F^4 }{(1 - \lambda^2)^4}] \sum_{t = 0}^{T - 1} \|\nabla \hat{F}(\bar{\mathbf{x}}_t) \|^2 \nonumber\\
&+  \left(\frac{1}{ \lambda^2} + 5 N\right) \frac{32 \lambda^2 \eta^2 L_f^2 L_g^2 S_F^2}{(1 - \lambda^2)^2 N}  + \frac{2 L_g^2 S_f^2 \sigma_g^2}{m} + \frac{2 \eta S_F L_f^2 L_g^2}{N}  \nonumber\\
&\leq \frac{2  \mathbb{E} [F(\bar{\mathbf{x}}_{0}) - F(\bar{\mathbf{x}}_{T})] }{\eta T}
+  \left(\frac{1}{ \lambda^2} + 5 N\right) \frac{32 \lambda^2 \eta^2 L_f^2 L_g^2 S_F^2}{(1 - \lambda^2)^2 N}  + \frac{2 L_g^2 S_f^2 \sigma_g^2}{m} + \frac{2 \eta S_F L_f^2 L_g^2}{N} \nonumber
\end{align}
\end{proof}

Then we discuss the convergence rate of the \cref{alg:1}. Setting $\eta = O(\sqrt{\frac{N}{T}})$ , we have final
\begin{align}
\frac{1}{T} \sum_{t=0}^{T - 1} \mathbb{E} \|\nabla F(\bar{\mathbf{x}}_{t})\|^2 \leq O(\frac{\mathbb{E} [F(\bar{\mathbf{x}}_{0}) - F(\bar{\mathbf{x}}_{T})] }{(N T)^{1/2}})
+  \left(\frac{1}{ \lambda^2} + 5 N\right) \frac{32 \lambda^2 L_f^2 L_g^2 S_F^2}{(1 - \lambda^2)^2} O(\frac{1}{T})  + \frac{2 L_g^2 S_f^2 \sigma_g^2}{m} + O(\frac{2 S_F L_f^2 L_g^2}{(NT)^{1/2}})
\end{align}

Let $m = O(\varepsilon^{-2})$ and $\sqrt{T} > N$, we know to make $\frac{1}{T} \sum_{t=0}^{T - 1} \mathbb{E} \|\nabla F(\bar{\mathbf{x}}_{t})\|^2 \leq \varepsilon^{2}$, we have $T \leq N^{-1} \varepsilon^{-4} $. 

\section{Proof of SLATE-M Algorithm}

\subsection{Proofs of the Intermediate Lemmas}
\begin{lemma} \label{lem:C1}
Suppose the sequence $\{x_t\}_0^{T}$ are generated from SLATE-M in \cref{alg:2}, we have
\begin{align}
F(\bar{\mathbf{x}}_{t+1}) &  \leq F(\bar{\mathbf{x}}_{t}) - (\frac{\eta}{2} - \frac{\eta^2 S_F }{2} ) \|\bar{\mathbf{u}}_{t} \|^2 - \frac{\eta}{2} \| \nabla F(\bar{\mathbf{x}}_{t})\|^2 + \frac{3\eta S_F^2}{2N} \|\mathbf{x}_{t} - \bar{\mathbf{x}}_t\|^2 \nonumber\\ 
&+ \frac{3\eta L_g^2 S_f^2 \sigma_g^2}{2m} + \frac{3\eta}{2}\|\frac{1}{N} \sum_{n=1}^N \nabla \hat{F}_n(\mathbf{x}_{n, t}) - \bar{\mathbf{u}}_{t}\|^2
\end{align}
\end{lemma}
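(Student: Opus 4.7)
The plan is to mimic the descent-lemma derivation of \Cref{lem:B1} for SLATE, but to carefully avoid using any unbiasedness of $\mathbf{u}_{n,t}$: in SLATE-M, the momentum-based estimator is not an unbiased estimate of $\nabla \hat{F}_n(\mathbf{x}_{n,t})$, so the gradient-estimator error must be carried through as an explicit term on the right-hand side rather than controlled in expectation as in \eqref{eq:21}.

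First I would invoke $S_F$-smoothness of $F$ (which follows from \Cref{lem:1}) to write
\begin{align*}
F(\bar{\mathbf{x}}_{t+1}) \leq F(\bar{\mathbf{x}}_t) + \langle \nabla F(\bar{\mathbf{x}}_t), \bar{\mathbf{x}}_{t+1} - \bar{\mathbf{x}}_t \rangle + \tfrac{S_F}{2}\|\bar{\mathbf{x}}_{t+1}-\bar{\mathbf{x}}_t\|^2,
\end{align*}
and substitute $\bar{\mathbf{x}}_{t+1} = \bar{\mathbf{x}}_t - \eta \bar{\mathbf{u}}_t$ from \Cref{lem:A2}(b). Then I would apply the polarization identity $-\eta\langle a,b\rangle = \tfrac{\eta}{2}(\|a-b\|^2 - \|a\|^2 - \|b\|^2)$ with $a=\nabla F(\bar{\mathbf{x}}_t)$ and $b=\bar{\mathbf{u}}_t$ to obtain
\begin{align*}
F(\bar{\mathbf{x}}_{t+1}) \leq F(\bar{\mathbf{x}}_t) - \tfrac{\eta}{2}\|\nabla F(\bar{\mathbf{x}}_t)\|^2 - \bigl(\tfrac{\eta}{2}-\tfrac{\eta^2 S_F}{2}\bigr)\|\bar{\mathbf{u}}_t\|^2 + \tfrac{\eta}{2}\|\nabla F(\bar{\mathbf{x}}_t) - \bar{\mathbf{u}}_t\|^2.
\end{align*}

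Next I would split the residual into three pieces via $\|a+b+c\|^2 \leq 3(\|a\|^2+\|b\|^2+\|c\|^2)$:
\begin{align*}
\|\nabla F(\bar{\mathbf{x}}_t) - \bar{\mathbf{u}}_t\|^2 \leq 3\bigl\|\nabla F(\bar{\mathbf{x}}_t) - \tfrac{1}{N}\!\sum_{n} \nabla F_n(\mathbf{x}_{n,t})\bigr\|^2 + 3\bigl\|\tfrac{1}{N}\!\sum_{n}\bigl(\nabla F_n(\mathbf{x}_{n,t}) - \nabla \hat{F}_n(\mathbf{x}_{n,t})\bigr)\bigr\|^2 + 3\bigl\|\tfrac{1}{N}\!\sum_{n}\nabla \hat{F}_n(\mathbf{x}_{n,t}) - \bar{\mathbf{u}}_t\bigr\|^2.
\end{align*}
For the first piece I would use Jensen and the $S_F$-smoothness of each $F_n$ to get $\tfrac{S_F^2}{N}\sum_n \|\mathbf{x}_{n,t}-\bar{\mathbf{x}}_t\|^2 = \tfrac{S_F^2}{N}\|\mathbf{x}_t - \mathbf{J}\mathbf{x}_t\|^2$. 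For the second, I would apply \Cref{lem:A1}(a) after Jensen to obtain $L_g^2 S_f^2 \sigma_g^2/m$. The third piece is precisely the term kept on the right-hand side of the claim.

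I expect no substantial obstacle here; the exercise is mainly algebraic bookkeeping. The only subtle point, and the reason this differs from \Cref{lem:B1}, is that the polarization identity produces a ``two-sided'' sign pattern $-\tfrac{\eta}{2}\|\bar{\mathbf{u}}_t\|^2 + \tfrac{\eta^2 S_F}{2}\|\bar{\mathbf{u}}_t\|^2$ which combines into the coefficient $-(\tfrac{\eta}{2} - \tfrac{\eta^2 S_F}{2})$, and that I must resist the temptation to absorb the third residual piece using $\mathbb{E}[\mathbf{u}_{n,t}]=\nabla \hat{F}_n(\mathbf{x}_{n,t})$, since that identity fails for the momentum-based estimator of \Cref{alg:2}; instead this term is left explicit and will be controlled in a subsequent lemma via the variance-reduction recursion.
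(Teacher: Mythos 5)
Your proposal is correct and follows essentially the same route as the paper's own proof: smoothness plus the averaged update $\bar{\mathbf{x}}_{t+1}=\bar{\mathbf{x}}_t-\eta\bar{\mathbf{u}}_t$, the polarization identity to produce the $-(\tfrac{\eta}{2}-\tfrac{\eta^2 S_F}{2})\|\bar{\mathbf{u}}_t\|^2$ and residual terms, the three-way split with factor $3$, smoothness of each $F_n$ for the consensus term, and \Cref{lem:A1}(a) for the $L_g^2 S_f^2\sigma_g^2/m$ term, keeping the estimator-error term explicit. Your observation that, unlike \Cref{lem:B1}, one must not invoke unbiasedness of the momentum estimator and instead defer that term to the later variance-reduction recursion is exactly the distinction the paper relies on.
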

\begin{proof}
\begin{align}
F(\bar{\mathbf{x}}_{t+1}) &\stackrel{(a)}{\leq} F(\bar{\mathbf{x}}_{t}) + \langle\nabla F(\bar{\mathbf{x}}_{t}), \bar{\mathbf{x}}_{t+1}-\bar{\mathbf{x}}_{t}\rangle + \frac{S_F}{2}\left\|\bar{\mathbf{x}}_{t+1} - \bar{\mathbf{x}}_{t}\right\|^{2}  \nonumber\\
&\stackrel{(b)}{=}F(\bar{\mathbf{x}}_{t}) - \eta \langle\nabla F(\bar{\mathbf{x}}_{t}), \bar{\mathbf{u}}_{t}\rangle + \frac{\eta^2 S_F }{2} \|\bar{\mathbf{u}}_{t}\|^{2} \nonumber\\
&\stackrel{(c)}{=}F(\bar{\mathbf{x}}_{t}) - (\frac{\eta}{2} - \frac{\eta^2 S_F }{2} ) \|\bar{\mathbf{u}}_{t} \|^2 - \frac{\eta}{2} \| \nabla F(\bar{\mathbf{x}}_{t})\|^2 + \frac{\eta}{2} \|\nabla F(\bar{\mathbf{x}}_{t})- \bar{\mathbf{u}}_{t}\|^2 \nonumber\\
&\leq  F(\bar{\mathbf{x}}_{t}) - (\frac{\eta}{2} - \frac{\eta^2 S_F }{2} ) \|\bar{\mathbf{u}}_{t} \|^2 - \frac{\eta}{2} \| \nabla F(\bar{\mathbf{x}}_{t})\|^2 + \frac{3\eta}{2} \|\nabla F(\bar{\mathbf{x}}_{t})- \frac{1}{N} \sum_{n=1}^N \nabla F_n(\mathbf{x}_{n, t})\|^2 \nonumber\\ 
&+ \frac{3\eta}{2} \|\frac{1}{N} \sum_{n=1}^N \nabla F(\mathbf{x}_{n, t}) - \frac{1}{N} \sum_{n=1}^N \nabla \hat{F}_n(\mathbf{x}_{n, t})\|^2 + \frac{3\eta}{2} \|\frac{1}{N} \sum_{n=1}^N \nabla \hat{F}_n (\mathbf{x}_{n, t}) - \bar{\mathbf{u}}_{t}\|^2
\end{align}
where inequality (a) holds by the smoothness of F(x); equality (b) follows from update step in Step 9 of Algorithm \ref{alg:1}; (c) uses the fact that $ \langle a, b\rangle = \frac{1}{2}[ \|a\|^2 +\|a\|^2 - \|a - b\|^2]$.  Taking expectation on both sides and considering the last third term
\begin{align}
\mathbb{E}\|\nabla F(\bar{\mathbf{x}}_{t}) - \frac{1}{N} \sum_{n=1}^N \nabla F_n(\mathbf{x}_{n, t})\|^{2} \leq \frac{1}{N} \sum_{n=1}^{N} \mathbb{E}\|\nabla F_n(\bar{\mathbf{x}}_{t}) - \nabla F_n(\mathbf{x}_{n, t})\|^{2} \leq \frac{S_F^{2}}{N} \sum_{n=1}^{N} \|x_{n,t} - \bar{\mathbf{x}}_{t}\|^{2} 
\end{align}
Considering the last second term, we have
\begin{align}
\mathbb{E}\|\frac{1}{N} \sum_{n=1}^N \nabla F_n (\mathbf{x}_{t}) - \frac{1}{N} \sum_{n=1}^N \nabla \hat{F}_n(\mathbf{x}_{t})\|^2 \leq \frac{1}{N} \sum_{n=1}^{N} \mathbb{E} \| \nabla F_n (\mathbf{x}_{n, t}) - \nabla \hat{F}_n(\mathbf{x}_{n, t})\|^2 \leq \frac{L_g^2 S_f^2 \sigma_g^2 }{m}
\end{align}

Therefore, we obtain
\begin{align}
F(\bar{\mathbf{x}}_{t+1}) &  \leq F(\bar{\mathbf{x}}_{t}) - (\frac{\eta}{2} - \frac{\eta^2 S_F }{2} ) \|\bar{\mathbf{u}}_{t} \|^2 - \frac{\eta}{2} \| \nabla F(\bar{\mathbf{x}}_{t})\|^2 + \frac{3\eta S_F^2}{2N} \sum_{n=1}^{N} \|x_{n,t} - \bar{\mathbf{x}}_{t}\|^{2}  \nonumber\\ 
&+ \frac{3\eta L_g^2 S_f^2 \sigma_g^2}{2m} + \frac{3\eta}{2}\|\frac{1}{N} \sum_{n=1}^N \nabla \hat{F}_n(\mathbf{x}_{n, t}) - \bar{\mathbf{u}}_{t}\|^2
\end{align}
\end{proof}

\begin{lemma} \label{lem:A6}
Assume that the stochastic partial derivatives $u_{t}$  be generated from SLATE-M in Algorithm \ref{alg:2}, we have
\begin{align}
&\mathbb{E}\|\bar{\mathbf{u}}_{t + 1} - \frac{1}{N} \sum_{n=1}^N \nabla \hat{F}_n(\mathbf{x}_{n, t + 1})\|^2 
\leq (1 \alpha)^{2}\mathbb{E}\|\bar{\mathbf{u}}_{t} - \frac{1}{N} \sum_{n=1}^N \nabla \hat{F}_n(\mathbf{x}_{n, t})\|^{2} + \frac{2(1 -  \alpha)^2 S_F^2}{N^2b} \mathbb{E}\|\mathbf{x}_{t + 1} - \mathbf{x}_{t}\|^2 + \frac{2 \alpha^2 L_g^2 L_f^2}{Nb}    \nonumber\\
&\mathbb{E}\|u_{n, t + 1} - \nabla \hat{F}_n(\mathbf{x}_{n, t + 1})\|^2 \leq (1-\alpha)^{2}\mathbb{E}\|\mathbf{u}_{n, t} - \nabla \hat{F}(\mathbf{x}_{n, t})\|^{2} + \frac{2(1 -  \alpha)^2 S_F^2}{b} \mathbb{E}\|x_{n, t + 1} - \mathbf{x}_{n, t}\|^2 + \frac{2 \alpha^2 L_g^2 L_f^2}{b} 
\end{align}
\end{lemma}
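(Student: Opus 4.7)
The plan is to decompose the estimator error into a measurable piece (carrying the previous iteration's error with the momentum factor $(1-\alpha)$) and a conditionally mean-zero residual, then bound each piece separately using the variance bound on the biased gradient estimator (\Cref{lem:A1}(c)) and the smoothness of $\hat{F}_n$ (\Cref{lem:A1}(b)). Concretely, I would introduce shorthand $\hat{u}_{n,t+1} := \tfrac{1}{b}\sum_{i=1}^b \nabla \hat{F}_n(\mathbf{x}_{n,t+1}; \xi^i_{n,t+1}, \mathcal{B}_{n,t+1})$ and $\tilde{u}_{n,t} := \tfrac{1}{b}\sum_{i=1}^b \nabla \hat{F}_n(\mathbf{x}_{n,t}; \xi^i_{n,t+1}, \mathcal{B}_{n,t+1})$ (note the \emph{shared} samples), and use the momentum update together with adding and subtracting $\nabla \hat{F}_n(\mathbf{x}_{n,t})$ and $\nabla \hat{F}_n(\mathbf{x}_{n,t+1})$ to write
\begin{equation*}
\mathbf{u}_{n,t+1} - \nabla \hat{F}_n(\mathbf{x}_{n,t+1}) = (1-\alpha)\bigl(\mathbf{u}_{n,t} - \nabla \hat{F}_n(\mathbf{x}_{n,t})\bigr) + Z_{n,t+1},
\end{equation*}
with residual
\begin{equation*}
Z_{n,t+1} = \alpha\bigl(\hat{u}_{n,t+1} - \nabla \hat{F}_n(\mathbf{x}_{n,t+1})\bigr) + (1-\alpha)\bigl[(\hat{u}_{n,t+1} - \tilde{u}_{n,t}) - (\nabla \hat{F}_n(\mathbf{x}_{n,t+1}) - \nabla \hat{F}_n(\mathbf{x}_{n,t}))\bigr].
\end{equation*}

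Since the samples $(\xi^i_{n,t+1}, \mathcal{B}_{n,t+1})$ are drawn fresh at iteration $t+1$, we have $\mathbb{E}[Z_{n,t+1}\mid \mathcal{F}_t]=0$, while the $(1-\alpha)$-multiplied piece is $\mathcal{F}_t$-measurable. Expanding the squared norm, the cross term vanishes and we obtain
\begin{equation*}
\mathbb{E}\|\mathbf{u}_{n,t+1} - \nabla \hat{F}_n(\mathbf{x}_{n,t+1})\|^2 = (1-\alpha)^2\, \mathbb{E}\|\mathbf{u}_{n,t} - \nabla \hat{F}_n(\mathbf{x}_{n,t})\|^2 + \mathbb{E}\|Z_{n,t+1}\|^2.
\end{equation*}
Applying $\|a+b\|^2 \le 2\|a\|^2 + 2\|b\|^2$ to $Z_{n,t+1}$, the $\alpha$-piece is controlled by \Cref{lem:A1}(c) to give $\tfrac{2\alpha^2 L_g^2 L_f^2}{b}$, and the $(1-\alpha)$-piece is a centered average of $b$ i.i.d.\ copies of a per-sample difference whose second moment is bounded via $S_F$-smoothness of $\hat{F}_n$ by $S_F^2\|\mathbf{x}_{n,t+1}-\mathbf{x}_{n,t}\|^2$; averaging over $b$ samples yields the factor $\tfrac{2(1-\alpha)^2 S_F^2}{b}\|\mathbf{x}_{n,t+1}-\mathbf{x}_{n,t}\|^2$. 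This proves the second inequality.

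For the first inequality, I would apply the identical decomposition node by node, then average over $n$: the measurable part gives $(1-\alpha)^2\mathbb{E}\|\bar{\mathbf{u}}_t - \tfrac{1}{N}\sum_n \nabla \hat{F}_n(\mathbf{x}_{n,t})\|^2$, and the residual becomes $\bar{Z}_{t+1}=\tfrac{1}{N}\sum_n Z_{n,t+1}$. Because sampling is done independently across nodes, $\{Z_{n,t+1}\}_{n=1}^N$ are mutually independent conditional on $\mathcal{F}_t$ with zero conditional means, so $\mathbb{E}\|\bar{Z}_{t+1}\|^2 = \tfrac{1}{N^2}\sum_n \mathbb{E}\|Z_{n,t+1}\|^2$. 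Plugging in the node-wise bound and recognizing $\sum_n \|\mathbf{x}_{n,t+1}-\mathbf{x}_{n,t}\|^2 = \|\mathbf{x}_{t+1}-\mathbf{x}_t\|^2$ produces the $\tfrac{2\alpha^2 L_g^2 L_f^2}{Nb}$ and $\tfrac{2(1-\alpha)^2 S_F^2}{N^2 b}\|\mathbf{x}_{t+1}-\mathbf{x}_t\|^2$ terms.

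The main obstacle is to set up the decomposition so that $\mathbb{E}[Z_{n,t+1}\mid\mathcal{F}_t]=0$ holds; this requires that the \emph{same} batch $\mathcal{B}_{n,t+1}$ (and positive samples $\xi^i_{n,t+1}$) is used in both terms of the STORM-style correction, which is indeed the case in Step 10 of \Cref{alg:2}. The remaining subtlety is the variance bound for the $(1-\alpha)$-piece: although the $b$ positive samples share the inner batch $\mathcal{B}_{n,t+1}$, conditional on $\mathcal{B}_{n,t+1}$ the summands are i.i.d., so standard averaging-of-i.i.d.-random-variables arguments still yield the $1/b$ shrinkage after taking total expectation via the tower property.
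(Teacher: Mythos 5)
Your proposal is correct and follows essentially the same route as the paper's proof: the same STORM-style decomposition into a $(1-\alpha)$-carryover term plus a conditionally mean-zero residual built from the shared samples $(\xi^i_{n,t+1},\mathcal{B}_{n,t+1})$, the same vanishing cross term, the same regrouping of the residual into an $\alpha$-piece bounded via \Cref{lem:A1} and a $(1-\alpha)$-difference piece bounded via $S_F$-smoothness, and the same conditional independence across nodes to get the $1/N^2$ factor in the averaged inequality. No meaningful difference from the paper's argument (including the same level of informality about the $1/b$ shrinkage when the positive samples share the inner batch), so nothing further is needed.
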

\begin{proof}
Recall that $ \bar{\mathbf{u}}_{t + 1} = \frac{1}{N}\sum_{n=1}^{N} [ \frac{1}{b} \sum_{i=1}^{b} \nabla \hat{F}_n(\mathbf{x}_{n, t+1}; \xi^i_{n, t+1},\mathcal{B}_{n, t + 1}) + (1 - \alpha)(u_{n,t} -  \frac{1}{b} \sum_{i=1}^{b} \nabla \hat{F}_n(\mathbf{x}_{n,t}; \xi^i_{n, t + 1}, \mathcal{B}_{n, t + 1}))]$,\\ and $\nabla \hat{F}_n(x_{n,t}) = \mathbb{E}\nabla\hat{F}_n(x_{n,t}; \xi_{n,t}, \mathcal{B}_{n,t})$, we have
\begin{align}
&\mathbb{E}\|\bar{\mathbf{u}}_{t + 1} - \frac{1}{N} \sum_{n=1}^N \nabla \hat{F}_n(\mathbf{x}_{n, t + 1})\|^2 \nonumber\\
=& \mathbb{E}\|\frac{1}{N}\sum_{n=1}^{N}[\frac{1}{b} \sum_{i=1}^{b} \nabla \hat{F}_n(\mathbf{x}_{n, t+1}; \xi^i_{n, t + 1},\mathcal{B}_{n, t + 1}) + (1- \alpha)(u_{n,t} -  \frac{1}{b} \sum_{i=1}^{b} \nabla \hat{F}_n(\mathbf{x}_{n,t}; \xi^i_{n, t + 1},\mathcal{B}_{n, t + 1})) - \nabla \hat{F}_{n}(\mathbf{x}_{n, t+1}) ] \|^{2} \nonumber\\
=& \mathbb{E}\|\frac{1}{N}\sum_{i=1}^{N} [(\frac{1}{b} \sum_{i=1}^{b} \nabla \hat{F}_n(\mathbf{x}_{n, t+1}; \xi^i_{n, t + 1},\mathcal{B}_{n, t + 1}) - \nabla \hat{F}_n(\mathbf{x}_{n, t+1})) - (1 - \alpha)(\frac{1}{b} \sum_{i=1}^{b} \nabla \hat{F}_n(\mathbf{x}_{n,t}; \xi^i_{n, t + 1},\mathcal{B}_{n, t + 1}) - \nabla \hat{F}_n (\mathbf{x}_{n, t})) + (1 - \alpha)(\bar{\mathbf{u}}_{t} - \nabla \hat{F}(\mathbf{x}_{n, t})]\|^2 \nonumber\\
\stackrel{(a)}{=}& (1 - \alpha)^{2} \mathbb{E}\|\bar{\mathbf{u}}_{t} - \frac{1}{N} \sum_{n=1}^N \nabla \hat{F}_n(\mathbf{x}_{n, t})\|^{2} + \frac{1}{N^2} \sum_{i=1}^{N}\mathbb{E} \|(\frac{1}{b} \sum_{i=1}^{b} \nabla \hat{F}_n(\mathbf{x}_{n, t+1}; \xi^i_{n, t + 1},\mathcal{B}_{n, t + 1}) - \nabla \hat{F}_n(\mathbf{x}_{n, t+1})) \nonumber\\
&-(1- \alpha)(\frac{1}{b} \sum_{i=1}^{b} \nabla \hat{F}_n(\mathbf{x}_{n,t}; \xi^i_{n, t + 1},\mathcal{B}_{n, t + 1}) - \nabla \hat{F}_n(\mathbf{x}_{n, t}))\|^2 \nonumber\\
=& (1-\alpha)^{2}\mathbb{E}\|\bar{\mathbf{u}}_{t} -  \frac{1}{N} \sum_{n=1}^N \nabla \hat{F}_n(\mathbf{x}_{n, t}) \|^{2} + \frac{1}{N^2} \sum_{i=1}^{N} \mathbb{E}\|(1 - \alpha)[(\frac{1}{b} \sum_{i=1}^{b} \nabla \hat{F}_n(\mathbf{x}_{n, t+1}; \xi^i_{n, t + 1},\mathcal{B}_{n, t + 1}) - \nabla \hat{F}_n(\mathbf{x}_{n, t+1})) \nonumber\\
&- (\frac{1}{b} \sum_{i=1}^{b} \nabla \hat{F}_n(\mathbf{x}_{n,t}; \xi^i_{n, t + 1},\mathcal{B}_{n, t + 1}) - \nabla \hat{F}_n(\mathbf{x}_{n, t}))] +  \alpha(\frac{1}{b} \sum_{i=1}^{b} \nabla \hat{F}_n(\mathbf{x}_{n, t + 1}; \xi^i_{n, t + 1},\mathcal{B}_{n, t + 1}) - \nabla \hat{F}_n(\mathbf{x}_{n, t + 1})) \|^2 \nonumber\\
\leq& (1-\alpha)^{2}\mathbb{E}\|\bar{\mathbf{u}}_{t} - \frac{1}{N} \sum_{n=1}^N \nabla \hat{F}_n(\mathbf{x}_{n, t}) \|^{2} + \frac{2(1 -  \alpha)^2}{N^2} \sum_{i=1}^{N} \mathbb{E}\|(\frac{1}{b} \sum_{i=1}^{b} \nabla \hat{F}_n(\mathbf{x}_{n, t + 1}; \xi^i_{n, t + 1},\mathcal{B}_{n, t + 1}) - \nabla \hat{F}_n(\mathbf{x}_{n, t + 1})) \nonumber\\
&- (\frac{1}{b} \sum_{i=1}^{b} \nabla \hat{F}_n(\mathbf{x}_{n,t}; \xi^i_{n, t + 1},\mathcal{B}_{n, t + 1}) - \nabla \hat{F}_n(\mathbf{x}_{n, t + 1}))\|^2 +  \frac{2 \alpha^2}{N^2}\sum_{i=1}^{N} \mathbb{E}\|(\frac{1}{b} \sum_{i=1}^{b} \nabla \hat{F}_n(\mathbf{x}_{n, t + 1}; \xi^i_{n, t + 1},\mathcal{B}_{n, t + 1}) - \nabla \hat{F}_n(\mathbf{x}_{n, t + 1}))\|^2 \nonumber\\
\stackrel{(b)}{\leq}& (1-\alpha)^{2}\mathbb{E}\|\bar{\mathbf{u}}_{t} - \frac{1}{N} \sum_{n=1}^N \nabla \hat{F}_n(\mathbf{x}_{n, t}))\|^{2} +  \frac{2(1 -  \alpha)^2 S_F^2}{N^2b} \mathbb{E}\|\mathbf{x}_{t + 1} - \mathbf{x}_{t}\|^2 + \frac{2 \alpha^2 L_g^2 L_f^2}{Nb} 
\end{align}
where (a) holds due to $\mathbb{E} [(\frac{1}{b} \sum_{i=1}^{b} \nabla \hat{F}_n(\mathbf{x}_{n, t+1}; \xi^i_{n, t + 1},\mathcal{B}_{n, t + 1}) - \nabla \hat{F}_n(\mathbf{x}_{n, t+1})) - (1 - \alpha)(\frac{1}{b} \sum_{i=1}^{b} \nabla \hat{F}_n(\mathbf{x}_{n,t}; \xi^i_{n, t + 1},\mathcal{B}_{n, t + 1}) - \nabla \hat{F}_n (\mathbf{x}_{n, t}))] = 0$ and (b) is due to the  \cref{lem:A1} (b).Similarly, we have 
\begin{align}
& \mathbb{E}\|u_{n, t + 1} - \nabla \hat{F}_n(\mathbf{x}_{n, t + 1})\|^2 \nonumber\\
=& \mathbb{E}\|[\frac{1}{b} \sum_{i=1}^{b} \nabla \hat{F}_n(\mathbf{x}_{n, t + 1}; \xi^i_{n, t + 1},\mathcal{B}_{n, t + 1}) - \nabla \hat{F}_n(\mathbf{x}_{n, t + 1})] \nonumber\\
&- (1 -  \alpha)(\frac{1}{b} \sum_{i=1}^{b} \nabla \hat{F}_n(\mathbf{x}_{n, t + 1}; \xi^i_{n, t + 1},\mathcal{B}_{n, t + 1}) - \nabla \hat{F}_n (x_{n, t + 1})) + (1 - \alpha)(u_{n, t} - \nabla \hat{F}_n(\mathbf{x}_{n, t})\|^2 \nonumber\\
=& (1 - \alpha)^{2}\mathbb{E}\|u_{n, t} - \nabla \hat{F}_n (\mathbf{x}_{n, t})\|^{2} + \mathbb{E} \|(\frac{1}{b} \sum_{i=1}^{b} \nabla \hat{F}_n(\mathbf{x}_{n, t + 1}; \xi^i_{n, t + 1},\mathcal{B}_{n, t + 1}) - \nabla \hat{F}_n(\mathbf{x}_{n, t + 1})) \nonumber\\
& - (1 - \alpha)(\frac{1}{b} \sum_{i=1}^{b} \nabla \hat{F}_n(\mathbf{x}_{n,t}; \xi^i_{n, t + 1},\mathcal{B}_{n, t + 1}) - \nabla \hat{F}_n(\mathbf{x}_{n, t}))\|^2 \nonumber\\
\leq& (1 - \alpha)^{2}\mathbb{E}\|u_{n, t} - \nabla \hat{F}_n (\mathbf{x}_{n, t})\|^{2} + 2(1 - \alpha)^2 \mathbb{E}\|(\frac{1}{b} \sum_{i=1}^{b} \nabla \hat{F}_n(\mathbf{x}_{n, t + 1}; \xi^i_{n, t + 1},\mathcal{B}_{n, t + 1}) - \nabla \hat{F}_n(\mathbf{x}_{n, t + 1})) \nonumber\\
&- (\frac{1}{b} \sum_{i=1}^{b} \nabla \hat{F}_n(\mathbf{x}_{n,t}; \xi^i_{n, t + 1},\mathcal{B}_{n, t + 1}) - \nabla \hat{F}_n(\mathbf{x}_{n, t}))\|^2 +  2 \alpha^2 \mathbb{E}\|(\frac{1}{b} \sum_{i=1}^{b} \nabla \hat{F}_n(\mathbf{x}_{n, t + 1}; \xi^i_{n, t + 1},\mathcal{B}_{n, t + 1}) - \nabla \hat{F}_n(\mathbf{x}_{n, t + 1}))\|^2 \nonumber\\
\leq& (1-\alpha)^{2}\mathbb{E}\|\mathbf{u}_{n, t} - \nabla \hat{F}(\mathbf{x}_{n, t})\|^{2} + \frac{2(1 -  \alpha)^2 S_F^2}{b} \mathbb{E}\|x_{n, t + 1} - \mathbf{x}_{n, t}\|^2 + \frac{2 \alpha^2 L_g^2 L_f^2}{b} 
\end{align}
\end{proof}
\begin{lemma}
Suppose sequence $\mathbf{v}_{t}$ are generated by \Cref{alg:2} and if $0<\eta \leq \frac{1 - \lambda^2}{2 \sqrt{24} \lambda^2 S_F}$, we have
\begin{align}
&\mathbb{E}\left\|\mathbf{v}_{t+1} - \mathbf{J} \mathbf{v}_{t+1}\right\|^2
\leq  \frac{3 + \lambda^2}{4} \mathbb{E}\left\|\mathbf{v}_t - \mathbf{J} \mathbf{v}_t\right\|^2 + \frac{21 \lambda^2 N S_F^2 \eta^2}{1 - \lambda^2} \mathbb{E}\left\|\bar{\mathbf{u}}_{t}\right\|^2  \nonumber\\
&+\frac{63 \lambda^2 S_F^2}{1 - \lambda^2} \mathbb{E}\left\|\mathbf{x}_{t} - \mathbf{J} \mathbf{x}_{t}\right\|^2 + \frac{7 \lambda^2 \alpha^2}{1-\lambda^2} \mathbb{E}\left\|\mathbf{u}_{t} - \nabla \hat{\mathbf{F}}_{t} \right\|^2 + 3 \lambda^2 N \alpha^2 L_f^2 L_g^2 \nonumber
\end{align}
\begin{align}
\mathbb{E}\left\|\mathbf{v}_0 - \mathbf{J v}_0\right\|^2
\leq \lambda^2 \mathbb{E} \left\|\mathbf{u}_0 - \nabla \hat{\mathbf{F}}_0 \right\|^2 + \lambda^2 \mathbb{E} \left\| \nabla \hat{\mathbf{F}}_0 \right\|^2 \nonumber
\end{align}
\end{lemma}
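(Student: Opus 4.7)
The plan is to follow the structure of Lemma 7 for SLATE, but re-bound $\|\mathbf{u}_{t+1}-\mathbf{u}_t\|^2$ and the cross term using the momentum-based variance-reduction (STORM) update specific to \Cref{alg:2}. Starting from the matrix form $\mathbf{v}_{t+1}=\mathbf{W}(\mathbf{v}_t+\mathbf{u}_{t+1}-\mathbf{u}_t)$ and using $\mathbf{WJ}=\mathbf{JW}=\mathbf{J}$ together with $\|\mathbf{W}-\mathbf{J}\|=\lambda$ (Lemma~\ref{lem:A2}(a)), I expand
\begin{equation*}
\mathbb{E}\|\mathbf{v}_{t+1}-\mathbf{J}\mathbf{v}_{t+1}\|^2 \le \lambda^2\mathbb{E}\|\mathbf{v}_t-\mathbf{J}\mathbf{v}_t\|^2 + \lambda^2 \mathbb{E}\|\mathbf{u}_{t+1}-\mathbf{u}_t\|^2 + 2\,\mathbb{E}\langle(\mathbf{W}-\mathbf{J})\mathbf{v}_t,(\mathbf{W}-\mathbf{J})(\mathbf{u}_{t+1}-\mathbf{u}_t)\rangle.
\end{equation*}

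The first key step is to decompose the STORM increment as $\mathbf{u}_{n,t+1}-\mathbf{u}_{n,t}=[\widetilde{\nabla}\hat F_n(\mathbf{x}_{n,t+1})-\widetilde{\nabla}\hat F_n(\mathbf{x}_{n,t})]+\alpha[\widetilde{\nabla}\hat F_n(\mathbf{x}_{n,t})-\mathbf{u}_{n,t}]$, where $\widetilde{\nabla}\hat F_n$ denotes the mini-batch gradient used in Line~10. Young's inequality and the $S_F$-smoothness of $\hat F_n$ (Lemma~\ref{lem:A1}(b)) yield a bound involving $\mathbb{E}\|\mathbf{x}_{t+1}-\mathbf{x}_t\|^2$, $\alpha^2\mathbb{E}\|\mathbf{u}_t-\nabla\hat{\mathbf{F}}_t\|^2$, and, after adding and subtracting $\nabla\hat F_n(\mathbf{x}_{n,t})$ and applying Lemma~\ref{lem:A1}(c), an $N\alpha^2 L_f^2 L_g^2$ remainder. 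I then substitute the expansion of $\mathbb{E}\|\mathbf{x}_{t+1}-\mathbf{x}_t\|^2$ in terms of $\mathbb{E}\|\mathbf{x}_t-\mathbf{J}\mathbf{x}_t\|^2$, $N\eta^2\mathbb{E}\|\bar{\mathbf{u}}_t\|^2$, and $\eta^2\lambda^2\mathbb{E}\|\mathbf{v}_t-\mathbf{J}\mathbf{v}_t\|^2$ (as in \eqref{eq:24}).

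The second key step is the cross term. Mirroring the split used in \eqref{eq:31}–\eqref{eq:32}, I separate $\mathbf{u}_{t+1}-\mathbf{u}_t$ into its (conditional) mean part $(\nabla\hat{\mathbf{F}}_{t+1}-\nabla\hat{\mathbf{F}}_t)+\alpha(\nabla\hat{\mathbf{F}}_t-\mathbf{u}_t)$ and a mean-zero noise part. The mean-zero piece contributes $O(L_f^2L_g^2)$ via the same diagonal calculation that produced \eqref{eq:32}; the smoothness piece is handled by Cauchy–Schwarz, the bound $\|\mathbf{x}_{t+1}-\mathbf{x}_t\|$ from \eqref{eq:29}, and Young's inequality; the new $\alpha$-piece is split by Young to give $\alpha^2\|\mathbf{u}_t-\nabla\hat{\mathbf{F}}_t\|^2$ and a small multiple of $\|\mathbf{v}_t-\mathbf{J}\mathbf{v}_t\|^2$.

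Finally, I collect all contributions to $\mathbb{E}\|\mathbf{v}_t-\mathbf{J}\mathbf{v}_t\|^2$ and choose the Young coefficients so that the contraction factor becomes $\frac{3+\lambda^2}{4}$; the step-size hypothesis $\eta\le\frac{1-\lambda^2}{2\sqrt{24}\,\lambda^2 S_F}$ is exactly what is needed to absorb the $\lambda^2(12\eta^2 S_F^2+\dots)$ correction into the gap $\frac{3+\lambda^2}{4}-\lambda^2$. The other terms fall out with the stated constants. For the base case $t=0$, I use $\mathbf{v}_0=\mathbf{W}\mathbf{u}_0$, so $\|\mathbf{v}_0-\mathbf{J}\mathbf{v}_0\|^2\le\lambda^2\|\mathbf{u}_0\|^2$; unbiasedness $\mathbb{E}[\mathbf{u}_0]=\nabla\hat{\mathbf{F}}_0$ (using the initial batch of size $B$) and Pythagoras give $\mathbb{E}\|\mathbf{u}_0\|^2=\mathbb{E}\|\mathbf{u}_0-\nabla\hat{\mathbf{F}}_0\|^2+\|\nabla\hat{\mathbf{F}}_0\|^2$, which is precisely the second claim.

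The main obstacle is the bookkeeping in step two: the STORM update creates an extra $\alpha$-scaled, data-dependent term in $\mathbf{u}_{t+1}-\mathbf{u}_t$ that does not appear in SLATE, and it must be routed carefully through both the magnitude term and the cross term so that it ends up as the $\tfrac{7\lambda^2\alpha^2}{1-\lambda^2}\mathbb{E}\|\mathbf{u}_t-\nabla\hat{\mathbf{F}}_t\|^2$ and $3\lambda^2 N\alpha^2 L_f^2L_g^2$ summands rather than polluting the contraction coefficient.
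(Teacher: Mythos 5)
Your overall route is the same as the paper's: expand $\mathbf{v}_{t+1}-\mathbf{J}\mathbf{v}_{t+1}$ via $\mathbf{v}_{t+1}=\mathbf{W}(\mathbf{v}_t+\mathbf{u}_{t+1}-\mathbf{u}_t)$ and $\|\mathbf{W}-\mathbf{J}\|=\lambda$, bound $\mathbb{E}\|\mathbf{u}_{t+1}-\mathbf{u}_t\|^2$ through the STORM decomposition (giving the $3S_F^2\|\mathbf{x}_{t+1}-\mathbf{x}_t\|^2+3\alpha^2\|\mathbf{u}_t-\nabla\hat{\mathbf{F}}_t\|^2+3\alpha^2L_f^2L_g^2$ pieces and hence, after \eqref{eq:24}, the $\bar{\mathbf{u}}_t$ and consensus terms), handle the cross term through its conditional mean, and finish the base case by $\mathbf{v}_0=\mathbf{W}\mathbf{u}_0$ plus unbiasedness. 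The $t=0$ claim and the contraction bookkeeping are fine.

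The genuine flaw is in your cross-term treatment. You assert that the conditionally mean-zero noise part of $\mathbf{u}_{t+1}-\mathbf{u}_t$ "contributes $O(L_f^2L_g^2)$ via the same diagonal calculation that produced \eqref{eq:32}." It does not: that noise is the deviation of the freshly sampled iteration-$(t+1)$ mini-batch gradients from their means, it has zero mean conditioned on $\mathcal{F}_{t+1}$, and $\mathbf{v}_t$ is $\mathcal{F}_{t+1}$-measurable, so its inner product with $(\mathbf{W}-\mathbf{J})\mathbf{v}_t$ vanishes exactly. The diagonal argument of \eqref{eq:32} was needed in the SLATE lemma only because there the leftover term was the \emph{time-$t$} noise $\nabla\hat{\mathbf{F}}_t-\mathbf{u}_t$, which is correlated with $\mathbf{v}_t$; in SLATE-M that time-$t$ noise appears already multiplied by $\alpha$ inside the conditional mean $\nabla\hat{\mathbf{F}}_{t+1}-\nabla\hat{\mathbf{F}}_t-\alpha(\mathbf{u}_t-\nabla\hat{\mathbf{F}}_t)$ and is absorbed by Cauchy--Schwarz/Young into $\frac{4\lambda^4\alpha^2}{1-\lambda^2}\|\mathbf{u}_t-\nabla\hat{\mathbf{F}}_t\|^2$, exactly as you do for your "$\alpha$-piece." If you instead carry an extra additive $O(\lambda^2 L_f^2L_g^2)$ term that is \emph{not} scaled by $\alpha^2$, your recursion is strictly weaker than the stated lemma, whose only constant is $3\lambda^2N\alpha^2L_f^2L_g^2$; this is not a cosmetic difference, because in the proof of \Cref{thm:2} that constant is summed over $T$ iterations and must shrink like $\alpha^2=O(N^{2/3}T^{-4/3})$ for the $O((NT)^{-2/3})$ rate to survive, whereas an unscaled $L_f^2L_g^2$ term in the $\mathbf{v}$-consensus recursion would leave a non-vanishing floor in the final bound. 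Dropping that spurious term (i.e., noting the fresh-noise cross term is exactly zero) repairs the argument and recovers the paper's proof.
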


\begin{proof}
Similar to \eqref{eq:22}, we have
\begin{align} \label{eq:49}
\left\|\mathbf{v}_{t+1} - \mathbf{J} \mathbf{v}_{t+1}\right\|^2 \leq \lambda^2\left\|\mathbf{v}_t - \mathbf{J} \mathbf{v}_t\right\|^2 + \lambda^2\left\|\mathbf{u}_{t + 1} - \mathbf{u}_{t}\right\|^2 +  2\left\langle (\mathbf{W} - \mathbf{J}) \mathbf{v}_t, (\mathbf{W} - \mathbf{J})\left(\mathbf{u}_{t + 1} - \mathbf{u}_{t}\right)\right\rangle
\end{align}
To bound the above terms, we recall the update of each local stochastic gradient estimator $\mathbf{u}_{n,t}$ in Algorithm \ref{alg:2}: $\forall t \geq 0$,
$$
u_{n, t + 1} = \frac{1}{b} \sum_{i=1}^{b} \nabla \hat{F}_n(\mathbf{x}_{n, t+1}; \xi^i_{n, t + 1},\mathcal{B}_{n, t + 1}) + (1-\alpha)(u_{n,t} -  \frac{1}{b} \sum_{i=1}^{b} \nabla \hat{F}_n(\mathbf{x}_{n, t}; \xi^i_{n, t + 1}, \mathcal{B}_{n, t + 1}))
$$
Firstly, we consider the second term in \eqref{eq:49}, We have that $\forall t \geq 0$ and $\forall n \in [N]$
\begin{align} \label{eq:50}
\mathbf{u}_{n, t + 1} - \mathbf{u}_{n,t} &= \frac{1}{b} \sum_{i=1}^{b} \nabla \hat{F}_n(\mathbf{x}_{n, t+1}; \xi^i_{n, t + 1},\mathcal{B}_{n, t + 1}) - \alpha u_{n,t} - (1-\alpha) \frac{1}{b} \sum_{i=1}^{b} \nabla \hat{F}_n(\mathbf{x}_{n,t}; \xi^i_{n, t + 1},\mathcal{B}_{n, t + 1})) \nonumber\\ 
&= \frac{1}{b} \sum_{i=1}^{b} \nabla \hat{F}_n(\mathbf{x}_{n, t+1}; \xi^i_{n, t + 1},\mathcal{B}_{n, t + 1}) - \frac{1}{b} \sum_{i=1}^{b} \nabla \hat{F}_n(\mathbf{x}_{n,t}; \xi^i_{n, t + 1},\mathcal{B}_{n, t + 1}) - \alpha (\mathbf{u}_{n,t} - \nabla \hat{F}_n(\mathbf{x}_{n,t})) \nonumber\\ 
&+ \alpha (\frac{1}{b} \sum_{i=1}^{b} \nabla \hat{F}_n(\mathbf{x}_{n,t}; \xi^i_{n, t + 1},\mathcal{B}_{n, t + 1}) - \nabla \hat{F}_n (\mathbf{x}_{n,t}))  
\end{align}

We take the expectation to obtain: $\forall t \geq 1$ and $\forall i \in [1, N]$,
\begin{align} \label{eq:51}
\mathbb{E}\left\|\mathbf{u}_{n, t + 1} - \mathbf{u}_{n,t}\right\|^2 \leq & 3 \mathbb{E}\|\frac{1}{b} \sum_{i=1}^{b} \nabla \hat{F}_n(\mathbf{x}_{n, t+1}; \xi^i_{n, t + 1},\mathcal{B}_{n, t + 1}) - \frac{1}{b} \sum_{i=1}^{b} \nabla \hat{F}_n(\mathbf{x}_{n,t}; \xi^i_{n, t + 1},\mathcal{B}_{n, t + 1}) \|^2 \nonumber \\
+ 3 \alpha^2 \mathbb{E}\| u_{n,t} &- \nabla \hat{F}_n(\mathbf{x}_{n,t}) \|^2 + 3 \alpha^2 \mathbb{E}\left\|\frac{1}{b} \sum_{i=1}^{b} \nabla \hat{F}_n(\mathbf{x}_{n,t}; \xi^i_{n, t + 1},\mathcal{B}_{n, t + 1}) - \nabla \hat{F}_n(\mathbf{x}_{n,t})\right\|^2 \nonumber\\
\leq & 3 S_F^2 \mathbb{E} \left\|\mathbf{x}_{n, t + 1} - \mathbf{x}_{n,t}\right\|^2 + 3 \alpha^2 \mathbb{E}\left\|u_{n,t} - \nabla \hat{F}_n(\mathbf{x}_{n,t}) \right\|^2 + 3 \alpha^2 L_g^2 L_f^2
\end{align}

Next, towards the last term in \eqref{eq:49}. Then we take expectation on the both sides of \eqref{eq:50}, we have that $\forall t \geq 1$,
\begin{align} \label{eq:52}
\mathbb{E}\left[\mathbf{u}_{t + 1} - \mathbf{u}_{t} \mid \mathcal{F}_{t + 1}\right] = \nabla \hat{\mathbf{F}}_{t + 1} - \nabla \hat{\mathbf{F}}_t - \alpha (\mathbf{u}_{t} - \nabla \hat{\mathbf{F}}_{t})
\end{align}
Then we discuss the bound of the third term with $\mathcal{F}_{t + 1}$-measurability of $\mathbf{v}_{t + 1}$, we have: $\forall t \geq 1$,
\begin{align} \label{eq:53}
& 2\left\langle (\mathbf{W} - \mathbf{J}) \mathbf{v}_t, (\mathbf{W}-\mathbf{J}) \mathbb{E}\left[\mathbf{u}_{t + 1} - \mathbf{u}_{t} \mid \mathcal{F}_{t + 1}\right]\right\rangle \nonumber\\
\stackrel{(a)}{=}& 2 \left\langle (\mathbf{W} - \mathbf{J}) \mathbf{v}_t, (\mathbf{W}-\mathbf{J}) \left(\nabla \hat{\mathbf{F}}_{t + 1} - \nabla \hat{\mathbf{F}}_{t} - \alpha (\mathbf{u}_{t} - \nabla \hat{\mathbf{F}}_{t})\right)\right\rangle  \nonumber\\
\stackrel{(b)}{\leq}& 2 \lambda\left\|\mathbf{v}_t - \mathbf{J} \mathbf{v}_t\right\| \cdot \lambda\left\|\nabla \hat{\mathbf{F}}_{t + 1} - \nabla \hat{\mathbf{F}}_{t} - \alpha (\mathbf{u}_{t} - \nabla \hat{\mathbf{F}}_{t})\right\| \nonumber\\
\stackrel{(c)}{\leq} & \frac{1 - \lambda^2}{2}\left\|\mathbf{v}_t - \mathbf{J} \mathbf{v}_t\right\|^2 + \frac{2 \lambda^4}{1-\lambda^2}\left\|\nabla \hat{\mathbf{F}}_{t + 1} - \nabla \hat{\mathbf{F}}_{t} - \alpha (\mathbf{u}_{t} - \nabla \hat{\mathbf{F}}_{t}) \right\|^2, \nonumber\\
\stackrel{(d)}{\leq} & \frac{1 - \lambda^2}{2}\left\|\mathbf{v}_t - \mathbf{J} \mathbf{v}_t\right\|^2 + \frac{4 \lambda^4 S_F^2}{1-\lambda^2}\|\mathbf{x}_{t + 1} - \mathbf{x}_{t} \|^2 + \frac{4 \lambda^4 \alpha^2}{1-\lambda^2}\| \mathbf{u}_{t} - \nabla \hat{\mathbf{F}}_{t} \|^2
\end{align}

where (a) uses \eqref{eq:52}, (b) is due to the Cauchy-Schwarz inequality and $\|\mathbf{W}-\mathbf{J}\|= \lambda$, (c) uses the elementary inequality that $2 a b \leq c_0 a^2 + b^2 / c_0$, with $c_0 = \frac{1-\lambda^2}{2 \lambda^2}$ for any $a, b \in \mathbb{R}$,  and $(d)$ holds since each $\hat{F}(x)$ is $L$-smooth. 

Putting \eqref{eq:51} and \eqref{eq:53} into to \eqref{eq:49} obtain: $\forall t \geq 1$,
\begin{align}
\mathbb{E}\left\|\mathbf{v}_{t+1} - \mathbf{J} \mathbf{v}_{t+1}\right\|^2 \leq & \frac{1 + \lambda^2}{2} \mathbb{E}\left\|\mathbf{v}_t - \mathbf{J} \mathbf{v}_t\right\|^2 + \frac{7 \lambda^2 S_F^2}{1 - \lambda^2} \mathbb{E}\left\|\mathbf{x}_{t + 1} - \mathbf{x}_{t}\right\|^2 + \frac{7 \lambda^2 \alpha^2}{1-\lambda^2} \mathbb{E}\left\|\mathbf{u}_{t} - \nabla \hat{\mathbf{F}}_{t} \right\|^2 + 3 \lambda^2 N \alpha^2 L_f^2 L_g^2 \nonumber\\
\stackrel{(a)}{\leq} &\left(\frac{1 + \lambda^2}{2} + \frac{42 \lambda^4 S_F^2 \eta^2}{1 - \lambda^2}\right) \mathbb{E}\left\|\mathbf{v}_t - \mathbf{J} \mathbf{v}_t\right\|^2 + \frac{21 \lambda^2 N S_F^2 \eta^2}{1 - \lambda^2} \mathbb{E}\left\|\bar{\mathbf{u}}_{t}\right\|^2  \nonumber\\
&+\frac{63 \lambda^2 S_F^2}{1 - \lambda^2} \mathbb{E}\left\|\mathbf{x}_{t} - \mathbf{J} \mathbf{x}_{t}\right\|^2 + \frac{7 \lambda^2 \alpha^2}{1-\lambda^2} \mathbb{E}\left\|\mathbf{u}_{t} - \nabla \hat{\mathbf{F}}_{t} \right\|^2 + 3 \lambda^2 N \alpha^2 L_f^2 L_g^2\nonumber\\
\stackrel{(b)}{\leq} & \frac{3 + \lambda^2}{4} \mathbb{E}\left\|\mathbf{v}_t - \mathbf{J} \mathbf{v}_t\right\|^2 + \frac{21 \lambda^2 N S_F^2 \eta^2}{1 - \lambda^2} \mathbb{E}\left\|\bar{\mathbf{u}}_{t}\right\|^2  \nonumber\\
&+\frac{63 \lambda^2 S_F^2}{1 - \lambda^2} \mathbb{E}\left\|\mathbf{x}_{t} - \mathbf{J} \mathbf{x}_{t}\right\|^2 + \frac{7 \lambda^2 \alpha^2}{1-\lambda^2} \mathbb{E}\left\|\mathbf{u}_{t} - \nabla \hat{\mathbf{F}}_{t} \right\|^2 + 3 \lambda^2 N \alpha^2 L_f^2 L_g^2
\end{align}
where (a) follows the \eqref{eq:24} and (b) holds due to $\frac{1 + \lambda^2}{2} + \frac{42 \lambda^4 S_F^2 \eta^2}{1 - \lambda^2} \leq \frac{3 + \lambda^2}{4}$ if $0 < \alpha \leq \frac{1 - \lambda^2}{2 \sqrt{42} \lambda^2 S_F}$. 
In addition, 
\begin{align} \mathbb{E}\left\|\mathbf{v}_0 - \mathbf{J v}_0\right\|^2
&=\mathbb{E} \left\|\mathbf{W}\left(\mathbf{u}_{0}\right) - \mathbf{J} \mathbf{W}\left( \mathbf{u}_{0} \right)\right\|^2  = \mathbb{E}\left\|(\mathbf{W}-\mathbf{J}) \mathbf{u}_1\right\|^2 \nonumber\\ 
& \leq\lambda^2 \mathbb{E} \left\|\mathbf{u}_0 - \nabla \hat{\mathbf{F}}(\mathbf{x}_0) + \nabla \hat{\mathbf{F}}(\mathbf{x}_0) \right\|^2 \nonumber\\ 
& \leq \lambda^2 \mathbb{E} \left\|\mathbf{u}_0 - \nabla \hat{\mathbf{F}}_0 \right\|^2 + \lambda^2 \mathbb{E} \left\| \nabla \hat{\mathbf{F}}_0 \right\|^2 \nonumber
\end{align}
\end{proof}
\subsection{proof of Theorem }
Then we start the proof of Theorem
\ref{thm:2}.
\begin{proof}
Recall that 
\begin{align}
&\mathbb{E}\|\bar{\mathbf{u}}_{t + 1} - \frac{1}{N} \sum_{n=1}^N \nabla \hat{F}(x_{n, t + 1})\|^2 \leq (1-\alpha)^{2}\mathbb{E}\|\bar{\mathbf{u}}_{t} - \frac{1}{N} \sum_{n=1}^N  \nabla \hat{F}(\mathbf{x}_{n, t})\|^{2} +  \frac{2(1 -  \alpha)^2 S_F^2}{N^2b} \mathbb{E}\|\mathbf{x}_{t + 1} - \mathbf{x}_{t}\|^2 + \frac{2 \alpha^2 L_g^2 L_f^2}{Nb} 
\end{align}
We know that $\frac{1}{1-(1-\alpha)^2} \leq \frac{1}{\alpha}$ for $\alpha \in (0, 1)$. Based on \Cref{lem:A3}, we have: $\forall T \geq 2$,
\begin{align} \label{eq:61}
&\sum_{t=0}^{T-1} \mathbb{E}\left\|\bar{\mathbf{u}}_{t} - \frac{1}{N} \sum_{n=1}^N \nabla \hat{F}(x_{n, 0})\right\|^2 \\
\leq& \frac{\mathbb{E}\left\|\bar{\mathbf{u}}_{0} - \frac{1}{N} \sum_{n=1}^N \nabla \hat{F}(x_{n, 0})\right\|^2}{\alpha} + \sum_{t = 0}^{T - 2} \frac{2 S_F^2}{N^2 \alpha b} \mathbb{E}\|\mathbf{x}_{t + 1} - \mathbf{x}_{t}\|^2 + \frac{2 \alpha L_g^2 L_f^2}{Nb} T \nonumber \\
\leq& \frac{\mathbb{E}\left\|\bar{\mathbf{u}}_{0} - \frac{1}{N} \sum_{n=1}^N \nabla \hat{F}(x_{n, 0})\right\|^2}{\alpha} + \frac{6 S_F^2}{N^2 \alpha b} \sum_{t = 0}^{T - 2} [ \mathbb{E}\|\mathbf{x}_{t + 1} - \mathbf{Jx}_{t + 1}\|^2 + \|\mathbf{Jx}_{t + 1} - \mathbf{Jx}_{t} \|^2 + \| \mathbf{x}_t - \mathbf{J} \mathbf{x}_{t}\|^2 ] + \frac{2 \alpha L_g^2 L_f^2}{Nb} T \nonumber \\
\leq& \frac{\mathbb{E}\left\|\bar{\mathbf{u}}_{0} - \frac{1}{N} \sum_{n=1}^N \nabla \hat{F}(x_{n, 0})\right\|^2}{\alpha} + \frac{12 S_F^2}{N^2 \alpha b} \sum_{t=0}^{T - 1}  \mathbb{E}\|\mathbf{x}_{t} - \mathbf{Jx}_{t}\|^2 +  \frac{6 \eta^2  S_F^2 }{N \alpha b} \sum_{t=0}^{T - 1}  \| \bar{\mathbf{u}}_t \|^2 + \frac{2 \alpha L_g^2 L_f^2}{Nb} T  \nonumber\\
\leq& \frac{L_g^2 L_f^2 }{\alpha N B} + \frac{12 S_F^2}{N^2 \alpha b} \sum_{t = 0}^{T - 1}  \mathbb{E}\|\mathbf{x}_{t} - \mathbf{Jx}_{t}\|^2 +  \frac{6 \eta^2  S_F^2 }{N \alpha b} \sum_{t = 0}^{T - 1}  \| \bar{\mathbf{u}}_t \|^2 + \frac{2 \alpha L_g^2 L_f^2}{Nb} T \nonumber
\end{align}
where $B$ is the initial batch size. 
Recall that 
\begin{align}
\mathbb{E}\|u_{n, t + 1} - \nabla \hat{F}_n(\mathbf{x}_{n, t + 1})\|^2 \leq (1-\alpha)^{2}\mathbb{E}\|\mathbf{u}_{n, t} - \nabla \hat{F}(\mathbf{x}_{n, t})\|^{2} + \frac{2(1 -  \alpha)^2 S_F^2}{b} \mathbb{E}\|x_{n, t + 1} - \mathbf{x}_{n, t}\|^2 + \frac{2 \alpha^2 L_g^2 L_f^2}{b}  
\end{align}
Similarly, we have the following: $\forall T \geq 2$,
\begin{align} \label{eq:63}
&\sum_{n=1}^{N} \sum_{t=0}^{T - 1} \mathbb{E}\left\|\mathbf{u}_{n, t} - \nabla \hat{F}_n(\mathbf{x}_{n, t})\right\|^2 \\
\leq & \sum_{n=1}^{N}\frac{\mathbb{E}\left\|\mathbf{u}_{n, 0} - \nabla \hat{F}_n(x_{n, 0 })\right\|^2}{\alpha} + \frac{2 S_F^2}{\alpha b} \sum_{t = 0}^{T - 2} \mathbb{E}\|x_{t + 1} - x_{t}\|^2 + \frac{2 \alpha N L_g^2 L_f^2}{b } T \nonumber\\
\leq & \sum_{n=1}^{N} \frac{\mathbb{E}\left\|\mathbf{u}_{n, 0} - \nabla \hat{F}_n (x_{n, 0})\right\|^2}{\alpha} + \frac{6 N S_F^2 \eta^2}{\alpha} \sum_{t=1}^{T-1} \mathbb{E}\left\|\bar{\mathbf{u}}_t \right\|^2 + \frac{12 S_F^2}{\alpha b} \sum_{t=1}^T \mathbb{E} \left\|\mathbf{x}_t-\mathbf{J} \mathbf{x}_t\right\|^2  + \frac{2 \alpha N L_g^2 L_f^2}{b } T \nonumber\\
\leq & \frac{N L_f^2 L_g^2}{\alpha B} + \frac{6 N S_F^2 \eta^2}{\alpha} \sum_{t=0}^{T-2} \mathbb{E}\left\|\bar{\mathbf{u}}_t \right\|^2 + \frac{12 S_F^2}{\alpha b} \sum_{t=0}^{T- 1} \mathbb{E} \left\|\mathbf{x}_t-\mathbf{J} \mathbf{x}_t\right\|^2  + \frac{2 \alpha N L_g^2 L_f^2}{b } T  \nonumber
\end{align}

where the last inequality follows that 
\begin{align}
\mathbb{E}\left\|\mathbf{u}_{n,0} - \nabla \hat{F}(\mathbf{x}_{n,0})\right\|^2 &= \mathbb{E} \left\|\frac{1}{B} \sum_{i=1}^{B} \nabla \hat{F}_n(\mathbf{x}_{n, 0}; \xi^i_{n, 0},\mathcal{B}_{n, 0})) - \nabla \hat{F}_{n}(\mathbf{x}_{n, 0})\right\|^2  \\
& \stackrel{(a)}{=} \frac{1}{B^2} \sum_{i=1}^{B} \mathbb{E}\left\| \nabla \hat{F}_n(\mathbf{x}_{n, 0}; \xi^i_{n, 0},\mathcal{B}_{n, 0})) - \nabla \hat{F}_{n}(\mathbf{x}_{n, 0})\right\|^2  \leq \frac{L_g^2 L_f^2}{B},
\end{align}

where $(a)$ follows from $\mathbb{E}[\frac{1}{b} \sum_{i=1}^{b} \nabla \hat{F}_n(\mathbf{x}_{n, 0}; \xi^i_{n, 0},\mathcal{B}_{n, 0})) - \nabla \hat{F}_{n}(\mathbf{x}_{n, 0})] = 0$. 


To further bound $\sum_{t=0}^{T - 1}\left\|\mathbf{v}_t - \mathbf{J y}_t\right\|^2$, we obtain: if $0<\alpha \leq \frac{1 - \lambda^2}{2 \sqrt{24} \lambda^2 S_F}$, then $\forall T \geq 2$,
\begin{align}
& \sum_{t=0}^{T - 1} \mathbb{E}\left[\left\|\mathbf{v}_t - \mathbf{J} \mathbf{v}_t\right\|^2\right] \nonumber\\
\leq & \frac{4 \mathbb{E} \left\|\mathbf{v}_0 - \mathbf{J} \mathbf{v}_0 \right\|^2 }{1 - \lambda^2} + \frac{84 \lambda^2 N S_F^2 \eta^2}{\left(1 - \lambda^2\right)^2} \sum_{t = 0}^{T - 2} \mathbb{E}\left\|\bar{\mathbf{u}}_t\right\|^2 +\frac{252 \lambda^2 S_F^2}{\left(1 - \lambda^2\right)^2} \sum_{t = 0}^{T - 2} \mathbb{E} \left\|\mathbf{x}_t-\mathbf{J} \mathbf{x}_t\right\|^2 + \frac{28 \lambda^2 \alpha^2}{\left(1 - \lambda^2\right)^2} \sum_{t = 1}^{T - 1} \mathbb{E} \left\|\mathbf{u}_t - \nabla \hat{\mathbf{F}}\left(\mathbf{x}_t\right)\right\|^2 + \frac{12 \lambda^2 N \alpha^2 L_f^2 L_g^2 T}{1 - \lambda^2} \nonumber\\
\leq & \frac{4 \lambda^2 \mathbb{E}\left\|\nabla \hat{\mathbf{F}}_0\right\|^2}{1 - \lambda^2} + \frac{4 \lambda^2 N L_f^2 L_g^2}{\left(1 - \lambda^2\right) B} + \frac{84 \lambda^2 N S_F^2 \eta^2}{\left(1 - \lambda^2\right)^2} \sum_{t = 0}^{T - 1} \mathbb{E} \left\|\bar{\mathbf{u}}_t\right\|^2 + \frac{252 \lambda^2 S_F^2}{\left(1 - \lambda^2\right)^2} \sum_{t = 0}^{T - 1} \mathbb{E}\left\|\mathbf{x}_t - \mathbf{J} \mathbf{x}_t\right\|^2  + \frac{12 \lambda^2 N \alpha^2 L_f^2 L_g^2 T}{1 - \lambda^2}  + \frac{28 \lambda^2 \alpha^2}{\left(1 - \lambda^2\right)^2} \sum_{t = 0}^{T - 1} \sum_{n=1}^{N} \mathbb{E} \left\|\mathbf{u}_{n, t} - \nabla \hat{\mathbf{F}}_n\left(\mathbf{x}_{n, t}\right)\right\|^2 \nonumber
\end{align}

Furthermore, we use \eqref{eq:63} and if $0<\eta \leq \frac{1-\lambda^2}{2 \sqrt{42} \lambda^2 L}$ and $\alpha \in(0,1)$, then $\forall T \geq 2$,
\begin{align}
\sum_{t=0}^{T-1} \mathbb{E} \left\|\mathbf{v}_t - \mathbf{J} \mathbf{v}_t \right\|^2 \leq & \frac{252 \lambda^2 N S_F^2 \eta^2}{\left(1 - \lambda^2\right)^2} \sum_{t = 0}^{T - 1} \mathbb{E} \left\|\bar{\mathbf{u}}_t\right\|^2 + \frac{588 \lambda^2 S_F^2}{\left(1 - \lambda^2\right)^2} \sum_{t = 0}^{T-1} \mathbb{E}\left[\left\|\mathbf{x}_t-\mathbf{J} \mathbf{x}_t\right\|^2\right] \nonumber\\
&+ \frac{28 \lambda^2 N \alpha L_g^2 L_f^2}{\left(1 - \lambda^2\right)^2 B} + \frac{56 \lambda^2 N \alpha^3 L_f^2 L_F^2 T}{\left(1 - \lambda^2\right)^2} + \frac{12 \lambda^2 N \alpha^2 L_f^2 L_g^2 T}{1 - \lambda^2} + \frac{4 \lambda^2 \mathbb{E} \left\|\nabla \hat{\mathbf{F}}_0\right\|^2}{1 - \lambda^2} + \frac{4 \lambda^2 N L_f^2 L_g^2}{\left(1 - \lambda^2\right) B}
\end{align}

Finally, we use \Cref{lem:A3} in \eqref{eq:16} to obtain: $\forall T \geq 2$,
\begin{align}
\sum_{t = 0}^{T-1} \mathbb{E} \left\|\mathbf{x}_t - \mathbf{J} \mathbf{x}_t\right\|^2 \leq& \frac{4 \lambda^2 \eta^2}{(1 - \lambda^2)^2} \sum_{t = 0}^{T-2} \|\mathbf{v}_t - \mathbf{J} \mathbf{y}_t\|\nonumber\\
\leq & \frac{1008 \lambda^4 N S_F^2 \eta^4}{\left(1 - \lambda^2\right)^4} \sum_{t = 0}^{T - 1} \mathbb{E} \left\|\bar{\mathbf{u}}_t\right\|^2 + \frac{2352 \lambda^4 S_F^2 \eta^2}{\left(1 - \lambda^2\right)^4} \sum_{t = 0}^{T-1} \mathbb{E}\left\|\mathbf{x}_t-\mathbf{J} \mathbf{x}_t\right\|^2 \nonumber \\
&+ (\frac{7\alpha}{1 - \lambda^2} + 1)\frac{16 \lambda^4 N L_g^2 L_f^2 \eta^2}{\left(1 - \lambda^2\right)^3 B} + (\frac{14 \alpha}{1 - \lambda^2} + 3) \frac{16 \lambda^2 N \alpha^2 L_f^2 L_g^2 T \eta^2}{(1 - \lambda^2)^3} + \frac{16 \lambda^4 \eta^2 \left\|\nabla \hat{\mathbf{F}}_0 \right\|^2}{(1 - \lambda^2)^3}
\end{align}

which may be written equivalently as
\begin{align} \label{eq:68}
\left(1 - \frac{2352 \lambda^4 S_F^2 \eta^2}{\left(1 - \lambda^2\right)^4}\right) \sum_{t = 1}^T \mathbb{E}\left[\left\|\mathbf{x}_t-\mathbf{J} \mathbf{x}_t\right\|^2\right] \leq & \frac{1008 \lambda^4 N S_F^2 \eta^4}{\left(1 - \lambda^2\right)^4} \sum_{t = 0}^{T - 1} \mathbb{E} \left\|\bar{\mathbf{u}}_t\right\|^2 + (\frac{7\alpha}{1 - \lambda^2} + 1)\frac{16 \lambda^4 N L_g^2 L_f^2 \eta^2}{\left(1 - \lambda^2 \right)^3 B} \nonumber\\
&+ (\frac{14 \alpha}{1 - \lambda^2} + 3) \frac{16 \lambda^2 N \alpha^2 L_f^2 L_g^2 T \eta^2}{(1 - \lambda^2)^3} + \frac{16 \lambda^4 \eta^2 \mathbb{E} \left\|\nabla \hat{\mathbf{F}}\left(\mathbf{x}_0\right)\right\|^2}{(1 - \lambda^2)^3}
\end{align}

We observe in \eqref{eq:68} that $\frac{2352 \lambda^4 S_F^2 \alpha^2}{\left(1-\lambda^2\right)^4} \leq \frac{1}{2}$ if $0 < \eta \leq \frac{\left(1 - \lambda^2\right)^2}{90 \lambda^2 S_F}$. 
Based on \Cref{lem:C1}, we have
\begin{align}
\frac{1}{T}\sum_{t = 0}^{T - 1} \mathbb{E} \| \nabla \mathbf{F} (\bar{\mathbf{x}}_{t})\|^2 
\leq& \frac{2(\mathbf{F} (\bar{\mathbf{x}}_{0}) - \mathbf{F} (\bar{\mathbf{x}}_{T}))}{\eta T} - (1 - \eta S_F ) \frac{1}{T} \sum_{t = 0}^{T - 1}\|\bar{\mathbf{u}}_{t} \|^2  + \frac{3 S_F^2}{N T} \sum_{t = 0}^{T - 1} \sum_{n=1}^N\|\mathbf{x}_{n, t} - \bar{\mathbf{x}}_t\|^2 \nonumber\\
&+ \frac{3 L_g^2 S_f^2 \sigma_g^2}{m} + 3 \frac{1}{T} \sum_{t = 0}^{T - 1}\|\frac{1}{N} \sum_{n=1}^N \nabla \hat{F}_n(\mathbf{x}_{n, t}) - \bar{\mathbf{u}}_{t}\|^2 \nonumber\\
&\stackrel{(a)}{\leq }\frac{2(\mathbf{F}(\bar{\mathbf{x}}_{0}) - \mathbf{F} ( \bar{\mathbf{x}}_{T}))}{\eta T} - (1 - \eta S_F) \frac{1}{T} \sum_{t = 0}^{T - 1}\|\bar{\mathbf{u}}_{t} \|^2  + \frac{3 S_F^2}{NT} \sum_{t = 0}^{T - 1} \sum_{n=1}^N \|\mathbf{x}_{n, t} - \bar{\mathbf{x}}_t \|^2 \nonumber\\
&+ \frac{3 L_g^2 S_f^2 \sigma_g^2}{m} + 3 \frac{L_g^2 L_f^2 }{\alpha N B T} + \frac{36 S_F^2}{N^2 \alpha b T} \sum_{t = 0}^{T - 1} \mathbb{E}\|\mathbf{x}_{t} - \mathbf{Jx}_{t}\|^2 + \frac{18 \eta^2  S_F^2 }{N \alpha b T} \sum_{t = 0}^{T - 1}  \| \bar{\mathbf{u}}_t \|^2 + \frac{6 \alpha L_g^2 L_f^2}{Nb} \nonumber\\
\stackrel{(b)}{\leq}& \frac{2(\mathbf{F}(\bar{\mathbf{x}}_{0}) - \mathbf{F} ( \bar{\mathbf{x}}_{T}))}{\eta T} - (1 - \eta S_F - \frac{18 \eta^2  S_F^2 }{N \alpha b} ) \frac{1}{T}\sum_{t = 0}^{T - 1}\|\bar{\mathbf{u}}_{t} \|^2  + \frac{2}{N \eta^2 T} \sum_{t = 0}^{T - 1}  \mathbb{E}\|\mathbf{x}_{t} - \mathbf{Jx}_{t}\|^2 \nonumber\\
&+ \frac{3 L_g^2 S_f^2 \sigma_g^2}{m} + 3 \frac{L_g^2 L_f^2 }{\alpha N B T}  + \frac{6 \alpha L_g^2 L_f^2}{Nb} \nonumber\\
\stackrel{(c)}{\leq}& \frac{2(\mathbf{F}(\bar{\mathbf{x}}_{0}) - \mathbf{F} ( \bar{\mathbf{x}}_{T}))}{\eta T} - (1 - \eta S_F - \frac{18 \eta^2  S_F^2 }{N \alpha b}) \frac{1}{T} \sum_{t = 0}^{T - 1}\|\bar{\mathbf{u}}_{t} \|^2   + \frac{3 L_g^2 S_f^2 \sigma_g^2}{m} + 3 \frac{L_g^2 L_f^2 }{\alpha N B T}  + \frac{6 \alpha L_g^2 L_f^2}{Nb} \nonumber\\
&+ \frac{4032 \lambda^4 S_F^2 \eta^2}{\left(1 - \lambda^2\right)^4 T} \sum_{t = 0}^{T - 1} \mathbb{E} \left\|\bar{\mathbf{u}}_t\right\|^2 + (\frac{7\alpha}{1 - \lambda^2} + 1)\frac{64 \lambda^4 L_g^2 L_f^2}{\left(1 - \lambda^2 \right)^3 B T} + (\frac{14 \alpha}{1 - \lambda^2} + 3) \frac{64 \lambda^2 \alpha^2 L_f^2 L_g^2}{(1 - \lambda^2)^3} + \frac{64 \lambda^4 \mathbb{E} \left\|\nabla \hat{\mathbf{F}}_0\right\|^2}{(1 - \lambda^2)^3 N T} \nonumber\\
=& \frac{2(\mathbf{F}(\bar{\mathbf{x}}_{0}) - \mathbf{F} ( \bar{\mathbf{x}}_{T}))}{\eta T} - (1 - \eta S_F - \frac{18 \eta^2  S_F^2 }{N \alpha b} - \frac{4032 \lambda^4 S_F^2 \eta^2}{\left(1 - \lambda^2\right)^4} ) \frac{1}{T} \sum_{t = 0}^{T - 1}\|\bar{\mathbf{u}}_{t} \|^2   + \frac{3 L_g^2 S_f^2 \sigma_g^2}{m} + 3 \frac{L_g^2 L_f^2 }{\alpha N B T}  + \frac{6 \alpha L_g^2 L_f^2}{Nb} \nonumber\\
&+ (\frac{7\alpha}{1 - \lambda^2} + 1)\frac{64 \lambda^4 L_g^2 L_f^2}{\left(1 - \lambda^2 \right)^3 B T} + (\frac{14 \alpha}{1 - \lambda^2} + 3) \frac{64 \lambda^2 \alpha^2 L_f^2 L_g^2}{(1 - \lambda^2)^3} + \frac{64 \lambda^4 \mathbb{E} \left\|\nabla \hat{\mathbf{F}}_0\right\|^2}{(1 - \lambda^2)^3 N T} \nonumber\\
\stackrel{(d)}{\leq}&  \frac{2(\mathbf{F}(\bar{\mathbf{x}}_{0}) - \mathbf{F} ( \bar{\mathbf{x}}_{T}))}{\eta T}   + \frac{3 L_g^2 S_f^2 \sigma_g^2}{m} + 3 \frac{L_g^2 L_f^2 }{\alpha N B T}  + \frac{6 \alpha L_g^2 L_f^2}{Nb} + \frac{96 \lambda^2 L_g^2 L_f^2}{\left(1 - \lambda^2 \right)^3 B T} + \frac{256 \lambda^2 \alpha^2 L_f^2 L_g^2}{(1 - \lambda^2)^3} + \frac{64 \lambda^4 \mathbb{E} \left\|\nabla \hat{\mathbf{F}}_0\right\|^2}{(1 - \lambda^2)^3 N T}  \nonumber
\end{align}
where (a) holds due to \eqref{eq:61}; (b) uses the $\alpha = \frac{72 S^2_F \eta^2 }{Nb}$ and $S_F \eta \leq \frac{1}{4} \leq \frac{1}{2}$; (c) follows the \eqref{eq:68} and (d) holds due to the fact that $1 - \eta S_F - \frac{18 \eta^2  S_F^2 }{N \alpha b} - \frac{4032 \lambda^4 S_F^2 \eta^2}{\left(1 - \lambda^2\right)^4} \geq 0$ if $0 < \eta \leq min \{\frac{1}{4}, \frac{\left(1 - \lambda^2\right)^2}{90 \lambda^2} \} \frac{1}{S_F}$, and $\alpha = \frac{72 S^2_F \eta^2 }{Nb} \leq \frac{1 - \lambda^2}{14 \lambda^2}$ if $\eta \leq \frac{\sqrt{1 - \lambda^2}}{12 \sqrt{7} \lambda S_F}$
\end{proof}

Then, we choose $b = O(1), \eta = O(\frac{N^{2/3}}{T^{1/3}}), \alpha = \frac{N^{1/3}}{T^{2/3}}, B = \frac{T^{1/3}}{N^{2/3}} $
\begin{align}
&\frac{1}{T}\sum_{t = 0}^{T - 1} \mathbb{E} \| \nabla \mathbf{F} (\bar{\mathbf{x}}_{t})\|^2 \leq O(\frac{2(\mathbf{F}(\bar{\mathbf{x}}_{0}) - \mathbf{F} ( \bar{\mathbf{x}}_{T}))}{(N T)^{2/3}} + \frac{3 L_g^2 S_f^2 \sigma_g^2}{m} \nonumber\\
&+ O(\frac{3 L_g^2 L_f^2 }{(NT)^{2/3}})  + O(\frac{6 L_g^2 L_f^2}{(NT)^{2/3}}) + \frac{352 \lambda^2 L_f^2 L_g^2}{(1 - \lambda^2)^3} O(\frac{N^{2/3}}{T^{4/3}}) + \frac{64 \lambda^4 \mathbb{E} \left\|\nabla \hat{\mathbf{F}}_0\right\|^2}{(1 - \lambda^2)^3 N T}  \nonumber
\end{align}

\end{document}